\newtheorem{theorem}{Theorem}
\newtheorem{lemma}{Lemma}
\newtheorem{fact}{Fact}
\newcommand{\cX}{\mathcal{X}}
\newcommand{\cA}{\mathcal{A}}
\newcommand{\cR}{\mathcal{R}}
\newcommand{\cS}{\mathcal{S}}
\newcommand{\cE}{\mathcal{E}}
\newcommand{\cF}{\mathcal{F}}
\newcommand{\cC}{\mathcal{C}}
\newcommand{\argmax}{\operatornamewithlimits{argmax}}
\newcommand{\mv}{\textup{MV}}
\newcommand{\ex}{\mathbb{E}}
\newcommand{\mbr}[1]{\left[ #1 \right]}
\newcommand{\compilehidecomments}{true}
	\newcommand{\longbo}[1]{}
	\newcommand{\siwei}[1]{}
	\newcommand{\yihan}[1]{}
	\newcommand{\longbo}[1]{{\color{red}  [\text{Longbo:} #1]}}
	\newcommand{\siwei}[1]{{\color{blue} [\text{Siwei:} #1]}}
	\newcommand{\yihan}[1]{{\color{teal} [\text{Yihan:} #1]}}
\newcommand{\compilefullversion}{false}
	\newcommand{\OnlyInFull}[1]{}
	\newcommand{\OnlyInShort}[1]{#1}
	\newcommand{\OnlyInFull}[1]{#1}
	\newcommand{\OnlyInShort}[1]{}
\title{A One-Size-Fits-All Solution to Conservative Bandit Problems}
\author {
        Yihan Du,\textsuperscript{\rm 1}
        Siwei Wang,\textsuperscript{\rm 1}
        Longbo Huang\textsuperscript{\rm 1} \\
}
\begin{document}
\maketitle

\begin{abstract}
In this paper, we study a family of conservative bandit problems (CBPs) with \emph{sample-path} reward constraints, i.e., the learner's reward performance must be at least as well as a given baseline at any time. We propose a general one-size-fits-all solution to CBPs and present its applications to three encompassed problems, i.e., conservative multi-armed bandits (CMAB), conservative linear bandits (CLB) and conservative contextual combinatorial bandits (CCCB). Different from previous works which consider high probability constraints on the expected reward, our algorithms guarantee sample-path constraints on the actual received reward, and achieve better theoretical guarantees ($T$-independent additive regrets instead of $T$-dependent) and empirical performance. 
Furthermore, we extend the results and consider a novel  conservative \emph{mean-variance} bandit problem (MV-CBP), which measures the learning performance in both the expected reward and variability. We design a novel algorithm with $O(1/T)$ normalized additive regrets ($T$-independent in the cumulative form) and validate this result through empirical evaluation. 
\end{abstract}

\section{Introduction}

\begin{table*}[t]
	\renewcommand{\arraystretch}{1.6}
	\centering
	\begin{tabular}{|c|c|c|c|c|}
		\hline
		Problem&Algorithm&Regret bound&Type\\
		\hline
		CMAB&\textsf{GenCB-CMAB} (ours)& $O  \big(H \ln T +  \frac{  H }{\alpha}  [ \ln  ( \frac{  H}{\alpha}  )  ]^2   \big)$&E \\
		\hline
		CMAB&\textsf{ConUCB}~\cite{wu2016conservative}& $O  \big( H \ln  ( \frac{T}{\delta}  ) +  \sum_{i=1}^{K} \frac{1}{ \alpha \max \{\Delta_i, \Delta_0-\Delta_i\}  } \ln  ( \bm{T}/ \delta ) \big)$&H \\
		\hline
		CMAB&Lower Bound~\cite{wu2016conservative}& $O  \big( \max  \{ \frac{K}{\alpha }, \sqrt{KT}  \}  \big)$&E \\
		\hline
		\hline
		CLB&\textsf{GenCB-CLB} (ours)& $O  \big( d \ln  ( T  ) \sqrt{T} + \frac{  d^2   }{\alpha}  [ \ln  ( \frac{  d  }{\alpha}  )  ]^2   \big)$&E \\
		\hline
		CLB&\textsf{CLUCB}~\cite{kazerouni2017conservative}& $O  \big( d \ln  ( \frac{T}{\delta}  ) \sqrt{T} + \frac{  d^2   }{\alpha}  [ \ln  ( \frac{  d  }{ \alpha \bm{\delta}  }   )  ]^2   \big)$&H \\
		\hline
		CLB&\textsf{CLUCB2}~\cite{ImprovedConLB_AAAI20}& $O  \big( d \ln  ( \frac{T}{\delta}  ) \sqrt{T} + \frac{  d^2   }{\alpha^2 }  [ \ln  ( \frac{  d }{  \alpha \bm{\delta}  }   )  ]^2   \big)$&H \\
		\hline
		\hline
		CCCB&\textsf{GenCB-CCCB} (ours)& $O  \big( d\ln(KT)\sqrt{T}  +  \frac{  (K+d)^2 }{ \alpha }   [ \ln  ( \frac{ K+d  } {\alpha}  )   ]^2   \big) $&E \\
		\hline
		CCCB&\textsf{C3UCB}~\cite{zhang2019contextual}& $O \big( d\ln(KT)\sqrt{T}  +  \frac{ d }{ \alpha } \sqrt{\frac{d}{K}\ln(\frac{K}{ \delta}\bm{T} )} \big) $&H \\
		\hline
	\end{tabular}
	\caption{Comparison of regret bounds for CBPs. ``Type'' refers to the type of regret bounds. ``E'' and ``H'' denote the expected and high probability bounds, respectively. Here  $H=\sum_{i=1}^{K}\Delta_i^{-1}$. $d$ is the dimension in CLB and CCCB. For high probability bounds,  the convention in the bandit literature is to choose $\delta=1/T$.  Note that our formulation focuses on a sample-path reward constraint, while the other results consider the constraints on the expected reward.}  \label{table:comparison_regret}
\end{table*}

The multi-armed bandit (MAB) problem~\cite{thompson1933,UCB_auer2002}
is a classic online learning model that characterizes the exploration-exploitation trade-off in sequential decision making. While existing bandit algorithms achieve satisfactory regret bounds over the whole learning processes, they can perform wildly and lose much in the initial exploratory phase. This limitation has hindered their applications in  real-world scenarios such as health sciences, marketing and finance, where it is important to guarantee safe and smooth algorithm behavior in initialization.  Hence, studying bandit problems with safe (conservative) exploration contributes to solving this issue.

In this paper, we study the conservative bandit problems (CBPs) with \emph{sample-path} reward constraints. Specifically, a learner is given a set of  regular arms and a  default arm.
At each timestep, the learner chooses a regular arm or the default arm to play and receives a reward according to the played arm. 
The learning's objective is to minimize the expected cumulative regret (equivalently, maximize the expected cumulative reward), while ensuring that the received cumulative reward must stay above a fixed percentage of what one can obtain by always playing the default arm.



CBPs have extensive real-world applications including recommendation systems, company operation and finance. For instance, in finance, investors are offered various financial products including the fixed-income security such as bank deposit (default arm), and the fluctuating equity securities such as stocks (regular arms). While the fixed-income security is a safe and reasonable option, investors want to find better choices to earn higher returns. Meanwhile, compared to the returns they can obtain by simply depositing the money, investors do not want to lose too much when exploring other investment choices. CBPs provide an effective model for such exploration-exploitation trade-off with the safe exploration guarantees.

We propose a general one-size-fits-all solution $\textsf{GenCB}$ for CBPs, and present its applications to three important CBP problems, i.e., conservative multi-armed bandits (CMAB), conservative linear bandits (CLB) and conservative contextual combinatorial bandits (CCCB). We provide theoretical analysis and empirical evaluations for these algorithms, and show that our algorithms outperform existing ones both theoretically and empirically. Table~\ref{table:comparison_regret} presents the comparison of regret bounds between our algorithms and existing 
ones. 
In the table, each regret term contains two components, the first component incurred  by regular arms and the second term due to playing the default arm. One can see that our algorithms possess better regret guarantees. 
%
Moreover, unlike existing algorithms that only provide high probability bounds with $T$-dependent conservative regrets, we not only obtain expected bounds but also have $T$-independent conservative regrets. 
%
\yihan{This paragraph needs a pass.}

Our work distinguishes itself from previous conservative bandit works, e.g., ~\cite{wu2016conservative,kazerouni2017conservative,ImprovedConLB_AAAI20,zhang2019contextual} in two aspects: (i) Previous works consider high probability guarantees on the expected reward. Such models cannot directly handle many risk-adverse tasks, e.g., a start-up does not wish to tolerate any failure probability to reach the basic earning under the debt, or an asset management company must perform better than the promised return. While one can choose a very small 
$\delta$ in previous  algorithms to provide high-probability guarantees, the $\ln (1/ \delta)$-dependent regrets will boost accordingly. Instead, we focus on a certainty  (sample-path) guarantee  on the actual empirical reward. Doing so ensures safe exploration (our regret bounds do not contain $\delta$) and better suits such tasks. 
(ii) Our problem formulation, solution and analysis offer a general framework for studying a family of CBPs, including  CMAB~\cite{wu2016conservative}, CLB~\cite{kazerouni2017conservative,ImprovedConLB_AAAI20} and CCCB~\cite{zhang2019contextual}. Moreover, our algorithms achieve better theoretical and empirical performance than previous schemes.


We also extend our results to the mean-variance setting \cite{Markowitz_1952,sani2012risk}, called \emph{conservative mean-variance bandit problem} (MV-CBP), which focuses on the balance between the expected reward and variability with safe exploration. 
%
Different from the typical CBPs which only consider the expected reward into learning performance, MV-CBP takes into account both the mean and variance of the arms, and 
is more suitable for practical tasks that are sensitive to reward fluctuations, e.g., clinical trials and finance. For example, many risk-adverse investors prefer stable assets (e.g., bonds) with satisfactory returns than volatile assets (e.g., derivatives) with high returns, and they do not want to suffer wild fluctuations when exploring different financial products. 

Note that the mean-variance regret in MV-CBP (formally defined in Eq.~\eqref{eq:mv_pseudo_regret} in next section) consists not only the gap of \emph{mean-variance} (a combination of both measures) between the played arms and the optimal arm, but also an additional variance for playing arms with different means, called \emph{exploration risk}, which requires alternative techniques beyond those in typical CBPs. To tackle this issue, we carefully adapt our solution and analysis for the CBPs and make nontrivial extensions. Our results offer new insight into algorithm design for mean-variance bandit problems.

Our contributions are summarized as follows. 
\begin{itemize}
	\item We study a family of CBPs with sample-path reward constraints, which encompasses previously studied CMAB~\cite{wu2016conservative}, CLB~\cite{kazerouni2017conservative,ImprovedConLB_AAAI20} and CCCB~\cite{kazerouni2017conservative}. We propose a general one-size-fits-all solution \textsf{GenCB} for CBPs, which can translate a standard bandit algorithm into a conservative bandit algorithm and achieve better ($T$-independent conservative regret rather than $T$-dependent) theoretical regret bounds than previous works in the three specific problems.
	\yihan{Clarified this item in our contributions.}
	
	\item We extend the conservative bandit formulation to a novel conservative mean-variance bandit setting, which characterizes the trade-off between the expected reward and variability. We propose an algorithm, \textsf{MV-CUCB}, and prove that it achieves an $O(1/T)$ normalized additive regret for the extended problem. 
	
	\item We conduct extensive experiments for the considered problems. The results match our theoretical bounds and demonstrate that our algorithms achieve the performance superiority compared to existing algorithms. 
\end{itemize}

\subsection{Related Work}
\textbf{Conservative Bandit Literature.}
Recently, there are several works~\cite{wu2016conservative,kazerouni2017conservative,zhang2019contextual,ImprovedConLB_AAAI20} studying bandit problems with conservative exploration constraints. Under the constraints on the expected rewards, \cite{wu2016conservative} propose an algorithm \text{ConUCB} for CMAB. \cite{kazerouni2017conservative} design an algorithm \text{CLUCB} for CLB and \cite{ImprovedConLB_AAAI20} further propose an improved algorithm \text{CLUCB2}. \cite{zhang2019contextual} present an algorithm \text{C3UCB} for CCCB. Under the stage-wise constraints, \cite{Safe_Linear_Stochastic_Bandits_Khezeli_AAAI20} restrict the expected reward at any timestep to stay above a given baseline. \cite{LB_SafetyConstraints_NIPS2019} confine the played arm at any timestep to stay in a given safe set. Under the interleaving constraint, \cite{Interleaving_AISTATS19} require the chosen action at any timestep to perform better than the default action when interleaving in the combinatorial semi-bandit setting. \cite{bubeck2013bounded_regret} study the standard $K$-armed bandit problem with knowledge of the highest expected reward and the smallest gap, \cite{Thresholding_Bandit_Problem_locatelli16} consider the  thresholding pure exploration problem, and the settings and methods in both works are different from ours.   

\textbf{Mean-variance Bandit Literature.}
\cite{sani2012risk} open the mean-variance bandit literature which considers both the expected reward and variability into performance measures, and a series of follow-ups \cite{risk-averse_ALT2013,risk-averse_CDC2019,risk_convex2019} have emerged recently. To our best knowledge, this paper is the first to study the mean-variance bandit problem with conservative exploration.

\section{Problem Formulation}
In this section, we first review previous standard (non-conservative) bandit problems (SBPs) and then give the formulation of the  Conservative Bandit Problems (CBPs). 

\textbf{Standard Bandit Problems (SBPs).} In a standard bandit problem,  a learner is given a set of arms $\cX$, where each arm $x\in\cX$ has an unknown reward distribution in $[0,1]$ with mean of $\mu_{x}$. Each arm $x$ at timestep $t$ has a random reward $r_{t,x}=\mu_{x}+\eta_{t,x}$, where $\eta_{t,x}$ is an independent random noise with respect to $t$. At each timestep $t$, the learner plays an arm $x_t$ and only observes the reward $r_{t,x_t}$ of the chosen arm. 
Let $x_*=\argmax_{x \in \cX} \mu_{x}$ denote the optimal arm.  
The learning performance over a time horizon $T$ is measured by \emph{expected cumulative regret} 
\begin{align}
\mathbb{E}[\cR_T]= \mu_{x_*} T - \mathbb{E}\left[\sum_{t=1}^{T}\mu_{x_t}\right] = \sum_{x \neq x_*} \mathbb{E}[N_{x}(T)] \Delta_x, \label{eq:regret_mean}
\end{align}
where $\Delta_x=\mu_{x_*} - \mu_{x}$ and $N_{x}(T)$ is the number of times arm $x$ was played over time $T$. The regret characterizes the loss due to not always playing the optimal arm. The goal of standard bandit algorithms is to minimize Eq.~\eqref{eq:regret_mean}.

\textbf{Conservative Bandit Problems (CBPs).}
The CBPs provide an alternative default arm $x_0$ to play. In this case, since playing $x_0$ is a default (baseline) policy that the learner is familiar with, for ease of analysis we assume that $x_0$ has a known constant reward $0<\mu_0<\mu_{x_*}$ as previous works~\cite{wu2016conservative,kazerouni2017conservative,zhang2019contextual} do.\footnote{This assumption can be relaxed to that $x_0$ has a random reward within a known interval $[r_0^\ell, r_0^h]$ ($r_0^\ell>0$) by sightly changing the right-hand-side of the $\mathtt{if}$ statements in our algorithms, and our analysis procedure still works. While previous works can remove this assumption by estimating $\mu_0$, this is due to that their constraints are imposed on the expected reward.}



%
Then, during the learning process, the learner is required to ensure that the cumulative reward under the chosen policy is lower bounded by a fraction of the reward from always pulling the default arm. Specifically, given a parameter  $\alpha \in (0,1)$, for any timestep $t$,  the learner's cumulative empirical reward should be least $1-\alpha$ fraction of the reward of always playing $x_0$, i.e., 
\begin{align}
\sum_{s=1}^{t} r_{s,x_s} \geq (1-\alpha) \mu_0 t , \quad \forall t \in \{1, \dots, T\} . \label{eq:perf_constraint}
\end{align}
Here  $\alpha$ controls the strictness of the constraint, i.e., how conservative we want the leaner to behave, and can be viewed as the weight we place on safety in exploration.  
The goal of conservative bandit algorithms is to minimize the expected cumulative regret (Eq.~\eqref{eq:regret_mean}) while satisfying the reward constraint (Eq.~\eqref{eq:perf_constraint}). 

We note that constraint (\ref{eq:perf_constraint}) is a \emph{sample-path} reward constraint, which is different from the high-probability constraints on the expected reward  in prior works~\cite{wu2016conservative,kazerouni2017conservative,ImprovedConLB_AAAI20,zhang2019contextual}. This setting is particularly useful when the practical tasks cannot tolerate higher losses than the baseline with certainty, e.g., health care and investment. On the other hand, it also imposes new challenges in algorithm design and regret analysis. \yihan{Changed the detailed expression to this one.}

Our formulation is a general framework which encompasses various bandit problems from the prospective of conservative exploration. For example, in CMAB which studies a conservative version of the classic $K$-armed bandit problem~\cite{thompson1933,UCB_auer2002,agrawal2012analysis}, $\cX=[K]$ and $\mu_x$ is an arbitrary value.\footnote{$[K] \overset{\textup{def}}{=} \{1, \dots, K\}$.} In CLB which considers the linear bandit problem~\cite{Dani2008_linear_bandit,linear_bandit_NIPS2011} with  conservative exploration, $\cX$ is a compact subset of $\mathbb{R}^d$ and each arm $x \in \mathbb{R}^d$ has an expected reward $\mu_x=x^\top \theta^*$, where $\theta^* \in \mathbb{R}^d$ is an unknown parameter. In CCCB which investigates the contextual combinatorial bandit problem~\cite{Contextual_Combinatorial_Bandit} with the safe exploration requirement, there is a set of base arms $[K]$ and $\cX$ is a collection of subsets of base arms, which represents certain combinatorial structure (e.g., matchings and paths). For each $x \in \cX$, $\mu_x$ is associated with the expected rewards of its containing base arms. We will analyze the CBPs under specific bandit settings in the next section.

\section{A General Solution to Conservative Bandits}

In this section, we first present a  general solution for CBPs, and its regret analysis. Then, we present its applications to three specific problems, i.e., CMAB, CLB and CCCB, and show that in all three cases, our algorithm achieves tighter bounds than  existing algorithms. 

Algorithm \ref{alg:general_solution}  illustrates the proposed solution to CBPs, called \textsf{GenCB}, which offers a general scheme for translating a standard non-conservative bandit algorithm $\cA_{S}$ into a  conservative bandit algorithm. 
In the algorithm,  $m$ denotes the time horizon of $\cA_{S}$, and the number of times we play the regular arms, $r_S(t)$ denotes the cumulative reward from sampling regular arms, and $N_0(t)$ denotes the number of times $x_0$ is played up to time $t$. 

The main idea of \textsf{GenCB} is to play regular arms as much as possible while ensuring the sample-path reward constraint in the worst case, since playing the default arm cannot provide any information for identifying the optimal arm.
At each  time, \textsf{GenCB} checks if playing a regular arm can satisfy the sample-path reward constraint in the worst case (this pull feedbacks zero reward). 
If it can, we play a regular arm $x_t$ according to $\cA_{S}$, observe reward $r_{t,x_t}$ and update the statistical information. Otherwise, we choose the default arm. 

Different from previous conservative algorithms \cite{wu2016conservative,kazerouni2017conservative,zhang2019contextual}, \textsf{GenCB} guarantees the  constraint with certainty rather than with high probability, and \textsf{GenCB} uses the received cumulative reward rather than the lower confidence bound to check the constraint. Doing so makes our algorithm less conservative and boosts its empirical performance significantly (see Section~\ref{sec:experiments} for empirical comparisons). 

Next, we present the regret analysis for  \textsf{GenCB}.  Note that, the regret for CBPs can be decomposed into (i) the regret incurred by regular arms, and (ii) the regret due to playing the default arm, i.e., conservative regret. Since the analysis of the former is similar to that in SBPs, as in the conservative bandit literature~\cite{wu2016conservative,kazerouni2017conservative,ImprovedConLB_AAAI20,zhang2019contextual}, we mainly focus the conservative regret. 
We remark that our analysis is different from those in prior works, and can be applied to several specific CBPs including CMAB, CLB and CCCB. 
We give the regret bound of \textsf{GenCB} as follows.

\begin{algorithm}[t]
	\caption{General Solution to Conservative Bandits (\textsf{GenCB})} \label{alg:general_solution}
	\KwIn{Standard bandit problem and algorithm $\cA_{S}$, regular arms $\cX$, default arm $x_0$ with reward $\mu_0$, parameter $\alpha$.}
	$\forall t \geq 0, N_0(t) \leftarrow 0, r_S(t) \leftarrow 0$. $m \leftarrow 0$\;
	\For{$t=1, 2, \dots$}
	{
		\If{$r_S(t-1)+N_0(t-1)\mu_0 \geq (1-\alpha) \mu_0 t$} 
		{
			$m \leftarrow m+1$\;
			Play an arm $x_t$ according to $\cA_{S}$, observe $r_{t,x_t}$ and update the statistical information\;
			$N_0(t) \leftarrow N_0(t-1)$\;
			$r_S(t) \leftarrow r_S(t-1)+r_{t,x_t}$\;
			
		}
		\Else
		{
			Play $x_0$ and receive reward $\mu_0$\;
			$N_0(t) \leftarrow N_0(t-1)+1$\;
			$r_S(t) \leftarrow r_S(t-1)$\;
			
		}
	}
\end{algorithm}

\begin{theorem} \label{thm:general}
	Given a standard bandit problem and a corresponding algorithm $\cA_{S}$ with sublinear regret $\mathbb{E}[\cR_T(\cA_{S})] \leq B(T)$, \textsf{GenCB} (Algorithm~\ref{alg:general_solution}) guarantees the sample-path reward constraint Eq.~\eqref{eq:perf_constraint} and achieves a regret bound 
	$$ \mathbb{E}[\cR_T(\textsf{GenCB})] \leq B(T)+C \Delta_0,$$ 
	where $C$ is a problem-specific constant independent of $T$ and $\Delta_0=\mu_{x_*}-\mu_{x_0}$.
\end{theorem}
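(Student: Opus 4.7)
The proof plan naturally splits into two tasks: (i) verify that \gencb\ satisfies the sample-path constraint~\eqref{eq:perf_constraint} deterministically, and (ii) bound its expected regret by $B(T)+C\Delta_0$. Task (i) I would dispatch by a short induction on $t$. Whenever the if-condition $r_S(t-1)+N_0(t-1)\mu_0 \ge (1-\alpha)\mu_0 t$ holds and a regular arm is played, even the adversarial feedback $r_{t,x_t}=0$ leaves $\sum_{s\le t} r_{s,x_s} \ge r_S(t-1)+N_0(t-1)\mu_0 \ge (1-\alpha)\mu_0 t$; and when the condition fails and $x_0$ is played, $\mu_0 \ge (1-\alpha)\mu_0$ ensures the play adds at least as much to the empirical reward as to the required budget, so the constraint is again preserved.

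For task (ii) I would decompose $\mathbb{E}[\cR_T(\gencb)] = \mathbb{E}[\cR^{\text{reg}}_T] + \Delta_0\,\mathbb{E}[N_0(T)]$, where $\cR^{\text{reg}}_T$ collects the regret on the $m := T - N_0(T)$ regular-arm timesteps. Because the regular-arm subsequence is, step by step, a faithful execution of $\cA_S$ at horizon $m$ (the statistical state of $\cA_S$ is updated only on those steps and sees nothing of the default ones), monotonicity of the standard-bandit bound gives $\mathbb{E}[\cR^{\text{reg}}_T] \le \mathbb{E}[B(m)] \le B(T)$. The whole theorem thus reduces to the key claim $\mathbb{E}[N_0(T)] \le C$ for a $T$-independent constant $C$.

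To prove that claim I would zoom in on the \emph{last} time $t^\ast$ at which $x_0$ is pulled, and set $m^\ast = t^\ast - 1 - N_0(t^\ast-1)$. Unrolling the failed if-condition at $t^\ast$ via $t^\ast = m^\ast + N_0(t^\ast-1) + 1$ yields, after one line of algebra,
\[
m^\ast\bigl(\mu_{x_*}-(1-\alpha)\mu_0\bigr) + \alpha\mu_0\, N_0(t^\ast-1) \;<\; \bigl(m^\ast\mu_{x_*} - r_S(t^\ast-1)\bigr) + (1-\alpha)\mu_0.
\]
The prefactor $\mu_{x_*}-(1-\alpha)\mu_0 \ge \alpha\mu_0 > 0$ (using $\mu_{x_*}>\mu_0$), so the left-hand side is linear in both $m^\ast$ and $N_0(t^\ast-1)$. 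The parenthesised quantity on the right is, up to a zero-mean martingale noise term, exactly the regret of $\cA_S$ on its first $m^\ast$ plays, and hence bounded in expectation by $B(m^\ast)$. Since $B$ is sublinear, the inequality forces $m^\ast$ and $N_0(t^\ast-1)$ both to be at most explicit constants in $\alpha,\mu_0,\Delta_0$ and the form of $B$; monotonicity of $N_0$ then upgrades this to $N_0(T) = N_0(t^\ast-1)+1 \le C$, which closes the argument.

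The main obstacle I anticipate is handling the noise term $m^\ast\mu_{x_*}-r_S(t^\ast-1)$ in \emph{expectation} rather than on a favourable sample path, since $m^\ast$ is a random stopping index and one cannot simply replace $r_S(t^\ast-1)$ by its mean. I plan to apply a maximal concentration inequality (Azuma--Hoeffding on the martingale $\sum_{s\le m}(\mu_{x_s}-r_{s,x_s})$ uniformly over $m$), split $\mathbb{E}[N_0(T)]$ across the resulting good and bad events, and pick the failure probability small enough that the bad-event contribution stays $O(1)$. This step is precisely what distinguishes our $T$-independent conservative regret from the $T$-dependent high-probability analyses of~\cite{wu2016conservative,kazerouni2017conservative,zhang2019contextual}, and downstream, instantiating $B(\cdot)$ with the optimal standard-bandit rates for CMAB, CLB and CCCB will then recover the problem-specific constants $C$ reported in Table~\ref{table:comparison_regret}.
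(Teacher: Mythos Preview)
Your proposal is correct and follows essentially the same route as the paper: verify the constraint by induction, decompose the regret into the $\cA_S$-part plus $\Delta_0\,\mathbb{E}[N_0(T)]$, unroll the failed if-condition at the last default-arm time $\tau$ to get $\alpha\mu_0 N_0(\tau-1)$ bounded by a sublinear-in-$m$ quantity involving $B(m)$ and a martingale noise term, and control the noise via Azuma--Hoeffding with $\delta\sim 1/\tau$ (uniformly over prefixes) before solving the resulting implicit inequality for $\mathbb{E}[N_0(\tau-1)]$. The only cosmetic difference is that the paper keeps $\ln\tau$ inside the concentration bound and then untangles the implicit $\ln(\mathbb{E}[N_0(\tau-1)])$ dependence via a separate ``$z\le c_1\ln(c_2 z)\Rightarrow z\le 2c_1\ln(c_1c_2)$'' lemma, whereas you phrase the same step as a good/bad-event split; both yield the same $T$-independent constant $C$.
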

\begin{proof}\longbo{skip the proof for now}
	First, it can be seen from the algorithm that the   sample-path reward constraint Eq.~\eqref{eq:perf_constraint} can be guaranteed. Next, we prove the regret bound of \textsf{GenCB}. 
	\yihan{Removed the proof of the satisfaction of the constraint.}
	We use $\cS_t$ to denote the set of timesteps up to time $t$ during which we play regular arms and use $m_t$ to denote its size. 
	Let $\tau$ denote the last timestep  we play $x_0$, 
	i.e., $\tau$ is the last timestep such that $r_S(\tau-1)+N_0(\tau-1)\mu_0 < (1-\alpha) \mu_0 \tau$ holds. Rearranging the terms, and subtracting $(1-\alpha)\mu_0N_0(\tau-1)$ from  both sides (note that $\tau=N_0(\tau-1)+m_{\tau-1}+1$), we have
	\begin{align}
	\nonumber\!\!	\alpha \mu_0 N_0(\tau-1) < & (1-\alpha) \mu_0 (m_{\tau-1}+1) - r_S(\tau-1)
		\\ 
	\nonumber	= & (1-\alpha) \mu_0 (m_{\tau-1}+1) \\ & + \sum_{t \in \cS_{\tau-1}} (\mu_{x_t}  - r_{t,x_t}) - \sum_{t \in \cS_{\tau-1}} \mu_{x_t} . 
	\label{eq:proof_1}
	\end{align}
	$\sum_{t \in \cS_{\tau-1}} (\mu_{x_t}  - r_{t,x_t})$ is the deviation between the sum of $m_{\tau-1}$ sample results and their means. Using the Azuma-Hoeffding inequality, $\sum_{t \in \cS_{\tau-1}} (\mu_{x_t}  - r_{t,x_t})$ can be upper bounded by $F \sqrt{m_{\tau-1} \ln(m_{\tau-1})}$ with high confidence, for fixed $m_{\tau-1}$ and some constant $F$ that varies in different settings. 
	Then, by probabilistic calculations, we can have $\ex[\sum_{t \in \cS_{\tau-1}} (\mu_{x_t}  - r_{t,x_t})] \leq \ex[F\sqrt{m_{\tau-1} \ln(m_{\tau-1})}] + 1$. Taking expectation on both sides of Eq. \eqref{eq:proof_1}, setting $m=\ex[m_{\tau-1}+1]$ and replacing $\ex[\sum_{t \in \cS_{\tau-1}} \mu_{x_t} ]$ with $\mu_{x_*}\ex[m_{\tau-1}]-\ex[\cR_{m_{\tau-1}}(\cA_{S})]$, we have
	\begin{align*}
	 \alpha \mu_0 \ex[N_0(\tau-1)] < & -(\Delta_0+\alpha \mu_0) m  + \ex[B(m_{\tau-1})] \\ & +  \ex[F \sqrt{m_{\tau-1} \ln(m_{\tau-1})}]  + 1 
	\\
	 \overset{\textup{(a)}}{<} &   -(\Delta_0+\alpha \mu_0) m + B(m) + 2 \\ & \quad + F \sqrt{m \ln(\ex[N_0(\tau-1)]+m) } ,
	\end{align*}
	where (a) comes from Jensen's inequality. 
	Note that since $B(m)$ and $F \sqrt{m \ln(\ex[N_0(\tau-1)]+m) }$ are sublinear with respect to $m$, for any $m\ge 2$, the right-hand-side can be  upper bounded by $G [\ln(\sqrt{\ex[N_0(\tau-1)]})]^2 $ where $G$ is a constant factor that only depends on problem parameters. 
	\yihan{Revised this sentence to explain $G$.}
	Then, we obtain $\ex[N_0(\tau-1)] \leq  \frac{G}{\alpha \mu_0} [\ln (\frac{G}{\alpha \mu_0})]^2 $. Thus, $\ex[N_0(T)]=\ex[N_0(\tau)]=\ex[N_0(\tau-1)]+1 \leq C$, where $C\triangleq\frac{G}{\alpha \mu_0} [\ln (\frac{G}{\alpha \mu_0})]^2+1$ is independent of $T$. 
	
	Combining the regrets for $\cA_{S}$ and $x_0$, we obtain that $\ex[\cR_T(\textsf{GenCB})] 
		\leq  B(T) + C \Delta_0$.
\end{proof}

\textbf{Remark 1.} Theorem \ref{thm:general} shows that \textsf{GenCB} provides a general algorithmic and analytical framework for  translating a standard bandit problem into a conservative bandit algorithm, and only generate an additional $T$-independent regret due to the reward constraint. 
To the best of our knowledge, this is the first general analysis procedure which works for a family of CBPs with sample-path reward constraints, and it provides an expected regret bound (rather than high probability bounds in \cite{wu2016conservative,kazerouni2017conservative,ImprovedConLB_AAAI20,zhang2019contextual}) with $T$-independent conservative regret.
\yihan{Revised this paragraph. Should we merge this paragraph to the Remark1?}

Below, we apply \textsf{GenCB} to three widely studied CBPs, i.e., CMAB, CLB and CCCB. 
Here we only present the main theorems, and defer the algorithm pseudo-codes and proofs to the supplementary material~\cite{supp_arxiv_version}.

\subsection{Application to Conservative Multi-Armed Bandits (CMAB)}
The conservative multi-armed bandit (CMAB) problem is a variation of the classic $K$-armed bandit model with conservative exploration~\cite{wu2016conservative}, which has extensive applications including clinical trials, online advertising and wireless network. In CMAB,  $\cX=[K]$ and $\mu_{i}$ ($1 \leq i \leq K$) can be an arbitrary value. Without lose of generality, we assume $\mu_1\geq\mu_2\geq\cdot\cdot\cdot\geq\mu_K$ and denote $\mu_{*}\triangleq\mu_1$. 

We apply the \textsf{GenCB} algorithm with the \textsf{UCB} algorithm~\cite{UCB_auer2002} to this setting, by replacing Line 5 in Algorithm~\ref{alg:general_solution} with 
$x_t \leftarrow \argmax_{i \in [K]} \left(\hat{\mu}_i + \sqrt{ 2 \ln m/ N_i(t-1)  } \right)$,
where $\hat{\mu}_i$ is the reward empirical mean for arm $i$, and name this version of the algorithm \textsf{GenCB-CMAB}.


The main idea of \textsf{GenCB-CMAB} is to 
play the arm with the maximum upper confidence bound whenever the reward constraint is satisfied (otherwise we play the default arm).
The regret bound for \textsf{GenCB-CMAB} is summarized below. 

\begin{theorem} \label{thm:gen_con_ucb}
	For the conservative multi-armed bandit problem, \textsf{GenCB-CMAB} guarantees the sample-path reward constraint Eq.~\eqref{eq:perf_constraint} and achieves the regret bound 
	$$ \! O \!\! \left( \! H \! \ln T \!+\!  \frac{  H \Delta_0}{\alpha \mu_0 (\Delta_0+ \alpha \mu_0)} \!\! \left[ \ln \! \left( \! \frac{  H}{\alpha \mu_0 (\Delta_0+ \alpha \mu_0)} \right) \right]^2  \right) \!\!,$$
	where $H=\sum_{i>1} \Delta_i^{-1}$.
\end{theorem}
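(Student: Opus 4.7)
The plan is to derive Theorem~\ref{thm:gen_con_ucb} as a direct specialization of Theorem~\ref{thm:general}, so the bulk of the work is to (i) verify that \textsf{UCB} is a legitimate instantiation of $\cA_S$ and identify its standard regret bound $B(T)$, and (ii) pin down the problem-specific constant $C$ (equivalently, the ``$G$'' appearing in the generic proof) in terms of $H$, $\alpha$, $\mu_0$ and $\Delta_0$.

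First I would invoke the classical analysis of \textsf{UCB}~\cite{UCB_auer2002} for the $K$-armed bandit with rewards in $[0,1]$, which yields the expected regret bound
\[
B(m) \;=\; O\!\left(\sum_{i>1}\frac{\ln m}{\Delta_i}\right) \;=\; O(H\ln m).
\]
Since $\cA_S$ only sees the $m$ rounds in which a regular arm is played (the default-arm rounds contribute no feedback and do not advance $\cA_S$'s internal clock), this bound is the right $B(m)$ to plug into Theorem~\ref{thm:general}. The constant $F$ in the generic proof comes from the Azuma--Hoeffding deviation of the regular-arm sample-path rewards from their means; bounded rewards in $[0,1]$ give $F=O(1)$.

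Next I would revisit the key inequality from the proof of Theorem~\ref{thm:general}, namely
\[
\alpha\mu_0\,\ex[N_0(\tau-1)] \;<\; -(\Delta_0+\alpha\mu_0)\,m + B(m) + 2 + F\sqrt{m\ln(\ex[N_0(\tau-1)]+m)},
\]
and substitute $B(m)=O(H\ln m)$. Treating the right-hand side as a function of $m$ and optimizing, the logarithmic $B(m)$ term dominates the $\sqrt{m\ln m}$ deviation term for large $H$, so balancing $\partial_m[B(m)]\sim H/m$ against the linear slope $\Delta_0+\alpha\mu_0$ yields the critical scale $m^\star = \Theta\!\left(H/(\Delta_0+\alpha\mu_0)\right)$. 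Substituting $m^\star$ back into the inequality turns the $H\ln m$ term into $H\ln\!\left(\tfrac{H}{\Delta_0+\alpha\mu_0}\right)$, and dividing by $\alpha\mu_0$ gives
\[
\ex[N_0(T)] \;=\; O\!\left(\frac{H}{\alpha\mu_0(\Delta_0+\alpha\mu_0)}\Bigl[\ln\!\Bigl(\tfrac{H}{\alpha\mu_0(\Delta_0+\alpha\mu_0)}\Bigr)\Bigr]^{2}\right).
\]
Multiplying by the per-pull default-arm regret $\Delta_0$ and adding $B(T)=O(H\ln T)$ yields the stated bound. Satisfaction of constraint~\eqref{eq:perf_constraint} is immediate from the $\mathtt{if}$ test in Algorithm~\ref{alg:general_solution}, inherited from Theorem~\ref{thm:general}.

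The main obstacle is step (ii): extracting the sharper $\frac{1}{\alpha\mu_0(\Delta_0+\alpha\mu_0)}$ factor (rather than a cruder $\frac{1}{\alpha\mu_0}$) requires choosing $m^\star$ carefully so that the $-(\Delta_0+\alpha\mu_0)m$ term and the $B(m)$ term are balanced, and then verifying that the $\sqrt{m\ln(\cdot)}$ deviation term is indeed dominated at this scale (since $H\ln m \gg \sqrt{m\ln m}$ in the relevant regime). A minor bookkeeping subtlety is handling the implicit $\ex[N_0(\tau-1)]$ inside the $\ln(\ex[N_0(\tau-1)]+m)$ term: I would use the usual self-bounding argument (bound the log by $\ln(2\ex[N_0(\tau-1)])$ when $\ex[N_0(\tau-1)]\ge m$, otherwise by $\ln(2m)$) to decouple the implicit dependence and arrive at a clean closed-form bound on $\ex[N_0(T)]$.
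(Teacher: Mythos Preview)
Your plan matches the paper's proof structure: instantiate Theorem~\ref{thm:general} with $\cA_S=\textsf{UCB}$, use $B(m)=O(H\ln m)$ and $F=O(1)$, upper-bound the right-hand side of the key inequality over $m$, and then decouple the implicit $\ex[N_0(\tau-1)]$ by a self-bounding step. The paper does exactly this, using Fact~\ref{fact:clucb_lemma10} for the self-bounding (your final paragraph).

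There is, however, a real gap in your optimization. You locate $m^\star=\Theta(H/(\Delta_0+\alpha\mu_0))$ by balancing $H/m$ against $\Delta_0+\alpha\mu_0$ and then \emph{evaluate} at $m^\star$, obtaining $H\ln\!\bigl(H/(\Delta_0+\alpha\mu_0)\bigr)$. But (i) evaluating at a chosen point lower-bounds the supremum, whereas you need an upper bound valid for \emph{all} $m$; and (ii) even granting the evaluation, dividing $H\ln(\cdot)$ by $\alpha\mu_0$ yields $O\!\bigl(H\ln(\cdot)/(\alpha\mu_0)\bigr)$ with a single logarithm and no $1/(\Delta_0+\alpha\mu_0)$ factor, not the displayed bound. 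The extra $[\ln]^2$ and the $1/(\Delta_0+\alpha\mu_0)$ come precisely from the $F\sqrt{m}\ln(c_2 m)$ deviation term you try to argue away as ``dominated''; that term carries $c_2\propto\ex[N_0(\tau-1)]$ and cannot be dropped (your domination claim would need $H(\Delta_0+\alpha\mu_0)\gg 1$, which is not assumed). The paper closes this via a dedicated Lemma~\ref{lemma:g_m_ub} that directly upper-bounds $g_1(m)=-c_3 m + c_1\sqrt{m}\ln(c_2 m)+c_4\ln m$ by $O\!\bigl(\tfrac{c_1 c_4}{c_3}[\ln(\sqrt{c_2 c_4}/c_3)]^2\bigr)$, bracketing the true maximizer between $c_4/c_3$ and $O(c_4[\ln]^2/c_3^2)$; substituting $c_1=O(1)$, $c_4=O(H)$, $c_3=\Delta_0+\alpha\mu_0$, $c_2=O(\ex[N_0])$ and then applying the self-bounding fact gives the theorem as stated.
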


\textbf{Remark 2.} The first term owes to playing the regular arms, which is similar to the result in standard MAB~\cite{UCB_auer2002}, and the second term is caused by the default arm, i.e., the conservative regret, which is the main focus in conservative bandit study.  Compared to the existing algorithm \textsf{ConUCB} \cite{wu2016conservative},
\textsf{GenCB-CMAB} only incurs a $T$-independent conservative regret rather than $\ln T$ (see Table~\ref{table:comparison_regret}). 
Our result also matches the regret lower bound derived in \cite{wu2016conservative}  for CMAB with expected reward constraints, which also holds for our sample-path reward constraint setting. 
\yihan{Explained that the lower bound also holds for our setting.}

\subsection{Application to Conservative Linear Bandits (CLB)}
The conservative linear bandit (CLB)~\cite{kazerouni2017conservative,ImprovedConLB_AAAI20} problem considers the linear bandit problem~\cite{Dani2008_linear_bandit,linear_bandit_NIPS2011} with safe exploration. In CLB where there is a linear structure among arms, $\cX$ is a compact subset of $\mathbb{R}^d$ and $\mu_x=x^{\top} \theta^*$, where $\theta^* \in \mathbb{R}^d$ is an unknown parameter. We make the common assumptions, i.e., $\|x\|_2 \leq L , \forall x \in \cX$ and $\|\theta^*\|_2 \leq S$, as previous linear bandit papers~\cite{Dani2008_linear_bandit,linear_bandit_NIPS2011,kazerouni2017conservative} do.

For CLB, we apply \textsf{GenCB} with the \textsf{LinUCB} algorithm~\cite{linear_bandit_NIPS2011} by replacing Line 5 in Algorithm~\ref{alg:general_solution} with $(x_t, \tilde{\theta}_t) \leftarrow \argmax_{(x, \theta) \in \cX \times \cC_t} x^{\top} \theta $. Here  $\cC_{t}=\{ \theta\in \mathbb{R}^d : \| \theta-\hat{\theta}_{t-1} \|_{V_{t-1}} \leq \sqrt{ d \ln ( 2 m^2 (1+mL^2/\lambda )  ) } + \sqrt{\lambda} S \}$ is a confidence ellipsoid that contains $\theta^*$ with high probability, and we define $\hat{\theta}_t= V_t^{-1} b_t$, $V_t=\lambda I + \sum_{s=1}^{t} x_s x_s^{\top}$, $b_t=\sum_{s=1}^{t}r_{s,x_s} x_s$ and $\lambda \geq \max\{1, L^2\}$.\footnote{$\|x\|_{V} \overset{\textup{def}}{=} \sqrt{x^{\top} V x}, \forall x \in \mathbb{R}^d, \forall V \in \mathbb{R}^{d \times d}$.} We name this version of the algorithm \textsf{GenCB-CLB}, whose  key idea   is to play a regular arm according to the optimism in the face of uncertainty principle while ensuring the sample-path reward constraint. 
Below, we have the regret bound of \textsf{GenCB-CLB}.  
\begin{theorem} \label{thm:gen_con_linucb}
	For the conservative linear bandit problem, \textsf{GenCB-CLB}  guarantees the sample-path reward constraint Eq.~\eqref{eq:perf_constraint} and has the regret bound 
	$$ O \! \left( \! d \ln \left( \frac{L T}{\lambda} \right) \sqrt{T} \! + \! \frac{  d^2 S^2 \lambda \Delta_0 }{\alpha \mu_0 \tilde{\Delta}_0} \!\! \left[ \ln \left( \frac{  d S \sqrt{\lambda} }{\alpha \mu_0 \tilde{\Delta}_0} \right) \right]^2  \right) \!\! , $$
	where $\tilde{\Delta}_0 = \Delta_0+ \alpha \mu_0$.
\end{theorem}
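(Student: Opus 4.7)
The plan is to reduce Theorem~\ref{thm:gen_con_linucb} to Theorem~\ref{thm:general} by instantiating $\cA_S$ with the \textsf{LinUCB} algorithm and then tracking the problem-specific constant $C$ carefully. First, the sample-path constraint \eqref{eq:perf_constraint} follows immediately from the algorithm structure: the \texttt{if}-test in Line~3 certifies that even if a regular-arm pull were to return the worst-case reward of $0$, the accumulated reward would still lie above $(1-\alpha)\mu_0 t$. So the constraint holds with certainty.

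Next, I would import the standard \textsf{LinUCB} regret bound (e.g.\ the self-normalized martingale analysis of Abbasi-Yadkori et al.), which under the stated assumptions $\|x\|_2\le L$ and $\|\theta^*\|_2\le S$ yields
\[
 \ex[\cR_m(\textsf{LinUCB})] \;\le\; B(m) \;=\; O\!\left( d\sqrt{\lambda}\, S\, \ln\!\left(\tfrac{L m}{\lambda}\right)\sqrt{m}\right).
\]
Plugging this $B$ into Theorem~\ref{thm:general} instantly accounts for the first term $O(d\ln(LT/\lambda)\sqrt{T})$ of the regret.

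The substantive work is re-running the conservative-regret argument from the proof of Theorem~\ref{thm:general} while carefully tracking the constants. Let $\tau$ denote the last time that $x_0$ is played and set $m=\ex[m_{\tau-1}+1]$. Since the reward noise lies in $[0,1]$, Azuma--Hoeffding gives a concentration constant $F=O(1)$, so that
\begin{align*}
\alpha\mu_0\,\ex[N_0(\tau-1)]
 \;<\; -\tilde{\Delta}_0\,m \;+\; B(m) \;+\; F\sqrt{m\,\ln(\ex[N_0(\tau-1)]+m)} \;+\; 2.
\end{align*}
The dominant contribution on the right is $B(m)\sim dS\sqrt{\lambda m}\,\ln m$, which is first dominated by $\tilde{\Delta}_0\,m$ at the threshold $m^\star = \widetilde{\Theta}(d^2 S^2\lambda/\tilde{\Delta}_0^{2})$. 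Substituting $m^\star$ upper-bounds the right-hand side by $G\bigl[\ln(\ex[N_0(\tau-1)])\bigr]^2$ with the problem-specific constant $G=\Theta\!\bigl(d^2 S^2 \lambda/\tilde{\Delta}_0\bigr)$. Solving the resulting transcendental inequality exactly as in the proof of Theorem~\ref{thm:general} produces
\[
 \ex[N_0(T)] \;\le\; \frac{d^2 S^2 \lambda}{\alpha\mu_0\,\tilde{\Delta}_0}\,\Bigl[\ln\!\Bigl(\tfrac{d S\sqrt{\lambda}}{\alpha\mu_0\,\tilde{\Delta}_0}\Bigr)\Bigr]^2 + O(1),
\]
and multiplying by $\Delta_0$ yields the second term of the claimed bound.

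The main obstacle is the constant-tracking: one has to verify that the LinUCB confidence radius $\sqrt{d\ln(2m^2(1+mL^2/\lambda))}+\sqrt{\lambda}S$ enters $B(m)$ only through the factor $S\sqrt{\lambda}$ (so that $G$ scales as $d^2 S^2\lambda/\tilde{\Delta}_0$ and not worse), and that the elliptical potential lemma is used rather than naive bounding of summed confidence widths. A secondary subtlety is that $m_{\tau-1}$ is random, so Jensen's inequality must be applied to $B(\cdot)$ and to the square root as in step (a) of Theorem~\ref{thm:general}, which is valid because $B$ and $\sqrt{\cdot}\,\ln(\cdot)$ are concave on the relevant range.
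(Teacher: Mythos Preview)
Your proposal is correct and follows essentially the same route as the paper: both instantiate the generic \textsf{GenCB} analysis with the \textsf{LinUCB} regret bound, use Azuma--Hoeffding (with constant $F=2$) for the reward deviation, maximize the resulting $-\tilde{\Delta}_0 m + O(dS\sqrt{\lambda m}\,\ln m)$ expression over $m$, and then solve the transcendental inequality in $\ex[N_0(\tau-1)]$ to obtain the stated $T$-independent conservative term. The paper carries this out with two explicit technical lemmas (Facts~\ref{fact:clucb_lemma9} and~\ref{fact:clucb_lemma10} borrowed from \cite{kazerouni2017conservative}) for the optimization and the log-inversion, but your informal threshold argument $m^\star=\widetilde{\Theta}(d^2S^2\lambda/\tilde{\Delta}_0^{2})$ is the same computation.
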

\textbf{Remark 3.} 
Similarly, the first term is aligned with the result in standard linear bandits~\cite{Dani2008_linear_bandit,linear_bandit_NIPS2011}, and the second term is the conservative regret due to the default arm.
While the existing algorithms $\textsf{CLUCB}$~\cite{kazerouni2017conservative} and $\textsf{CLUCB2}$~\cite{ImprovedConLB_AAAI20} have $\ln(1/ \delta)$-dependent conservative regrets with high probability (do not contain $T$ either), these results are of $\ln T$ order when making the convention $\delta=1/T$. In contrast, we provide an expected bound with a $T$-independent conservative regret.

\subsection{Application to Conservative Contextual Combinatorial Bandits (CCCB)}
The conservative contextual combinatorial bandit (CCCB) problem~\cite{zhang2019contextual}  investigates the contextual combinatorial bandit problem under the safe exploration requirement. 
In CCCB, $\cX$ is a collection of subsets of \emph{base arms} $x_1, \dots, x_K \in \mathbb{R}^d$ and generated from certain combinatorial structure (e.g., matchings and paths).
The learner plays a \emph{super arm} (subset of base arms) $A_t \in \cX$ or the default arm $x_0$ at each timestep.
The expected reward of base arm $x_e$ is $w_{e}^*=x_{e}^{\top} \theta^*$ and that of super arm $A$ is $f(A, \boldsymbol{w}^*)$, where $\theta^*$ is an unknown parameter and $f$ satisfies two mild assumptions, i.e., monotonicity and Lipschitz continuous with parameter $P$ \cite{Contextual_Combinatorial_Bandit,zhang2019contextual}.
Similar to CLB, we assume $\|x\|_2 \leq L , \forall x \in \cX$ and $\|\theta^*\|_2 \leq S$.
At timestep $t$, the random reward of a base arm $x_e$ and a super arm  $A$  are  $w_{t,e}=w_{e}^*  + \eta_{t,e} \in [0,1]$ and $r_{t,A}=f(A, \boldsymbol{w}^*)+\eta_{t,A} \in [0, K]$, respectively. After pulling  super arm $A_t$, we receive the random reward $r_{t,A_t}$ and observe a semi-bandit feedback, i.e., $w_{t,e}$ for each $e \in A_t$.

For CCCB, we apply \textsf{GenCB} with the \textsf{C2UCB} algorithm~\cite{Contextual_Combinatorial_Bandit}, by replacing Line 5 in Algorithm~\ref{alg:general_solution} with $A_t \leftarrow \argmax_{A \in \cX } f(A, \bar{\boldsymbol{w}}_t) $. Here  $\bar{w}_{t,e}= x_{e}^{\top} \hat{\theta}_{t-1}+ (\sqrt{ d \ln ( 2 m^2 (1+mKL^2/\lambda )  ) }+ \sqrt{\lambda} S) \|x_{e}\|_{V_{t-1}^{-1}}$ is the upper confidence bound of $w_{e}^*$, and we define $\hat{\theta}_t= V_t^{-1} b_t$,  $V_t=\lambda I + \sum_{s=1}^{t} \sum_{e \in A_s} x_{e} x_{e}^{\top}$, $b_t=\sum_{s=1}^{t} \sum_{e \in A_s} w_{s,e} x_{e}$ and $\lambda \geq \max\{1, L^2\}$. 
The key idea here is to play a super arm with the maximum upper confidence bound according to the historical observations on base arms. 
Theorem \ref{thm:gen_con_c2ucb} below gives the regret bound of \textsf{GenCB-CCCB}.


\begin{theorem} \label{thm:gen_con_c2ucb}
	For the contextual   combinatorial bandit problem, \textsf{GenCB-CCCB} ensures the sample-path reward constraint Eq.~\eqref{eq:perf_constraint} and achieves the regret bound 
	$$O \! \left( \! Pd \ln \left( \frac{KLT}{\lambda} \right)\sqrt{T} \! + \! \frac{  D^2 }{ \alpha \mu_0 \tilde{\Delta}_0 } \!\! \left[ \ln \left( \frac{ D  } {\alpha \mu_0 \tilde{\Delta}_0 } \right)  \right]^2  \right), $$
	where $D = K+P\sqrt{\lambda}Sd$ and $\tilde{\Delta}_0 = \Delta_0+ \alpha \mu_0$.
\end{theorem}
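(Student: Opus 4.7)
The plan is to derive Theorem~\ref{thm:gen_con_c2ucb} by instantiating the general framework of Theorem~\ref{thm:general} with $\cA_S = \textsf{C2UCB}$. First, the sample-path constraint Eq.~\eqref{eq:perf_constraint} holds automatically by the $\mathtt{if}$-branching in Algorithm~\ref{alg:general_solution}: whenever playing a regular arm risks violating the constraint (in the worst case where the reward is zero), we default to $x_0$, so the same argument as in the proof of Theorem~\ref{thm:general} carries over verbatim. The regret is then decomposed as usual into (i) a contribution $B(T)$ from the steps that play super arms via \textsf{C2UCB}, and (ii) a contribution $\Delta_0 \cdot \ex[N_0(T)]$ from the steps that play the default arm.

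For (i), I would invoke the standard regret analysis of \textsf{C2UCB} for contextual combinatorial semi-bandits with monotone $P$-Lipschitz reward function, giving $B(T) = O(Pd \ln(KLT/\lambda)\sqrt{T})$; this recovers the first term of the stated bound. For (ii), I would follow the template in the proof of Theorem~\ref{thm:general}: let $\tau$ be the last timestep at which $x_0$ is played, so that $r_S(\tau-1) + N_0(\tau-1)\mu_0 < (1-\alpha)\mu_0 \tau$. Writing $\tau = N_0(\tau-1) + m_{\tau-1} + 1$ and rearranging yields
\begin{align*}
\alpha \mu_0 N_0(\tau-1) < &\ (1-\alpha)\mu_0(m_{\tau-1}+1) \\ & + \sum_{t \in \cS_{\tau-1}}(f(A_t,\boldsymbol{w}^*) - r_{t,A_t}) - \sum_{t\in\cS_{\tau-1}} f(A_t,\boldsymbol{w}^*).
\end{align*}
Because super arm rewards lie in $[0,K]$, Azuma–Hoeffding gives $\sum_{t\in\cS_{\tau-1}}(f(A_t,\boldsymbol{w}^*) - r_{t,A_t}) = O(K\sqrt{m_{\tau-1}\ln(1/\delta)})$ with probability $1-\delta$; choosing $\delta=1/\tau$ and taking expectations, together with replacing $\ex[\sum_{t\in\cS_{\tau-1}} f(A_t,\boldsymbol{w}^*)]$ by $\mu_{x_*}\ex[m_{\tau-1}] - \ex[\cR_{m_{\tau-1}}(\textsf{C2UCB})]$ and applying Jensen's inequality, leaves (with $m = \ex[m_{\tau-1}+1]$)
\begin{align*}
\alpha\mu_0 \ex[N_0(\tau-1)] < -\tilde{\Delta}_0 m + O\bigl(Pd\sqrt{m}\ln(KLm/\lambda) \\ + K\sqrt{m\ln(\ex[N_0(\tau-1)]+m)}\bigr) + 2.
\end{align*}

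The main technical step is to bound the right-hand side in the specific form $G[\ln(\sqrt{\ex[N_0(\tau-1)]})]^2$ with $G$ matching the advertised $D = K + P\sqrt{\lambda}Sd$. To do this I would gather both sublinear-in-$m$ terms into a single $O(D\sqrt{m \ln(\cdots)})$ quantity (using $\lambda \geq \max\{1,L^2\}$ to absorb $\ln(KLm/\lambda)$ factors into the logarithm on $\ex[N_0(\tau-1)]+m$), then maximize the upper bound over $m$: the concave envelope $-\tilde{\Delta}_0 m + c D\sqrt{m \ln(\cdot)}$ peaks at $m^\star = \Theta(D^2 \ln / \tilde{\Delta}_0^2)$, with peak value $\Theta(D^2 \ln / \tilde{\Delta}_0)$. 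This yields $\ex[N_0(T)] = \ex[N_0(\tau-1)] + 1 = O\bigl(D^2/(\alpha\mu_0\tilde{\Delta}_0)\cdot[\ln(D/(\alpha\mu_0\tilde{\Delta}_0))]^2\bigr)$. Multiplying by $\Delta_0$ and adding to $B(T)$ produces the claimed regret bound.

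The main obstacle, as in the general theorem, is the cleanly extracting the constant $D$ from the mixture of two sublinear-in-$m$ terms of different origins (the \textsf{C2UCB} regret with prefactor $Pd$ and the Azuma-Hoeffding deviation with prefactor $K$ due to the $[0,K]$-range of super-arm rewards), and handling the implicit self-reference $\ln(\ex[N_0(\tau-1)]+m)$ on the right-hand side. I would resolve the latter by first deriving a crude polynomial bound on $\ex[N_0(\tau-1)]$ (e.g.\ $\ex[N_0(\tau-1)] \le \mathrm{poly}(D, 1/(\alpha\mu_0\tilde{\Delta}_0))$), substituting back to turn the logarithm into $\ln(D/(\alpha\mu_0\tilde{\Delta}_0))$, and then iterating once to sharpen the leading constants. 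The rest is routine algebra.
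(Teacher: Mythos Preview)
Your proposal is correct and follows essentially the same route as the paper: decompose at the last default-arm time $\tau$, apply Azuma--Hoeffding with the $K$-scaling (since super-arm rewards lie in $[0,K]$), merge the \textsf{C2UCB} regret term and the deviation term into a single $O(D\sqrt{m}\ln(\cdot))$ expression, and then resolve the resulting self-referential inequality in $\ex[N_0(\tau-1)]$. The paper carries out your ``maximize over $m$'' and ``iterate once to remove the self-reference'' steps via two explicit lemmas (Facts~\ref{fact:clucb_lemma9} and~\ref{fact:clucb_lemma10}, taken from \cite{kazerouni2017conservative}); note only that the \textsf{C2UCB} regret contributes a prefactor $P\sqrt{\lambda}Sd$ (not just $Pd$), which is what makes $D=K+P\sqrt{\lambda}Sd$ emerge.
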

\textbf{Remark 4.} The first term is consistent with the result in standard contextual combinatorial bandits~\cite{Contextual_Combinatorial_Bandit}, and the second conservative regret term is due to playing the default arm. Compared to the state-of-the-art algorithm \textsf{C3UCB}~\cite{zhang2019contextual},
\textsf{GenCB-CCCB} provides a $T$-independent conservative regret, while \textsf{C3UCB} incurs a $\ln T$ regret (see Table~\ref{table:comparison_regret}).

\begin{algorithm}[t]
	\caption{\textsf{MV-CUCB}} \label{alg:ConMVUCB}
	\KwIn{Reugular arms $[K]$, default arm $x_0$ with $\mv_0=\rho \mu_0$, parameters $\alpha$, $\rho>\frac{2}{\alpha \mu_0}$.}
	$\forall t \geq 0, \forall 0 \leq i \leq K,  N_i(t) \leftarrow 0$. $m \leftarrow 0$. $\widehat{\mv}_{0}(\cA) \leftarrow 0$\;
	\For{$t=1, 2, \dots$}
	{
		\If{$ (t-1)\widehat{\mv}_{t-1}(\cA) - 2 \geq (1-\alpha) {\mv}_0 t$} 
		{   
			$m \leftarrow m+1$\;
			$x_t  \leftarrow  \argmax \limits_{i \in [K]} \! \left(\widehat{\mv}_i \!+\! (5 \!+\! \rho) \sqrt{ \frac{ \ln (12Km^3)}{ 2 N_i(t-1) } } \right)$\;
			Pull arm $x_t$, observe the random reward $r_{t,x_t}$ and update $\widehat{\mv}_{x_t}$\;
			$N_{x_t}(t) \leftarrow N_{x_t}(t-1) + 1 $ and $\forall 0 \leq i \leq K, i \neq x_t, N_{i}(t) \leftarrow N_{i}(t-1) $\;
		}
		\Else
		{
			Play $x_0$ and receive reward $\mu_0$\;
			$N_0(t) \leftarrow N_0(t-1)+1$ and $\forall 1 \leq i \leq K, N_{i}(t) \leftarrow N_{i}(t-1) $\;	
		}
	}
\end{algorithm}

\section{Conservative Mean-Variance Bandits}
We now extend CBPs to the mean-variance \cite{sani2012risk,risk-averse_ALT2013,risk_convex2019} setting (MV-CBP), which focuses on finding arms that achieve effective trade-off between the expected reward and variability. 
MV-CBP increments the typical conservative bandit model and better suits the tasks emphasizing on reward fluctuations. It also brings additional complications for algorithm design and regret analysis beyond \textsf{GenCB}.

\subsection{Problem Formulation for MV-CBP}
To introduce our MV-CBP formulation, we first review the standard mean-variance bandit setting~\cite{sani2012risk}.
Each arm $x \in [K]$ is associated with a measure mean-variance, which is formally defined as $\mv_x=\rho \mu_x - \sigma_x^2$, where $\sigma_x^2$ is the reward variance and $\rho$ is a weight parameter.
Let $x^{\mv}_*=\argmax_{x \in [K]} \mv_x$ denote the mean-variance optimal arm.
Given i.i.d. reward samples $\{Z_{x,s}\}_{s=1}^{t}$ of arm $x$, we define the \emph{empirical mean-variance} $\widehat{\mv}_{x,t}=\rho \hat{\mu}_{x,t}-\hat{\sigma}_{x,t}^2$, where $\hat{\mu}_{x,t}=\frac{1}{t}\sum_{s=1}^{t} Z_{x,s}$ and $\hat{\sigma}_{x,t}^2=\frac{1}{t} \sum_{s=1}^{t} (Z_{x,s}-\hat{\mu}_{x,t})^2$.

For an algorithm $\cA$ and its sample path $\{r_{t,x_t}\}_{t=1}^{T}$ over time horizon $T$, we define the empirical mean-variance $\widehat{\mv}_T(\cA)=\rho \hat{\mu}_{T}(\cA)-\hat{\sigma}_{T}^2(\cA)$, where $\hat{\mu}_{T}(\cA)=\frac{1}{T}\sum_{t=1}^{T} r_{t,x_t}$ and $\hat{\sigma}_{T}^2(\cA)=\frac{1}{T} \sum_{t=1}^{T} (r_{t,x_t}-\hat{\mu}_{T}(\cA))^2$.
Naturally, for algorithm $\cA$ over time $T$, we define the \emph{mean-variance regret} $\cR^{\mv}_T(\cA)=\widehat{\mv}_{x_*,T}-\widehat{\mv}_T(\cA)$, which is the difference of the mean-variance performance between $\cA$ and what we could have achieved by always playing $x^{\mv}_*$. 

Due to the difficulty of the $\cR^{\mv}_T(\cA)$ metric, we follow the mean-variance bandit literature and use a more tractable mesure  \emph{mean-variance pseudo-regret}~\cite{sani2012risk} defined as:
	\begin{align}
	\!\!\! \widetilde{\cR}^{\mv}_T(\cA) \!\!=\! \frac{1}{T} \!\!\! \sum_{x \neq x_*} \!\! N_{x,T} \Delta^{\mv}_{x} \!\!+\!\! \frac{2}{T^2} \!\! \sum_{x \in \cX}  \sum_{y \neq x} \!\! N_{x,T} N_{y,T} \Gamma_{x,y}^2, \label{eq:mv_pseudo_regret}
	\end{align} 
where $N_{x,T}$ is a shorthand for $N_{x}(T)$, $\Delta^{\mv}_{x}=\widehat{\mv}_{x_*}-\widehat{\mv}_{x}$ and $\Gamma_{x,y}=\mu_{x}-\mu_{y}$. 
It has been shown that any bound on $\widetilde{\cR}^{\mv}_T(\cA)$ immediately translates into an bound on $\cR^{\mv}_T(\cA)$ (Lemma 1 in \cite{sani2012risk}). Thus, most theoretical analysis \cite{sani2012risk,risk-averse_ALT2013,risk_convex2019} on  mean-variance bandits has been done via $\widetilde{\cR}^{\mv}_T(\cA)$. 
Note that, in MV-CBP the measures $\widehat{\mv}_T(\cA)$ and $\cR^{\mv}_T(\cA)$ are both \emph{normalized} quantities over $T$.

In addition to minimizing the regret, the learner is also required to guarantee the following mean-variance constraint: 
	\begin{align}
	\widehat{\mv}_t(\cA) \geq (1-\alpha) {\mv}_0 , \quad \forall t \in \{1, \dots, T\}. \label{eq:mv_perf_constraint}
	\end{align}
Here ${\mv}_0$ denotes the mean-variance of our default arm $x_0$ with known constant reward $\mu_0$ and zero variance. 
The goal in MV-CBP is to minimize Eq.~\eqref{eq:mv_pseudo_regret} while satisfying Eq.~\eqref{eq:mv_perf_constraint}.



\subsection{Algorithm for MV-CBP}
We propose a novel algorithm named \textsf{MV-CUCB} for MV-CBP (illustrated in Algorithm \ref{alg:ConMVUCB}). 
The main idea is to  compute the upper confidence bound of mean-variance for each arm and select one according to the optimism principle whenever the constraint is not violated. 
Theorem \ref{thm:con_mv_ucb} summarizes the performance results of \textsf{MV-CUCB} (see the supplementary material~\cite{supp_arxiv_version} for its proof).
\begin{theorem} \label{thm:con_mv_ucb}
	For the conservative mean-variance multi-armed bandit problem with $\alpha \mv_0>2$, \textsf{MV-CUCB} (Algorithm~\ref{alg:ConMVUCB}) ensures the mean-variance constraint Eq.~\eqref{eq:mv_perf_constraint} and achieves the following regret bound:\footnote{$\tilde{O}$ omits the logarithmic terms that are independent of $T$.}
	\begin{align*}
		\tilde{O} & \Bigg(  \frac{\rho^2 \ln (KT)}{T} \left(H_1+H_2+ \frac{\rho^2 \ln (KT)}{T} H_3 \right) \\ & \!\!\! + \! \frac{  \rho^3 K^2 (H_1^{\mv} \!\!\!+\! 4H_2^{\mv}) \!+\! ( \rho^4 K H_3^{\mv} \!\!\!+\! \rho K)\tilde{\Delta}_0^{\mv}  }{ (\alpha {\mv}_0-2) \tilde{\Delta}_0^{\mv} T } \!\cdot\! \Delta_0^{\mv} \! \Bigg) ,
	\end{align*}
	where $H_1^{\mv}=\sum_{i>1}(\Delta_i^{\mv})^{-1}$, $H_2^{\mv}=\sum_{i>1}(\Delta_i^{\mv})^{-2}$, $H_3^{\mv}=\sum_{i>1}\sum_{j>1, j \neq i}(\Delta_i^{\mv} \Delta_j^{\mv})^{-2}$ and $\tilde{\Delta}_0^{\mv}=\Delta_0^{\mv} + \alpha {\mv}_0$.
\end{theorem}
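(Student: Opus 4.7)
The plan is to establish the theorem in two phases that mirror the decomposition of Theorem~\ref{thm:general}: first verify that Algorithm~\ref{alg:ConMVUCB} meets the sample-path mean-variance constraint in Eq.~\eqref{eq:mv_perf_constraint}, and then bound the regret by splitting it into a contribution from the regular-arm phases (first big summand) and a conservative contribution from the default arm (second summand), using $\tilde{\Delta}_0^{\mv}$ in place of the linear $\tilde{\Delta}_0$ used earlier.

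For the constraint, I would first show that the quantity $t\widehat{\mv}_t(\cA)$ changes by at most $2$ (up to lower-order terms in $1/t$) when one more observation with reward in $[0,1]$ is appended, using $\mv_0=\rho\mu_0$ and expanding $\hat\sigma_t^2$ in the recursive form $t\hat{\sigma}_t^2 = (t-1)\hat{\sigma}_{t-1}^2 + \tfrac{t-1}{t}(r_t - \hat{\mu}_{t-1})^2$. This bounded-sensitivity property is exactly what justifies the slack of $-2$ in the check on Line~3: if the check passes, then even the worst-case new sample (reward $0$ with maximum induced variance) leaves $t\widehat{\mv}_t(\cA) \geq (1-\alpha)\mv_0 t$; and when the check fails, playing $x_0$ with constant reward $\mu_0$ provably pushes $t\widehat{\mv}_t(\cA)$ monotonically towards $t\mv_0$. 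A short induction on $t$ then yields the desired constraint.

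For the regret-from-regular-arms term, I would follow the MV-UCB template of \cite{sani2012risk}. Using the confidence radius $(5{+}\rho)\sqrt{\ln(12Km^3)/(2N_i(t-1))}$, a good-event argument based on a McDiarmid-type concentration for the empirical mean-variance $\widehat{\mv}_{i,t}$ of each arm bounds each $N_{i,T}$ by $O(\rho^2\ln(KT)/(\Delta_i^{\mv})^2)$ on the good event. Summing the two terms of the pseudo-regret in Eq.~\eqref{eq:mv_pseudo_regret} produces the $H_1^{\mv}$, $H_2^{\mv}$, and (from the quadratic exploration-risk cross term) $H_3^{\mv}$ contributions, divided by $T$ or $T^2$ as dictated by the normalization in Eq.~\eqref{eq:mv_pseudo_regret}; the $\rho^2\ln(KT)/T$ scaling matches the leading factor in the stated bound.

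For the conservative contribution I would mirror the proof of Theorem~\ref{thm:general}. Let $\tau$ be the last timestep at which $x_0$ is pulled, so the check fails at $\tau$: $(\tau-1)\widehat{\mv}_{\tau-1}(\cA) - 2 < (1-\alpha)\mv_0\tau$. Expanding $(\tau-1)\widehat{\mv}_{\tau-1}(\cA)$ over the regular-arm timesteps in $\cS_{\tau-1}$, adding and subtracting $\sum_{t\in\cS_{\tau-1}} \mv_{x_t}$, and using $\tau = N_0(\tau-1) + m_{\tau-1} + 1$, I would isolate $\alpha\mv_0 N_0(\tau-1) - 2$ on one side against (i) the mean-variance gap regret already bounded above, (ii) the sample deviation of $\widehat{\mv}_{t}$ on $\cS_{\tau-1}$ from its expectation, and (iii) the $\Delta_0^{\mv}$ slack. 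Concentration bounds each deviation sublinearly in $m$, and Jensen's inequality gives a self-bounded inequality in $\ex[N_0(\tau-1)]$; the hypothesis $\alpha\mv_0 > 2$ (equivalent to the input condition $\rho > 2/(\alpha\mu_0)$) is precisely what keeps the coefficient of $\ex[N_0(\tau-1)]$ strictly positive and yields a $T$-independent cap, which multiplied by $\Delta_0^{\mv}$ and normalized by $T$ gives the second summand.

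The main obstacle I expect is the concentration of $\widehat{\mv}_t$: since $\widehat{\mv}_t$ is a quadratic functional of the rewards, Azuma--Hoeffding does not apply directly as in Theorem~\ref{thm:general}, and one must either invoke a bounded-differences (McDiarmid) inequality, carefully tracking how $\hat{\sigma}_t^2$ changes under a single reward perturbation, or a Bernstein-type bound for the variance estimator; this is what produces the $(5+\rho)$ factor in the confidence radius and propagates through both the $\rho^2\ln(KT)/T$ factor and the $\rho^3\sqrt{K}$ and $\rho^4 K$ prefactors in the conservative term. A secondary challenge is handling the mixed exploration-risk term $\sum_{x,y} N_xN_y\Gamma_{x,y}^2/T^2$ in the conservative-regret bookkeeping, since it couples play counts across arms and is responsible for the $\rho^4 K H_3^{\mv}$ contribution that does not appear in the standard CBP analysis of Theorem~\ref{thm:general}.
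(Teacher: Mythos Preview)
Your proposal is essentially the paper's proof: the same one-step sensitivity bound $t\,\widehat{\mv}_t(\cA)\ge (t-1)\,\widehat{\mv}_{t-1}(\cA)-2$ followed by induction for the constraint, and the same Theorem~\ref{thm:general}-style expansion at the last default pull $\tau$ for the conservative term. Two small clarifications are worth making. First, the paper does not use McDiarmid on the path $\widehat{\mv}_t(\cA)$; it uses the per-arm event $\cF=\{|\hat\mu_{i,n}-\mu_i|,\ |\hat\sigma^2_{i,n}-\sigma^2_i|\text{ small for all }i,n\}$ from \cite{sani2012risk}, which directly gives the $(5{+}\rho)$ radius and lets one compare $\sum_i N_i\,\mv_i$ with $\sum_i N_i\,\widehat{\mv}_i$ arm by arm. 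Second, the reason $\alpha\mv_0>2$ makes the coefficient of $\ex[N_0(\tau-1)]$ positive is not the constant slack $-2$ in the check; it is the exploration-risk cross-terms with $x_0$, namely $\tfrac{2}{\tau-1}\,N_0(\tau-1)\sum_{i\ge1}N_{i,\tau-1}\hat\Gamma_{0,i}^2\le 2N_0(\tau-1)$, which must be moved to the left to yield $(\alpha\mv_0-2)\,\ex[N_0(\tau-1)]$ on that side---this is the MV-specific twist that does not appear in the CBP proofs and that your last paragraph attributes only to $H_3^{\mv}$.
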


\textbf{Remark 5.} 
Since a pull of $x_0$ not only accumulates $\mv_0$ but also causes an exploration risk  (bounded by $2$ for reward distributions in $[0,1]$) due to the switch between different arms, we need the mild assumption $\alpha \mv_0>2$ to guarantee that a pull of $x_0$ will not violate the constraint.
Recall that the result is a normalized regret over $T$, the first term of $O(\ln T/ T)$ order owes to regular arms, which agrees with the previous non-conservative mean-variance result~\cite{sani2012risk}. The second term is the conservative regret for satisfying the constraint, which is of only $O(1/T)$ order and independent of $T$ in the cumulative form.  
To our best knowledge, Theorem~\ref{thm:con_mv_ucb} is the first result for conservative bandits with mean-variance objectives.

\section{Experiments} \label{sec:experiments}
We conduct experiments for our algorithms in four problems, i.e., CMAB, CLB, CCCB and MV-CBP, with a wide range of parameter settings. Due to space limit, only partial results are presented here (see the supplementary material~\cite{supp_arxiv_version} for full results).

In all experiments, we assume the rewards to take  i.i.d. Bernoulli values. \yihan{Revised this sentence.}
For CMAB, we set $K \in \{24, 72, 144\}$, $\alpha \in \{0.05, 0.1, 0.15\}$, $\mu_0=0.7$ and $\mu_1, \dots, \mu_K$ as an arithmetic sequence from $0.8$ to $0.2$. For CLB and CCCB, we set $d \in \{5, 7, 9\}$, $\alpha \in \{0.01, 0.02, 0.03\}$, $K=2d$ and $f(A, \boldsymbol{w}^*)=\sum_{e \in A} w_e^*$.
For MV-CBP, we use the same parameter settings as CMAB and additionally set $\rho \in \{10, 30, 60\}$.
For each algorithm, we perform $50$ independent runs and present the average (middle curve), maximum (upper curve) and minimum (bottom curve) cumulative regrets across runs. For each figure, we also zoom in the initial exploratory phase in the sub-figure to compare algorithm performance in this phase. \yihan{Added this sentence.}

\textbf{Experiments for CBPs.}
In the experiments for CMAB (Figure~\ref{fig:mab}), CLB (Figure~\ref{fig:lb}) and CCCB (Figure~\ref{fig:clb}), we compare  \textsf{GenCB-CMAB}, \textsf{GenCB-CLB} and \textsf{GenCB-CCCB} to previous CBP algorithms \textsf{CUCB} \cite{wu2016conservative}, \textsf{CLUCB} \cite{kazerouni2017conservative} and \textsf{C3UCB} \cite{zhang2019contextual}, the standard bandit algorithms \textsf{UCB} \cite{UCB_auer2002}, \textsf{LinUCB} \cite{linear_bandit_NIPS2011} and \textsf{C2UCB} \cite{Contextual_Combinatorial_Bandit}, and the conservative baseline $(1-\alpha)\mu_0$, respectively. 

We see that, in  the exploration phase, existing non-conservative algorithms suffer higher losses than the baseline, while our algorithms and previous CBP algorithms achieve similar performance as (or better than) the baseline due to the conservative constraints. \yihan{Revised this sentence.} 
However, since previous CBP algorithms use lower confidence bounds (rather than the empirical  rewards in ours) to check the constraints, they  are forced to play the default arm more  and act more conservatively compared to ours. 

In the exploitation phase, when compared to non-conservative algorithms, our algorithms have additional regrets that keep constant as $T$ increases, which matches our $T$-independent conservative regret bounds. 
Compared to previous CBP algorithms, our schemes achieve significantly better performance, since we play the default arm less and enjoy a lower conservative regret.

\textbf{Experiments for MV-CBP.}
In the experiments for MV-CBP (Figure~\ref{fig:mv}), we present the mean-variance regret in the cumulative form $T \cdot \widetilde{\cR}^{\mv}_T$ for clarity of comparison. 
Since \textsf{MV-CUCB} is the first algorithm for MV-CBP, we compare it with the standard mean-variance bandit algorithm \textsf{MV-UCB} and the baseline $(1-\alpha)\mv_0 T$. We can see that, in the exploration phase, \textsf{MV-UCB} suffers from a higher regret than the baseline while \textsf{MV-CUCB} follows the baseline closely. One also sees that \textsf{MV-CUCB} achieves this with only an additional constant overall regret compared to \textsf{MV-UCB}, which matches our $T$-independent bound of conservative regret.


\begin{figure} 
	\centering    
	\subfigure[CMAB ($K=72, \alpha=0.05$)] {
		\label{fig:mab}     
		\includegraphics[width=0.46\columnwidth]{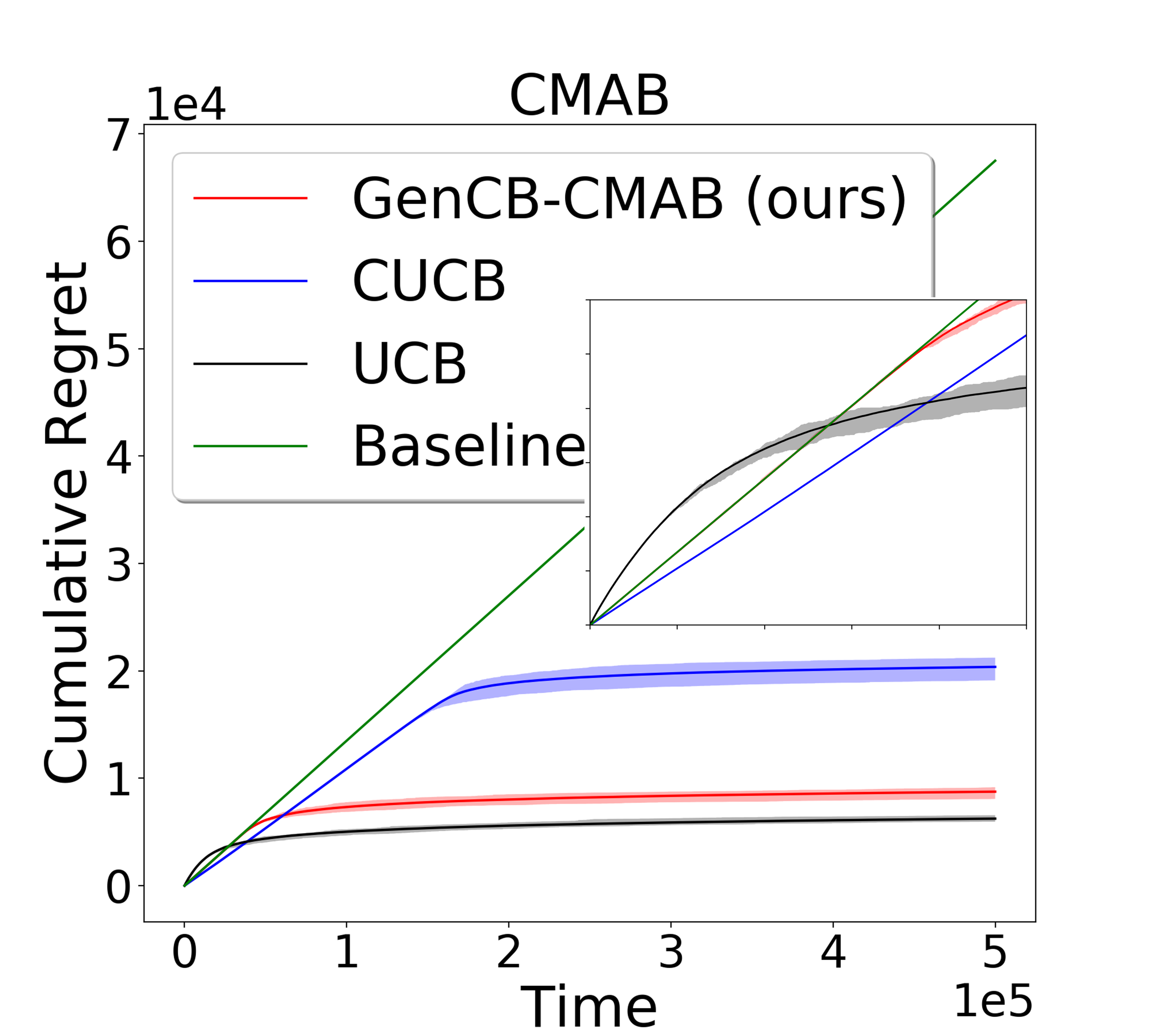}  
	}     
	\subfigure[CLB ($d=7, \alpha=0.01$)] { 
		\label{fig:lb}     
		\includegraphics[width=0.46\columnwidth]{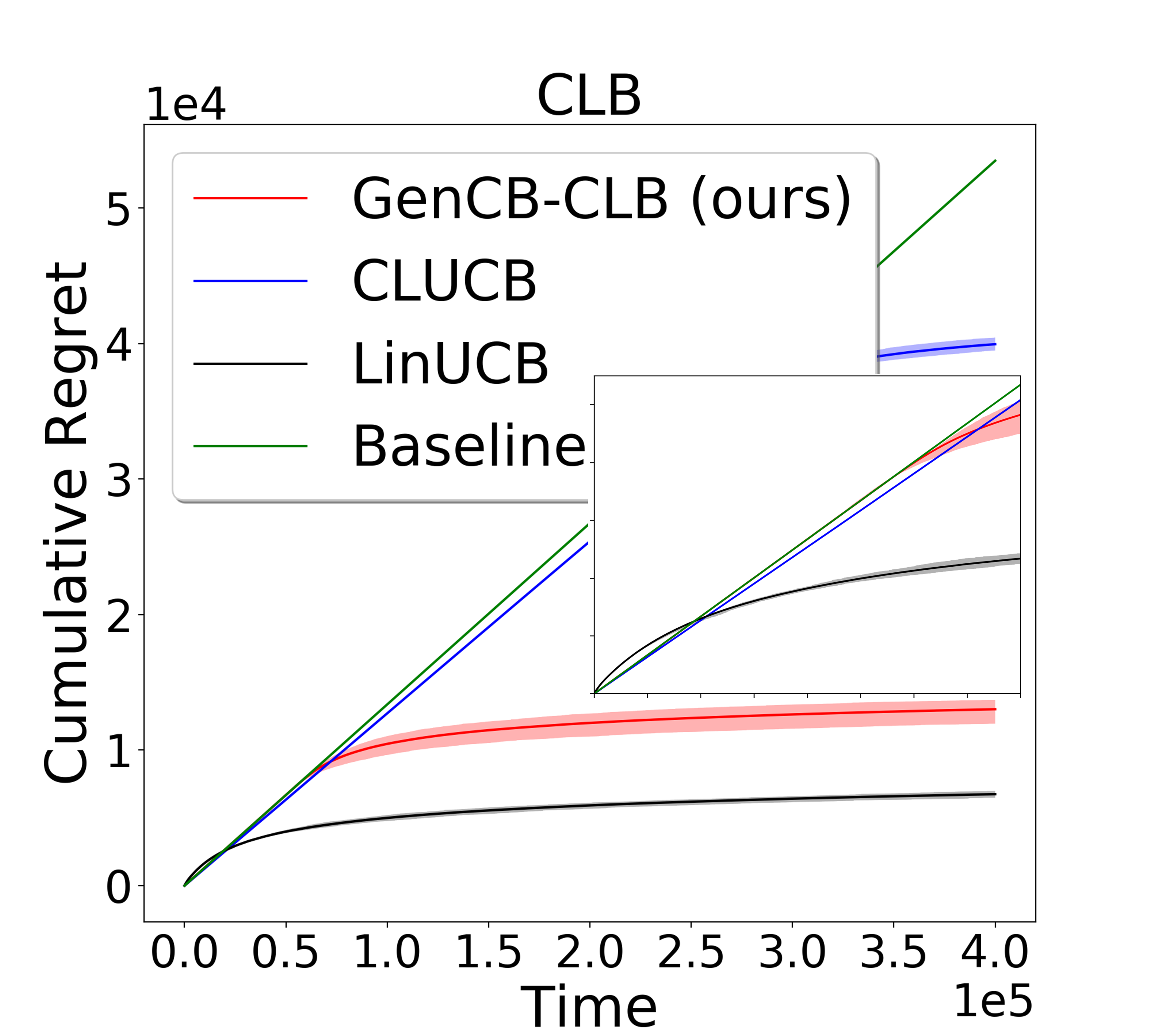} 
	}    
	\subfigure[CCCB ($d=7, \alpha=0.01$)] { 
		\label{fig:clb}     
		\includegraphics[width=0.46\columnwidth]{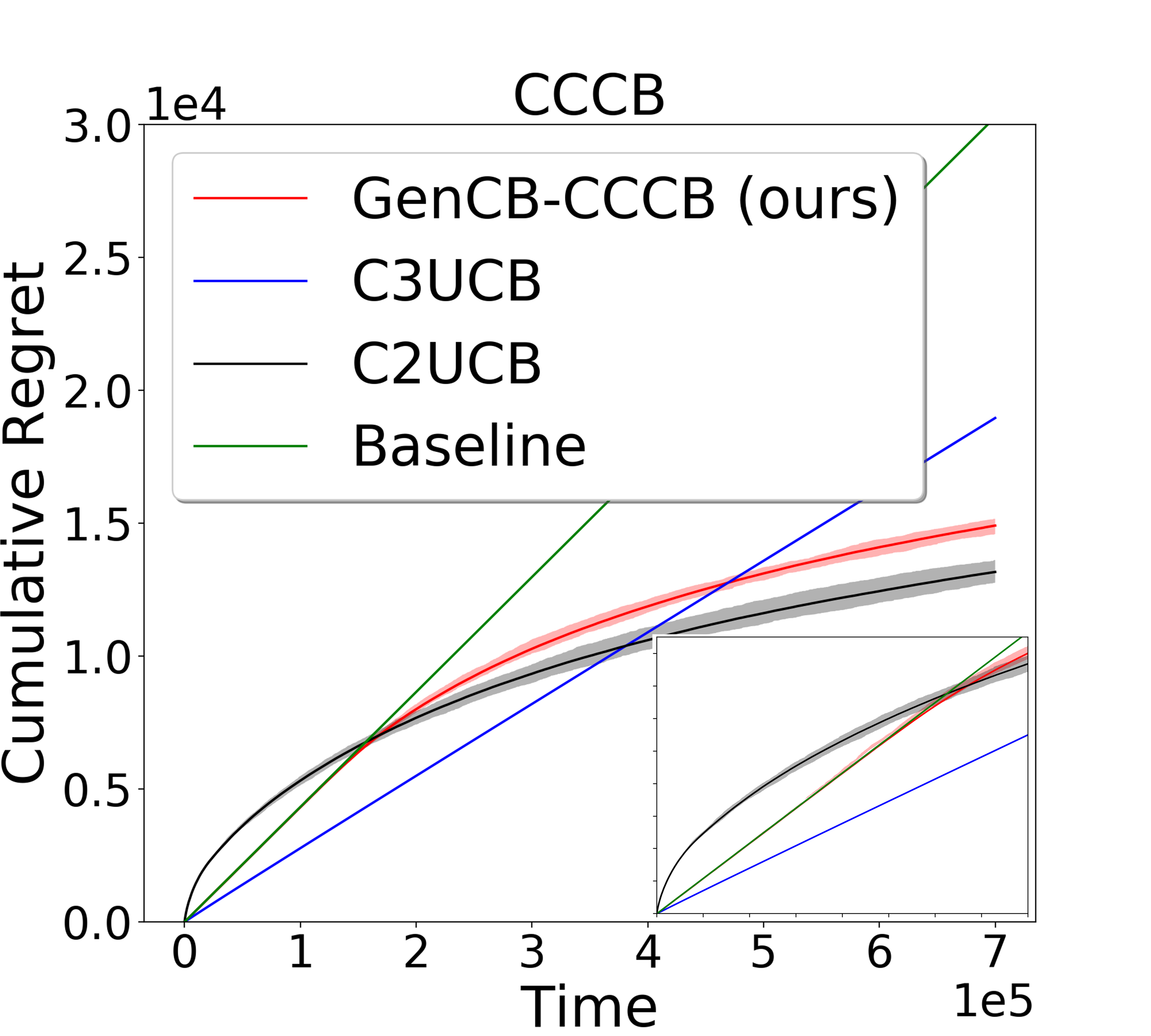}  
	}   
	\subfigure[MV-CBP ($K=24, \alpha=0.05, \rho=10$)] { 
	\label{fig:mv}     
	\includegraphics[width=0.46\columnwidth]{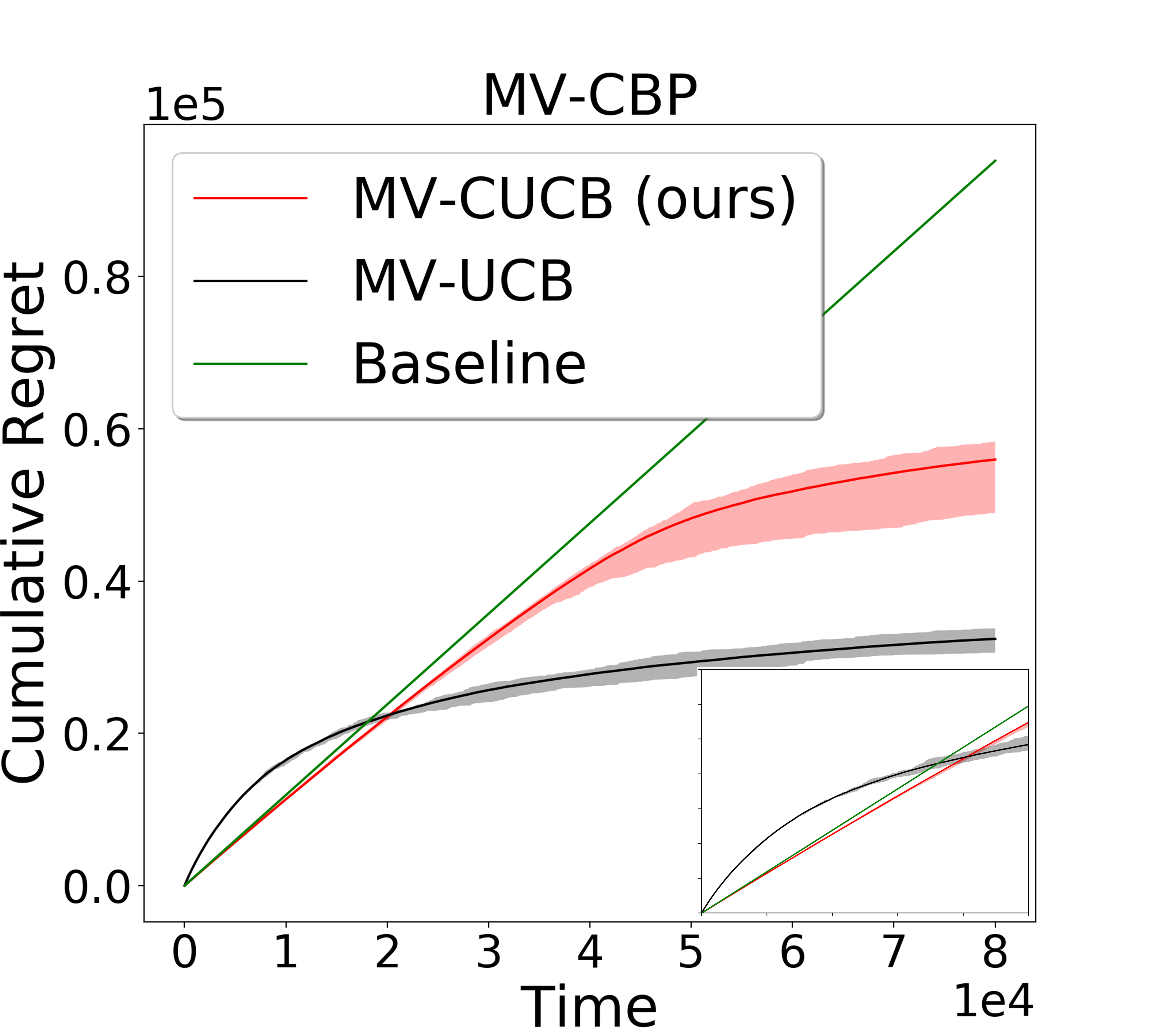}    
	}   
	\caption{Experiments for the studied problems, i.e., CMAB, CLB, CCCB and MV-CBP.}     
	\label{fig:general_label}     
\end{figure}

\section{Conclusion and Future Works}
In this paper, we propose a general solution to a family of conservative bandit problems (CBPs) with sample-path reward constraints, and present its applications to three encompassed problems, i.e., conservative multi-armed bandits (CMAB), conservative linear bandits (CLB) and conservative contextual combinatorial bandits (CCCB). We show that our algorithms outperform existing ones both theoretically (incurs $T$-independent conservative regrets rather than $T$-dependent) and empirically. 
Moreover, we study a novel extension of CBPs to the mean-variance setting (MV-CBP) and develop an algorithm with $O(1/T)$ normalized conservative regret ($T$-independent in the cumulative form). We also validate this result through empirical evaluation. 

There are several directions worth further investigation. One is to consider more general conservative mean-variance bandits other than the $K$-armed setting, e.g., a contextual extension. Another direction is to consider other practical conservative constraints which capture the safe exploration requirement in real-world applications. 

\clearpage

\section*{Acknowledgements}
This work is supported in part by the National Natural Science Foundation of China Grant 61672316, the Zhongguancun Haihua Institute for Frontier Information Technology and the Turing AI Institute of Nanjing.

\section*{Ethical Impact}
In this paper, we study a family of conservative bandit problems and present algorithms with theoretical guarantees and experimental results. While our work mainly focuses on the theoretical analysis, it may have potential social impacts on the applications including finance and clinical trials. For example, our algorithms may help risk-adverse investors choose financial products, with the objective of obtaining high cumulative returns while guaranteeing a certain baseline during exploration. We believe that this work does not involve any ethical issue.

\bibliographystyle{aaai21}
\bibliography{aaai21_conservative_bandit_ref}
\clearpage

\appendix
\section*{Supplementary Material}
\setcounter{equation}{0}

\section{More Experimental Results} \label{apx:more_experiments}
In this section, we present more experimental results for CMAB, CLB, CCCB and MV-CPB, which are shown in Figure~\ref{fig:apx_cmab}, \ref{fig:apx_clb}, \ref{fig:apx_cccb} and \ref{fig:apx_mv}, respectively.
The parameter settings of our experiments are described in Section~\ref{sec:experiments} of the main paper.

\begin{figure}[h!]
	\centering    
	\subfigure[$K=24, \alpha=0.05$] {    
		\includegraphics[width=0.45\columnwidth]{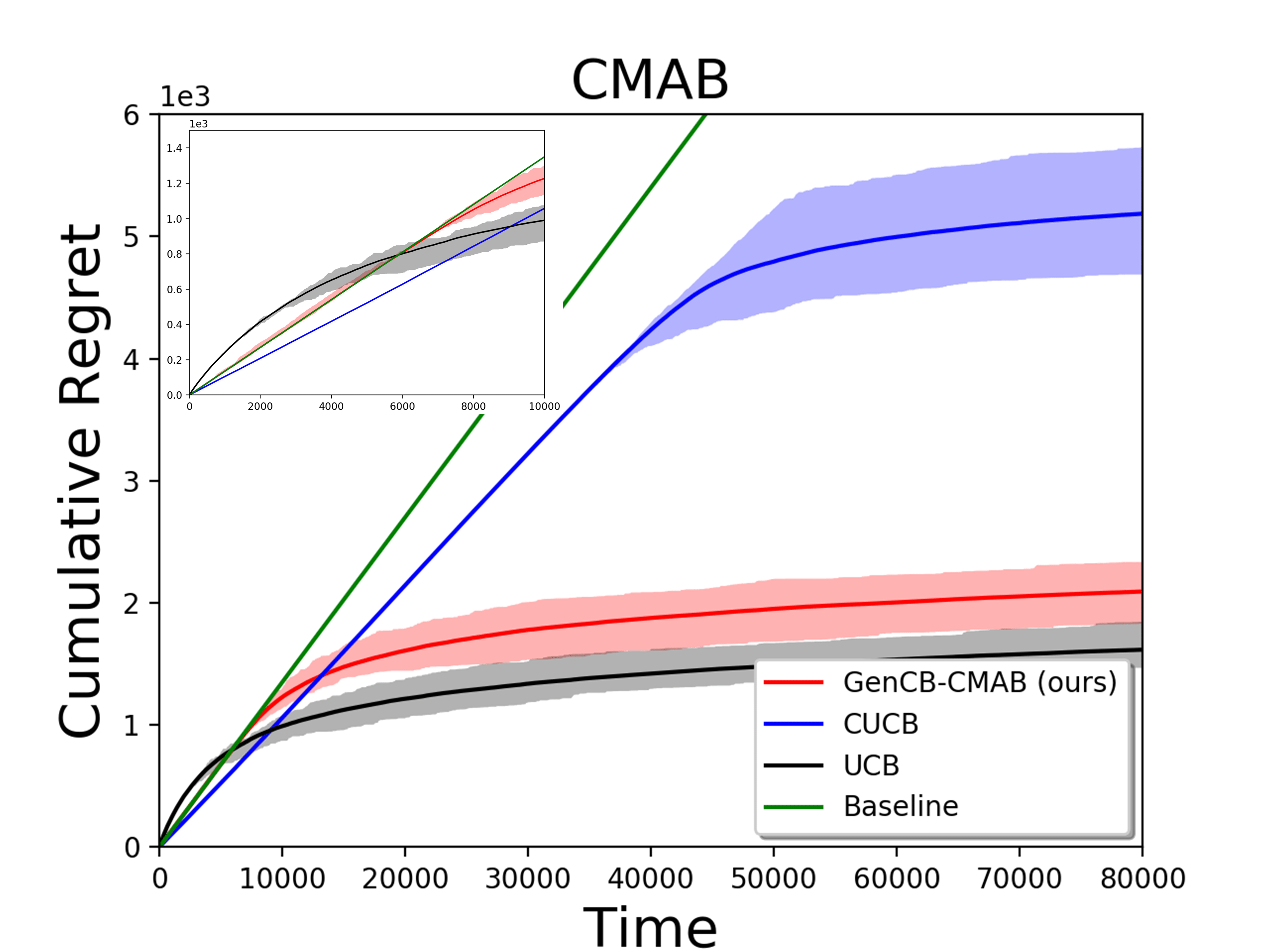}  
	}     
	\subfigure[$K=144, \alpha=0.05$] {     
		\includegraphics[width=0.45\columnwidth]{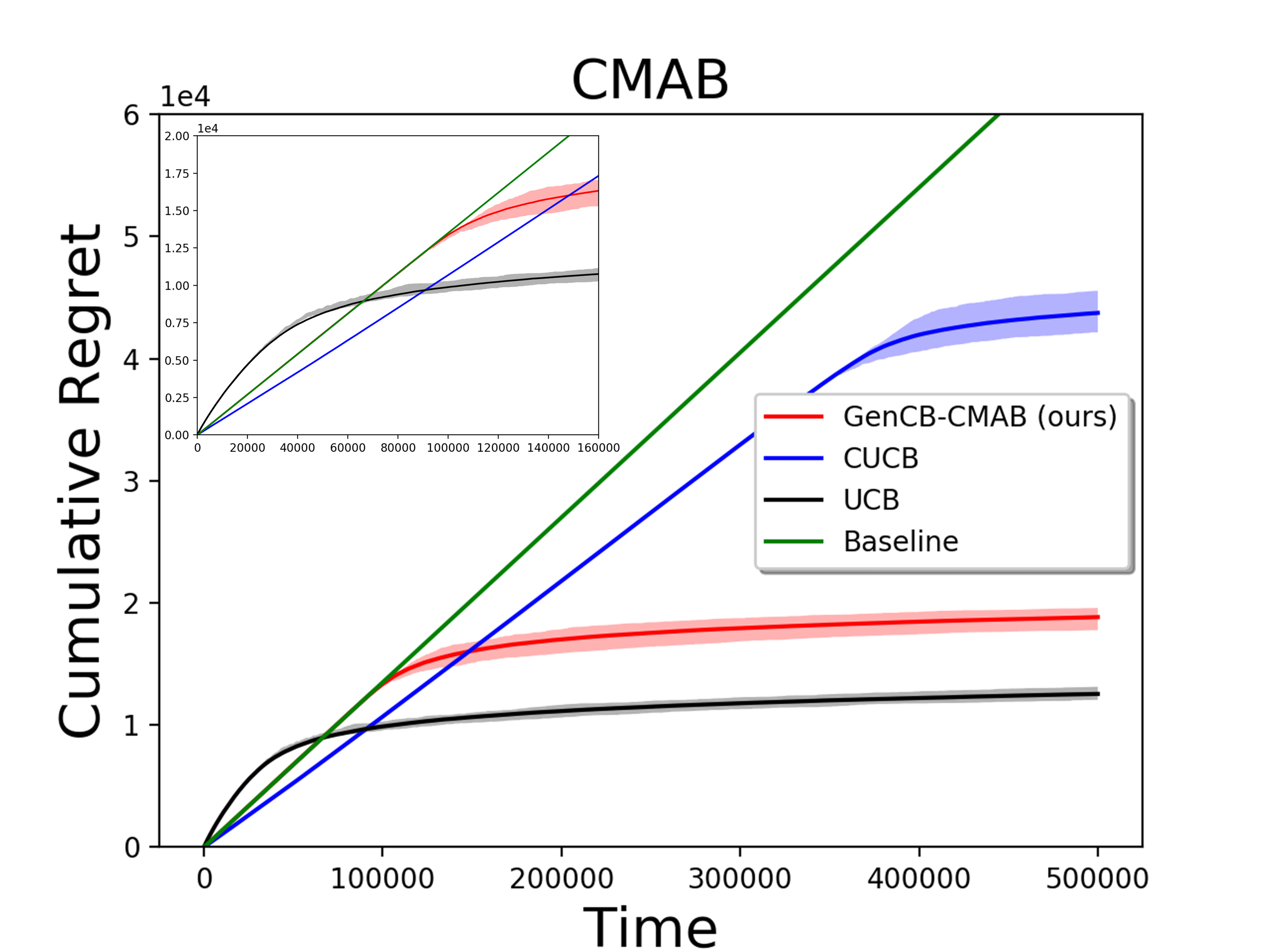}
	}   
	\subfigure[$K=72, \alpha=0.1$] {    
		\includegraphics[width=0.45\columnwidth]{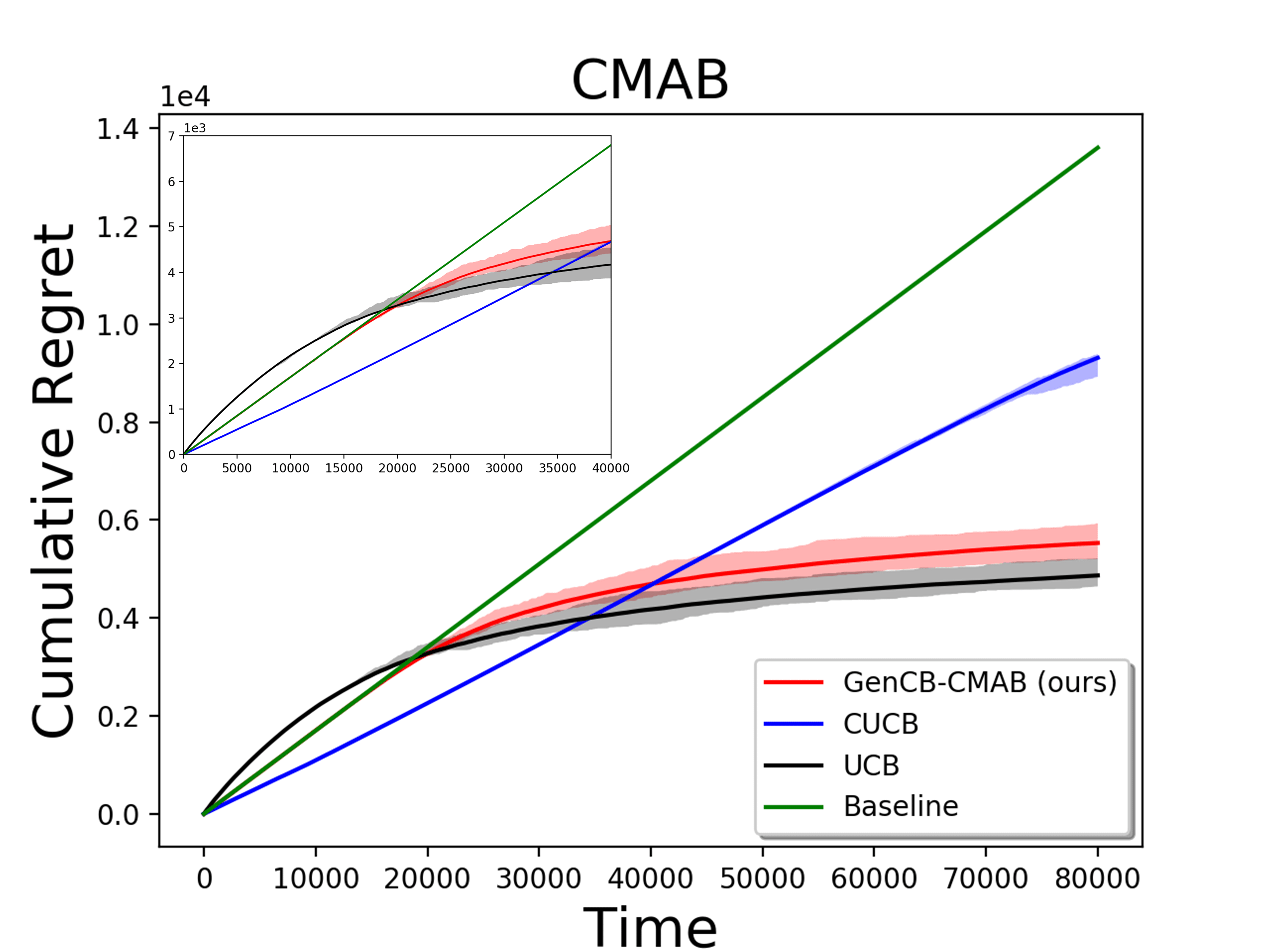}
	}   
	\subfigure[$K=72, \alpha=0.15$] {    
		\includegraphics[width=0.45\columnwidth]{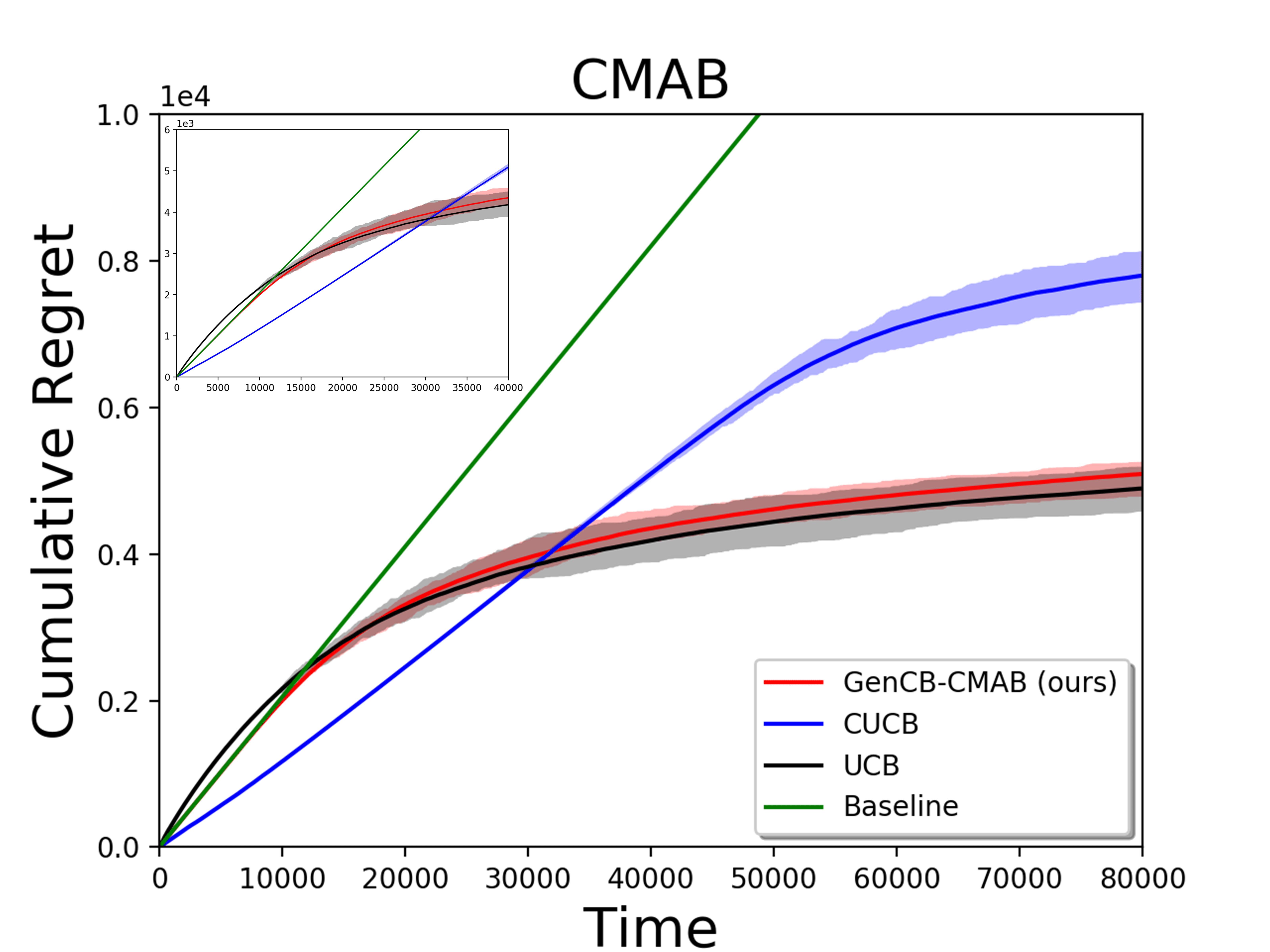}
	}   
	\caption{Experiments for CMAB.}  
	\label{fig:apx_cmab}   
\end{figure}

\begin{figure}[h!]
	\centering    
	\subfigure[$d=5, \alpha=0.01$] {    
		\includegraphics[width=0.45\columnwidth]{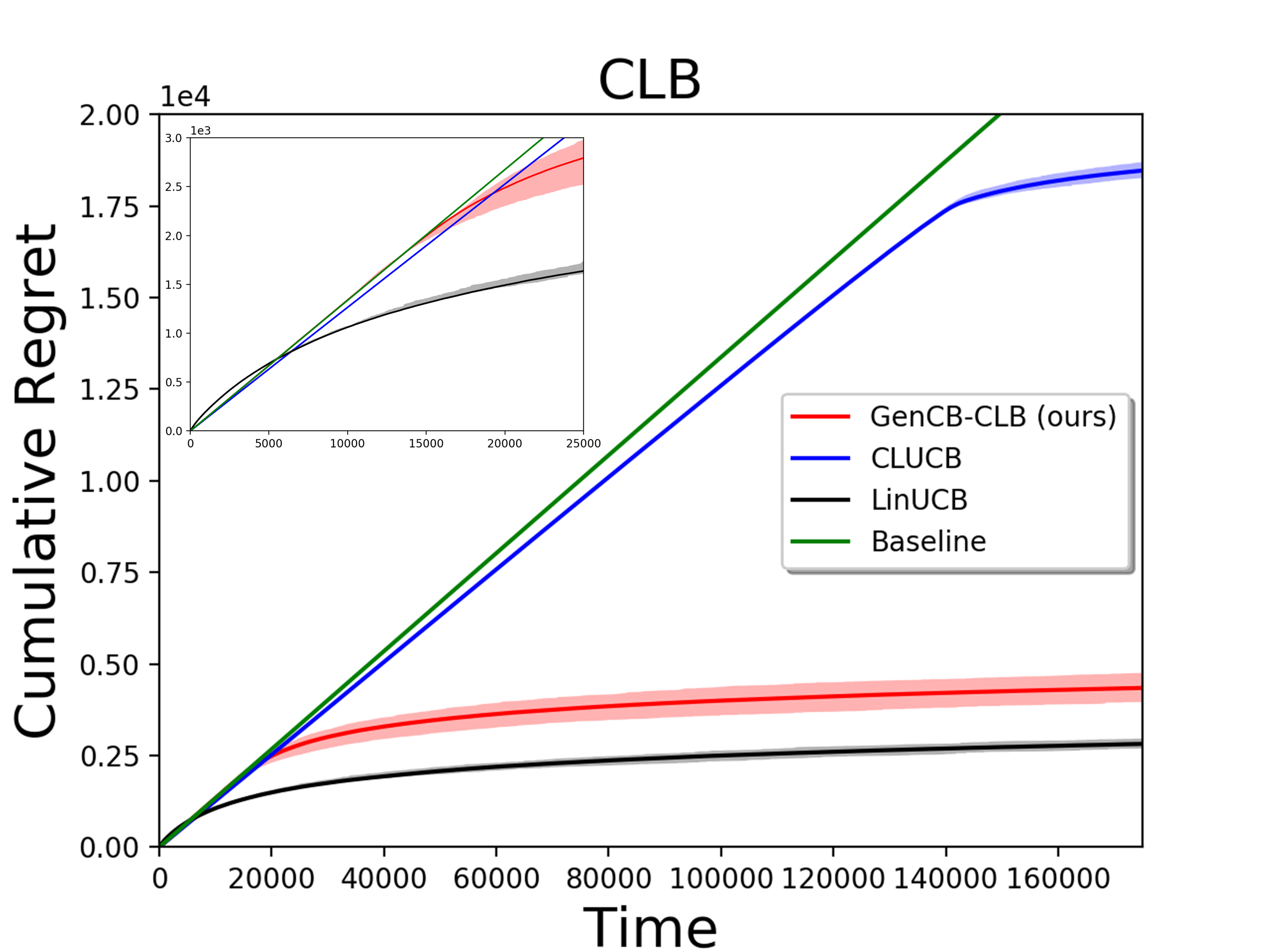}  
	}     
	\subfigure[$d=9, \alpha=0.01$] {     
		\includegraphics[width=0.45\columnwidth]{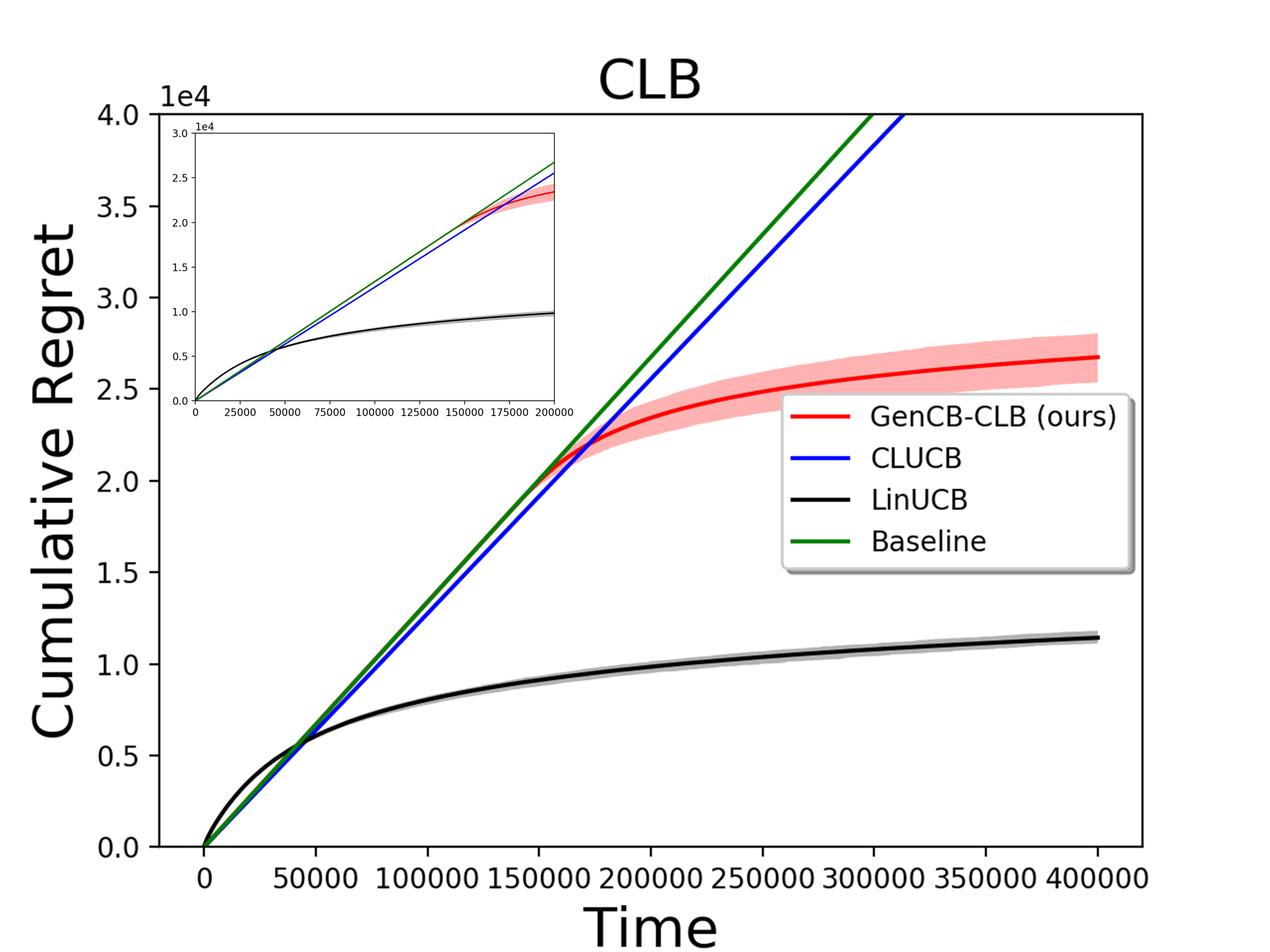}
	}   
	\subfigure[$d=5, \alpha=0.02$] {    
		\includegraphics[width=0.45\columnwidth]{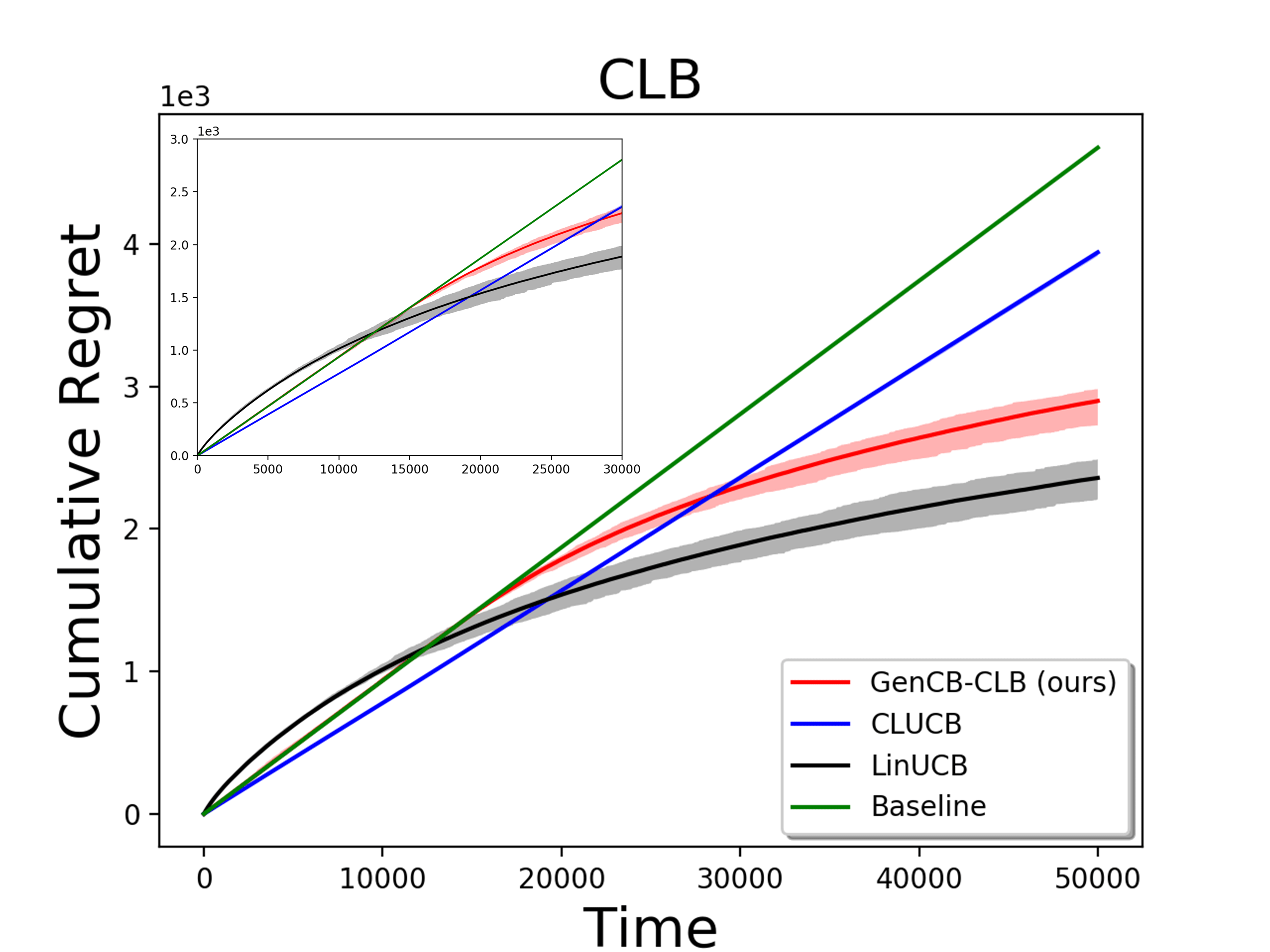}
	}   
	\subfigure[$d=5, \alpha=0.03$] {    
		\includegraphics[width=0.45\columnwidth]{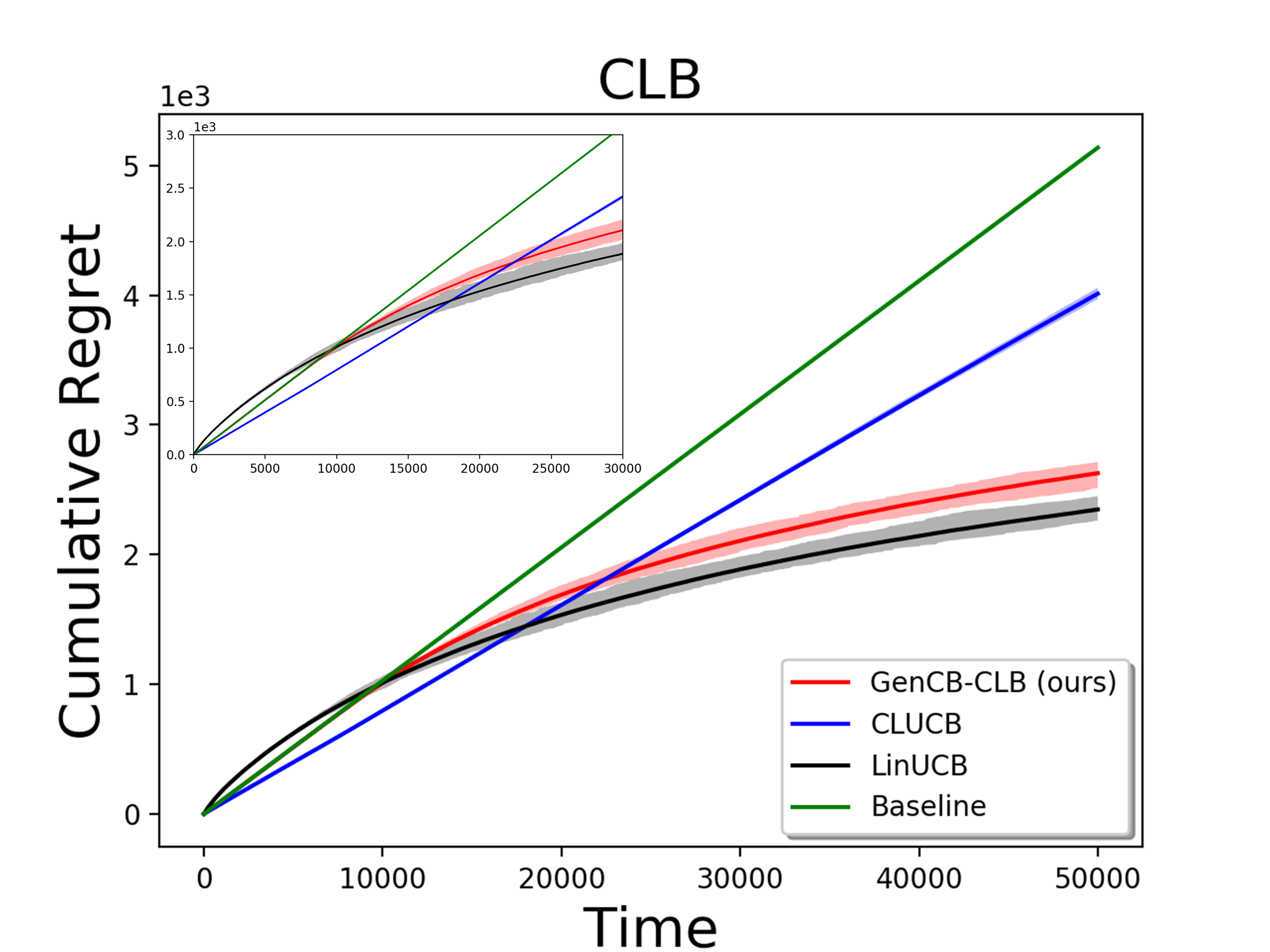}
	}   
	\caption{Experiments for CLB.}  
	\label{fig:apx_clb}   
\end{figure}

\begin{figure}[h!]
	\centering    
	\subfigure[$d=5, \alpha=0.01$] {    
		\includegraphics[width=0.45\columnwidth]{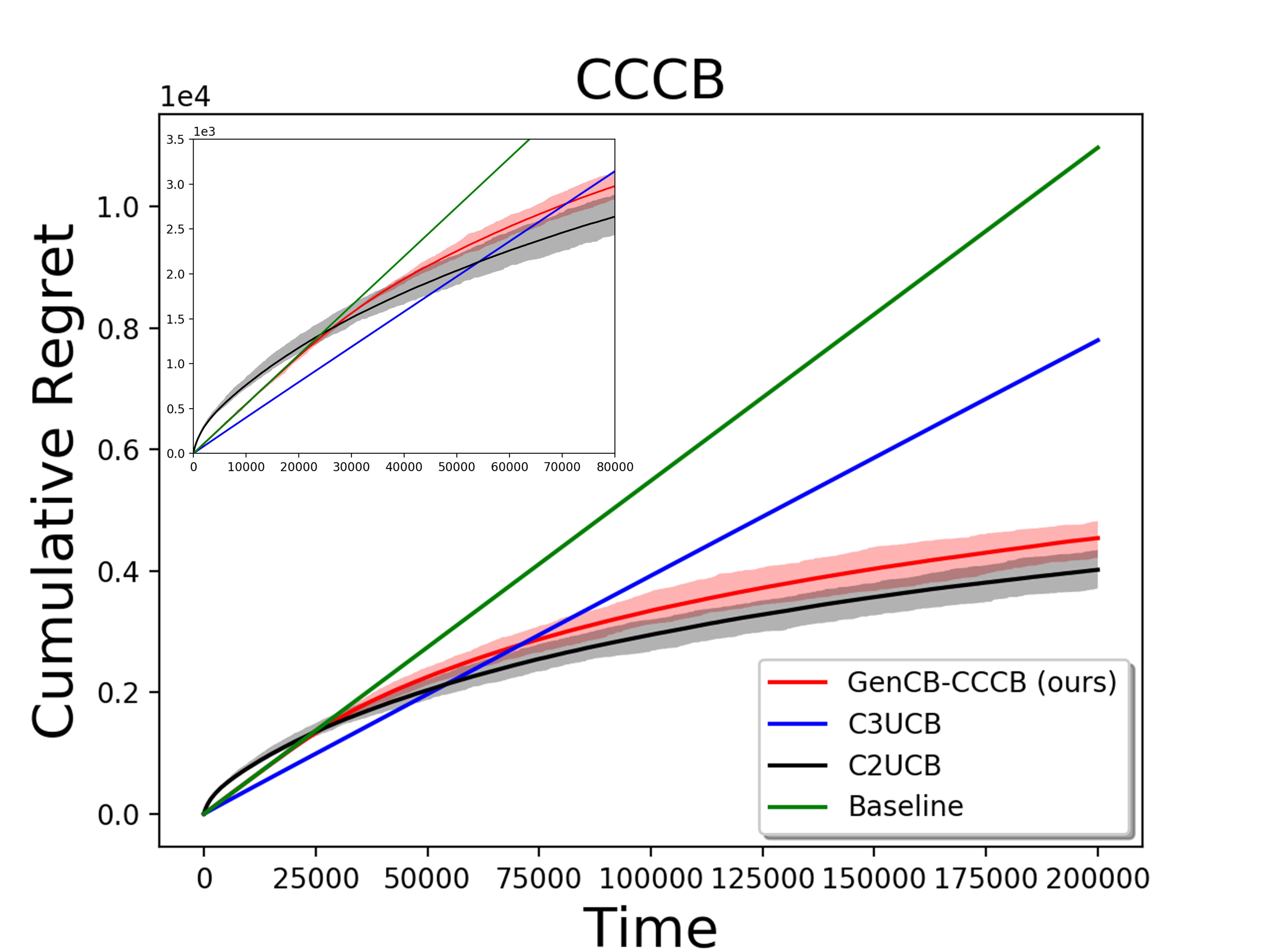}  
	}     
	\subfigure[$d=9, \alpha=0.01$] {     
		\includegraphics[width=0.45\columnwidth]{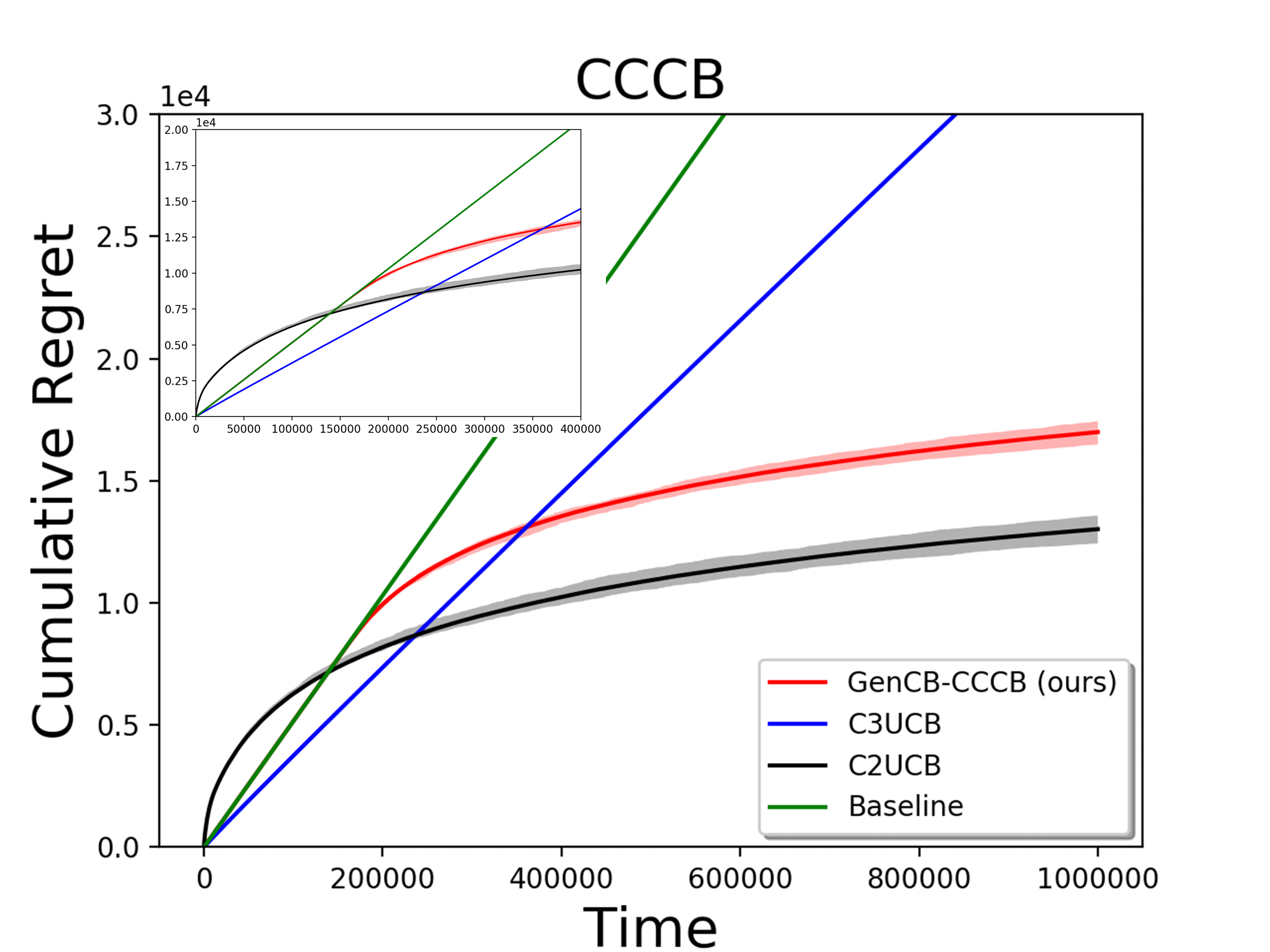}
	}
	\subfigure[$d=5, \alpha=0.02$] {     
		\includegraphics[width=0.45\columnwidth]{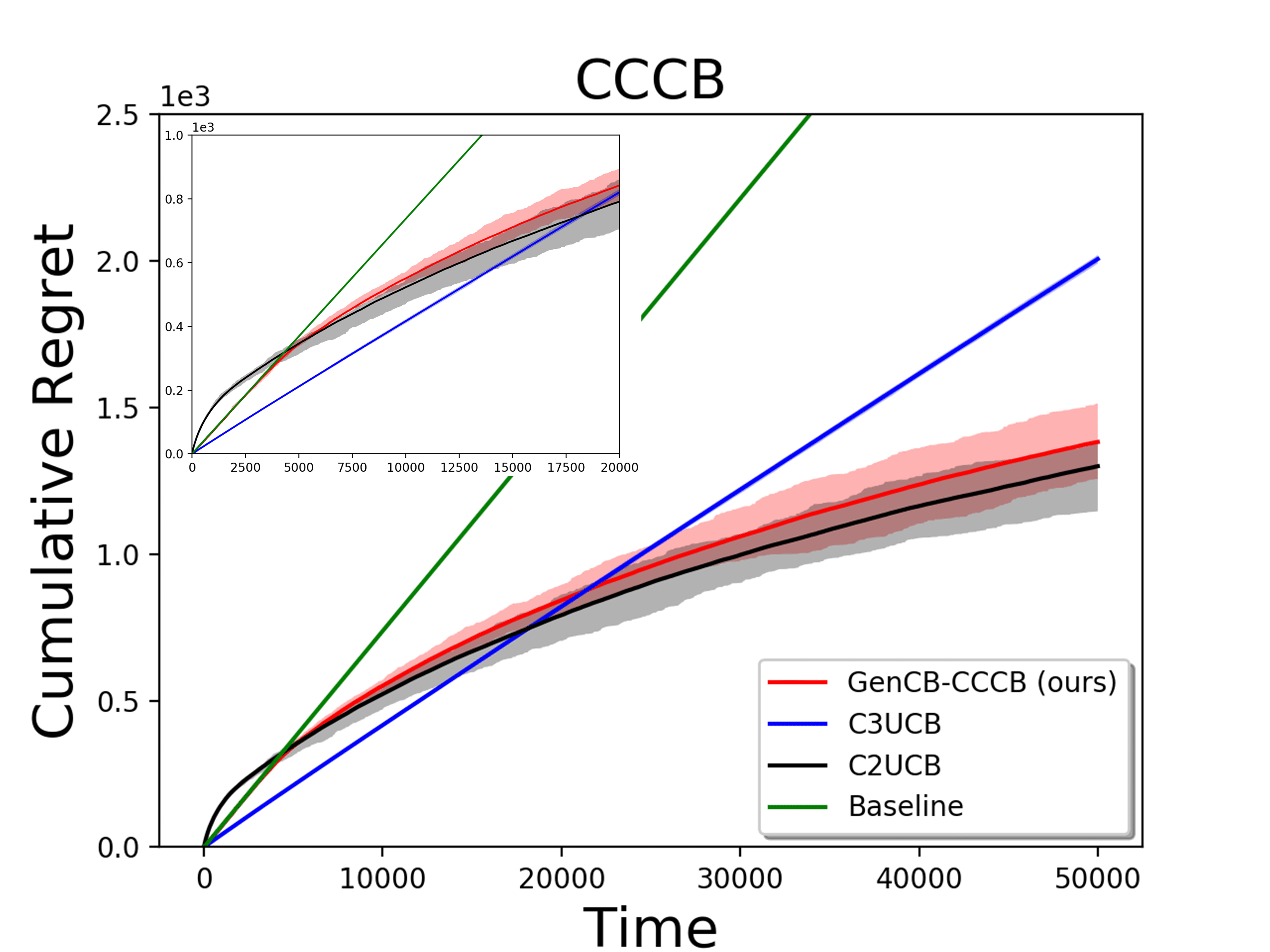}
	}   
	\subfigure[$d=5, \alpha=0.03$] {    
		\includegraphics[width=0.45\columnwidth]{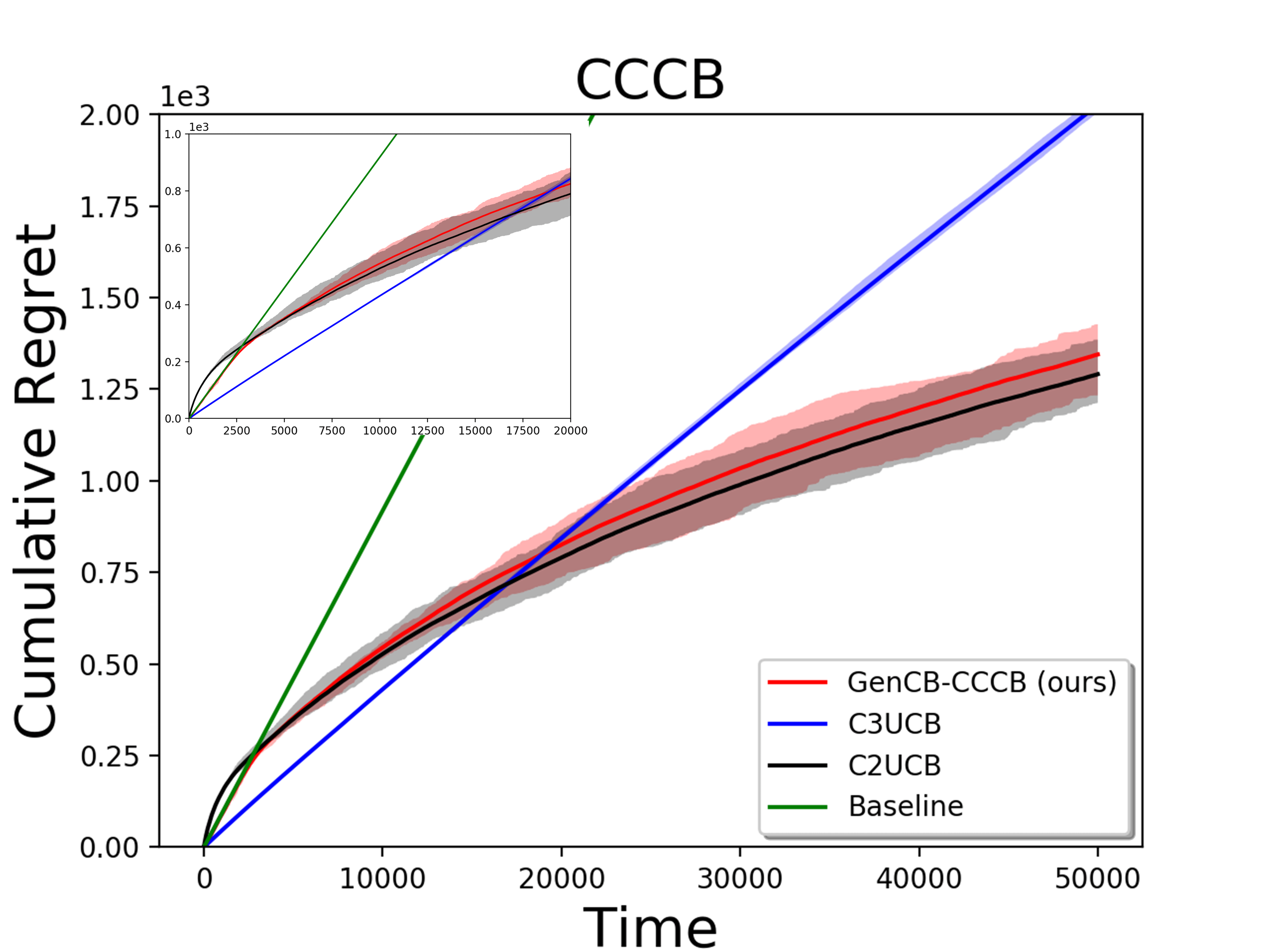}
	}   
	\caption{Experiments for CCCB.}  
	\label{fig:apx_cccb}   
\end{figure}

\begin{figure*}[h!]
	\centering    
	\subfigure[$K=24, \alpha=0.05, \rho=30$] {     
		\includegraphics[width=0.5\columnwidth]{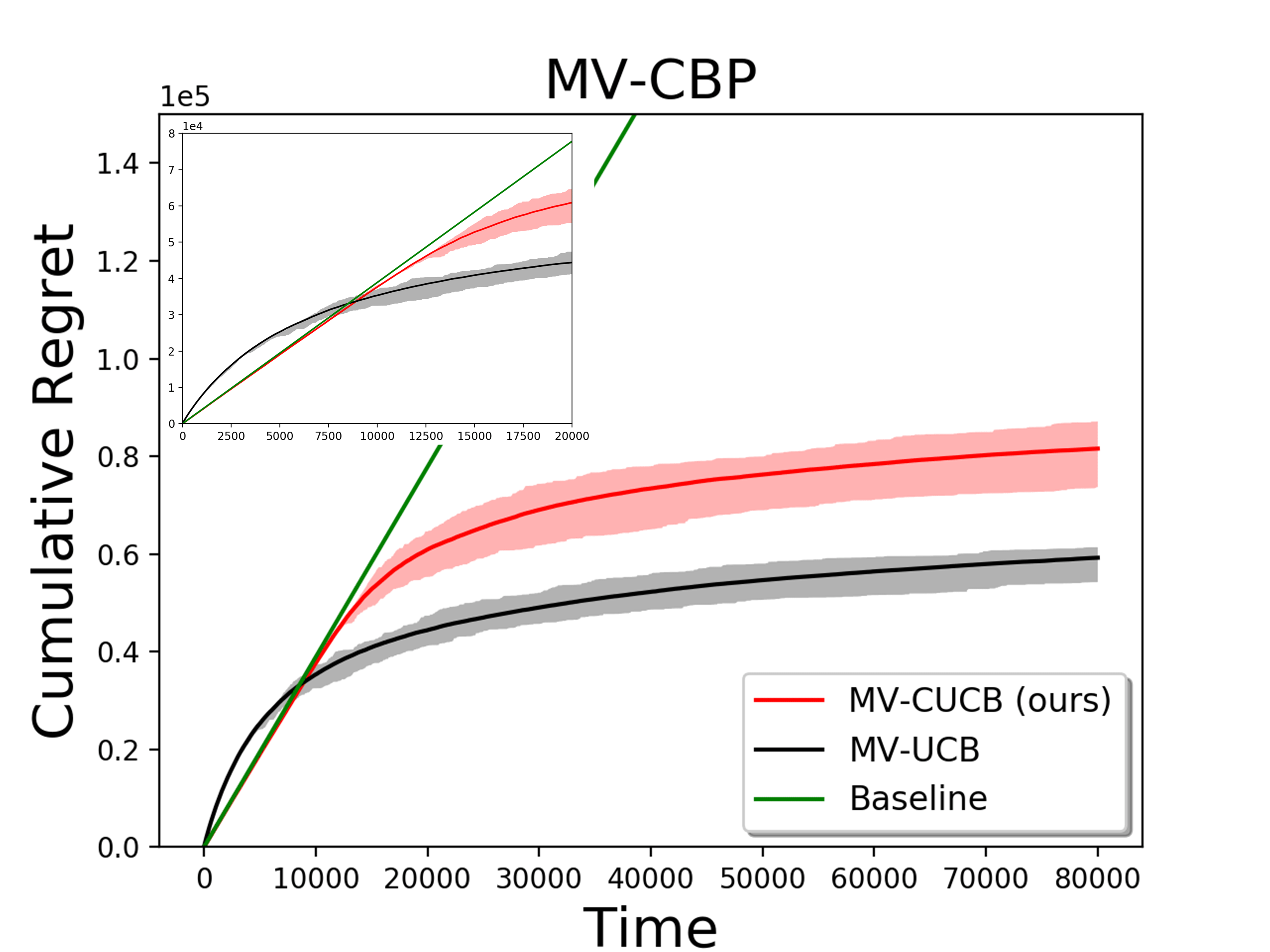}
	}    
	\subfigure[$K=24, \alpha=0.05, \rho=60$] {     
		\includegraphics[width=0.5\columnwidth]{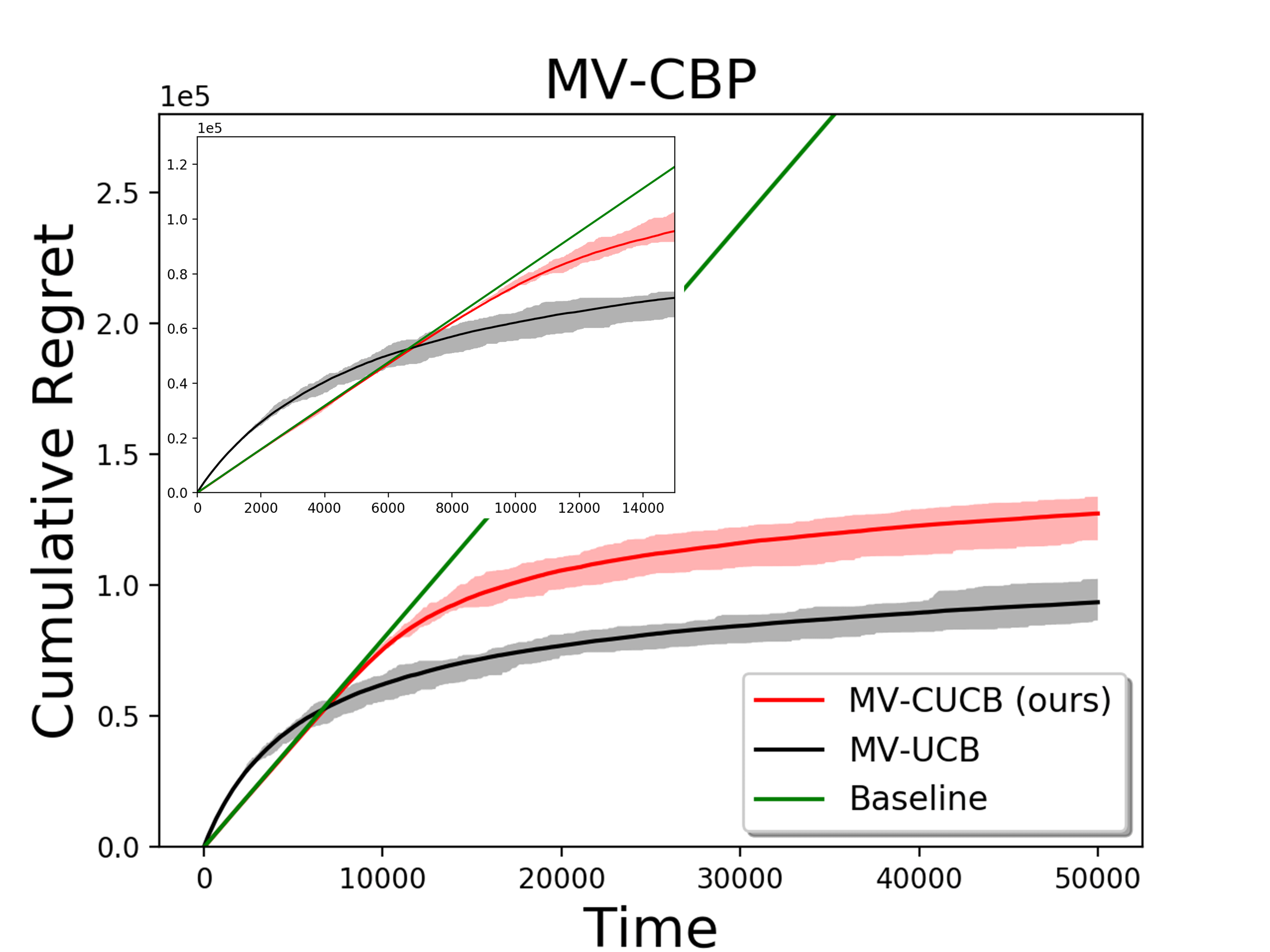}
	}   
	\subfigure[$K=72, \alpha=0.05, \rho=60$] {    
		\includegraphics[width=0.5\columnwidth]{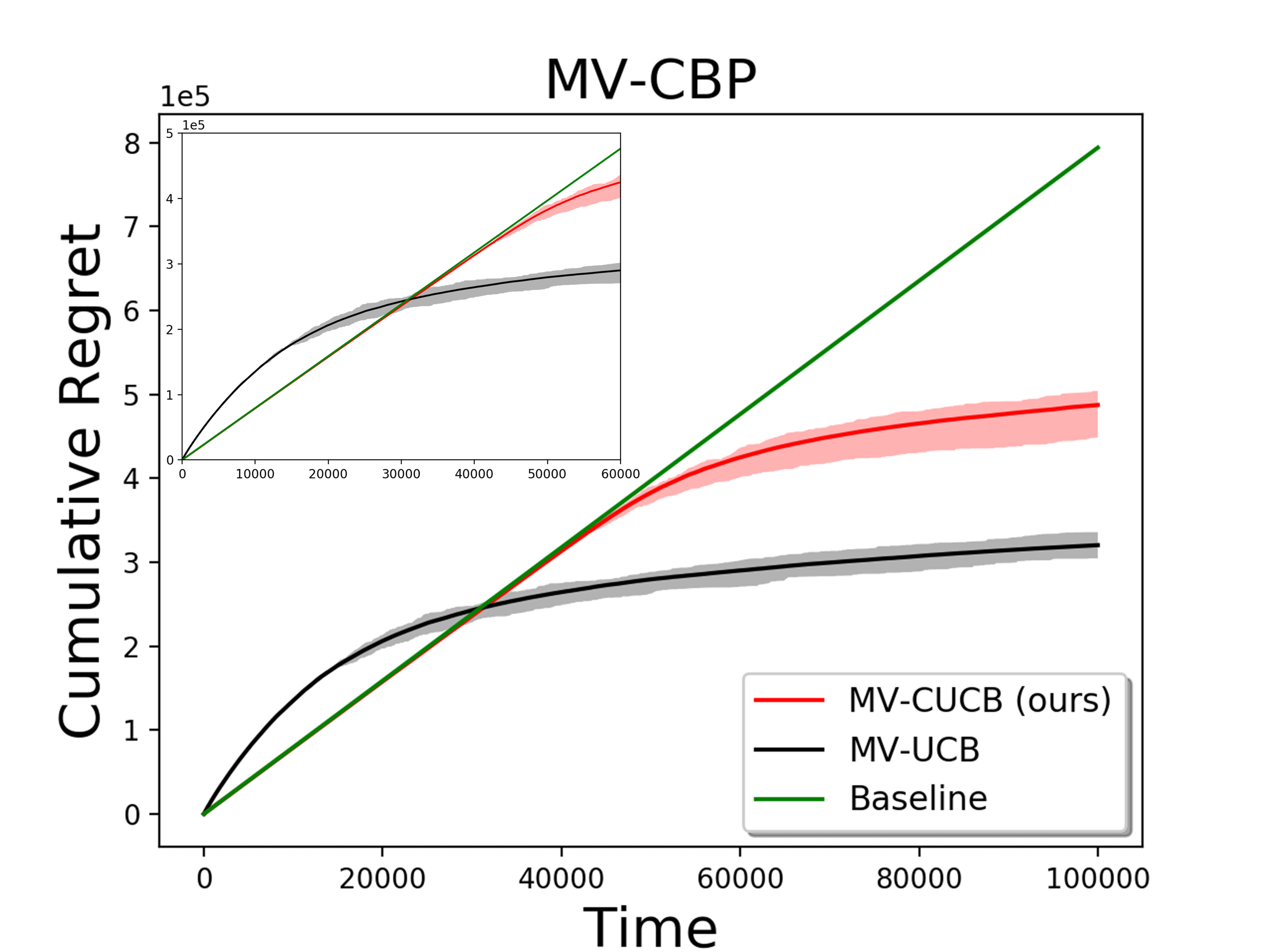}
	} 
	\subfigure[$K=144, \alpha=0.05, \rho=60$] {    
		\includegraphics[width=0.5\columnwidth]{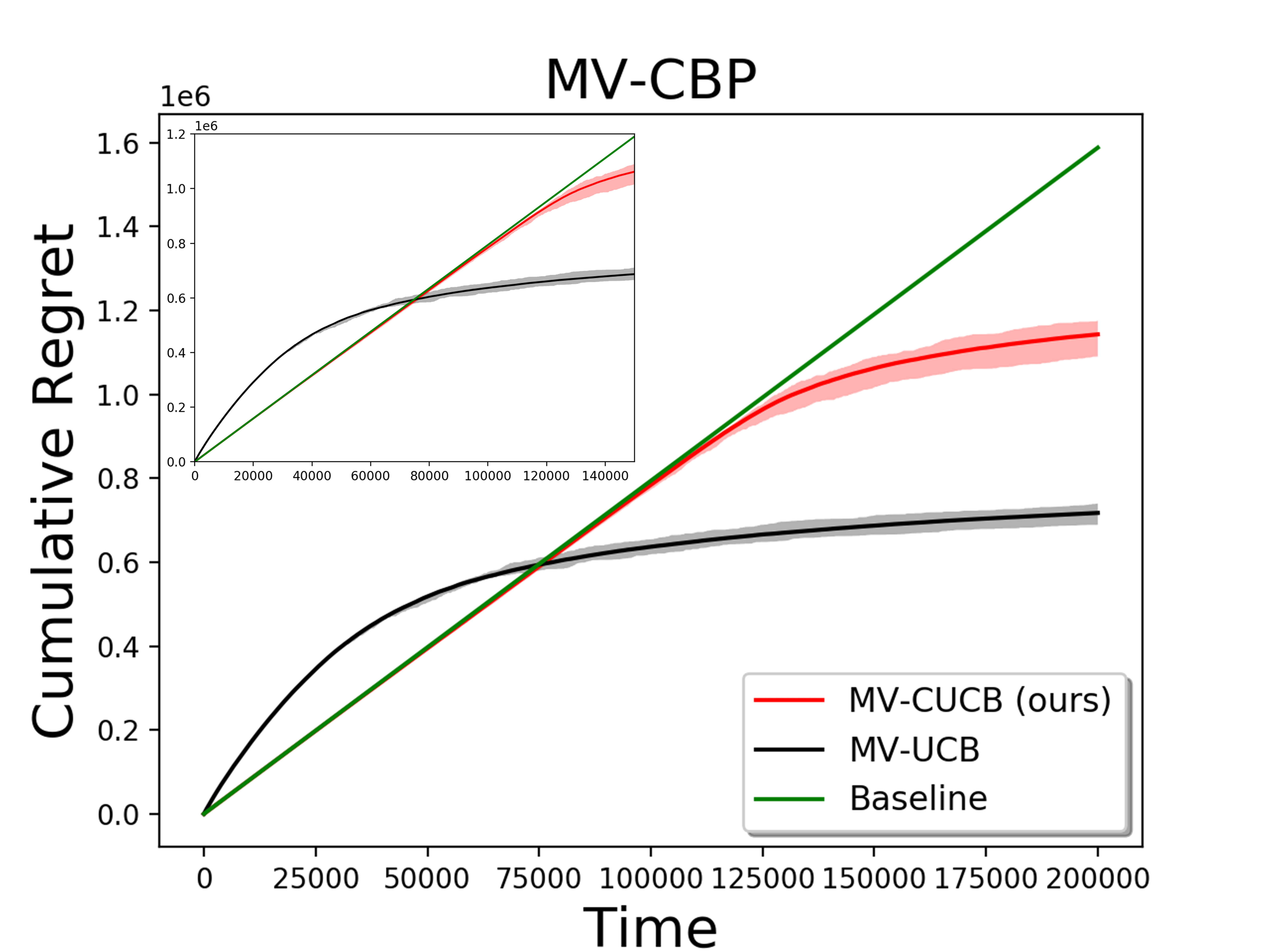}
	}   
	\subfigure[$K=72, \alpha=0.1, \rho=60$] {    
		\includegraphics[width=0.5\columnwidth]{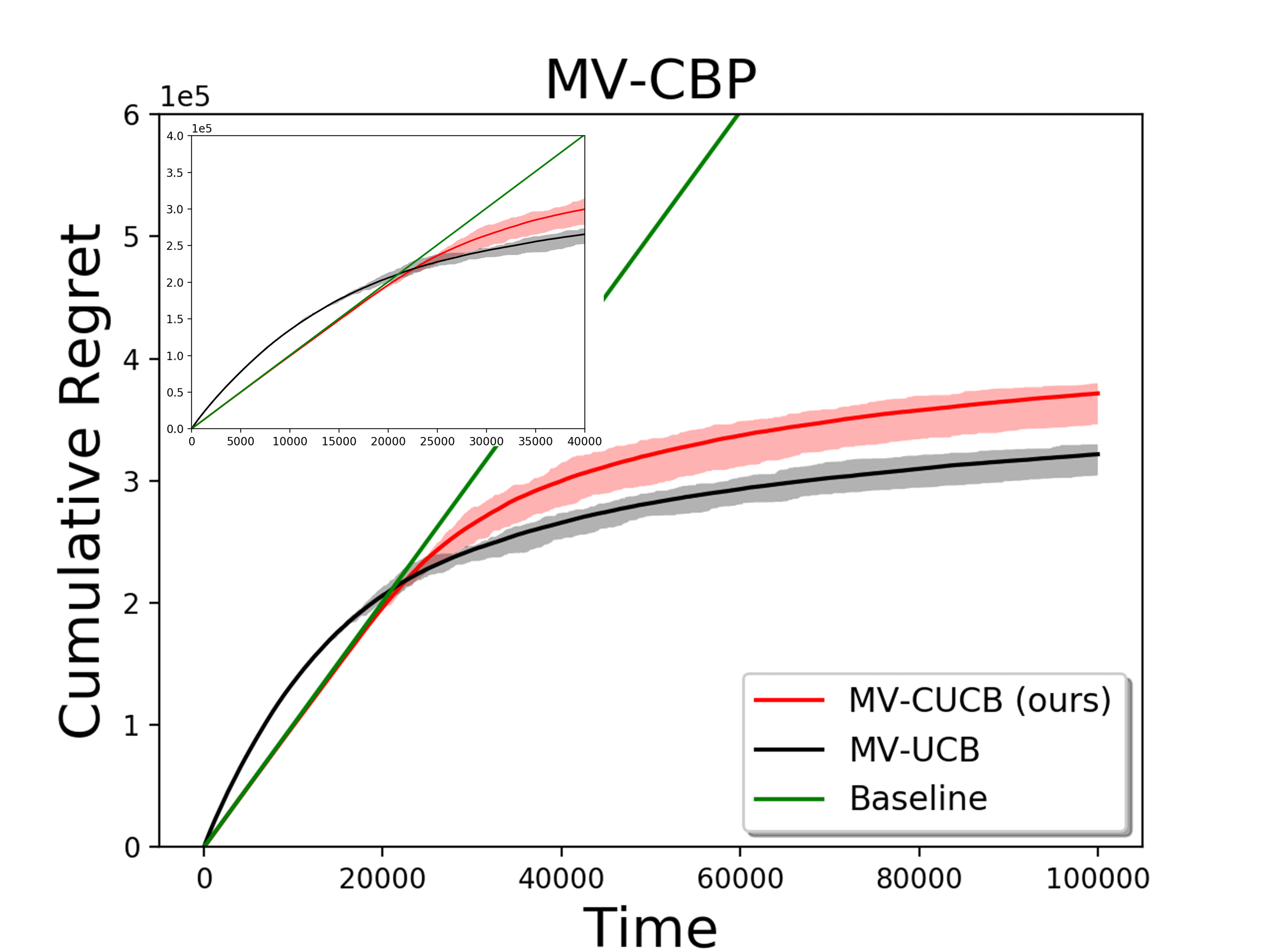}
	} 
	\subfigure[$K=72, \alpha=0.15, \rho=60$] {    
		\includegraphics[width=0.5\columnwidth]{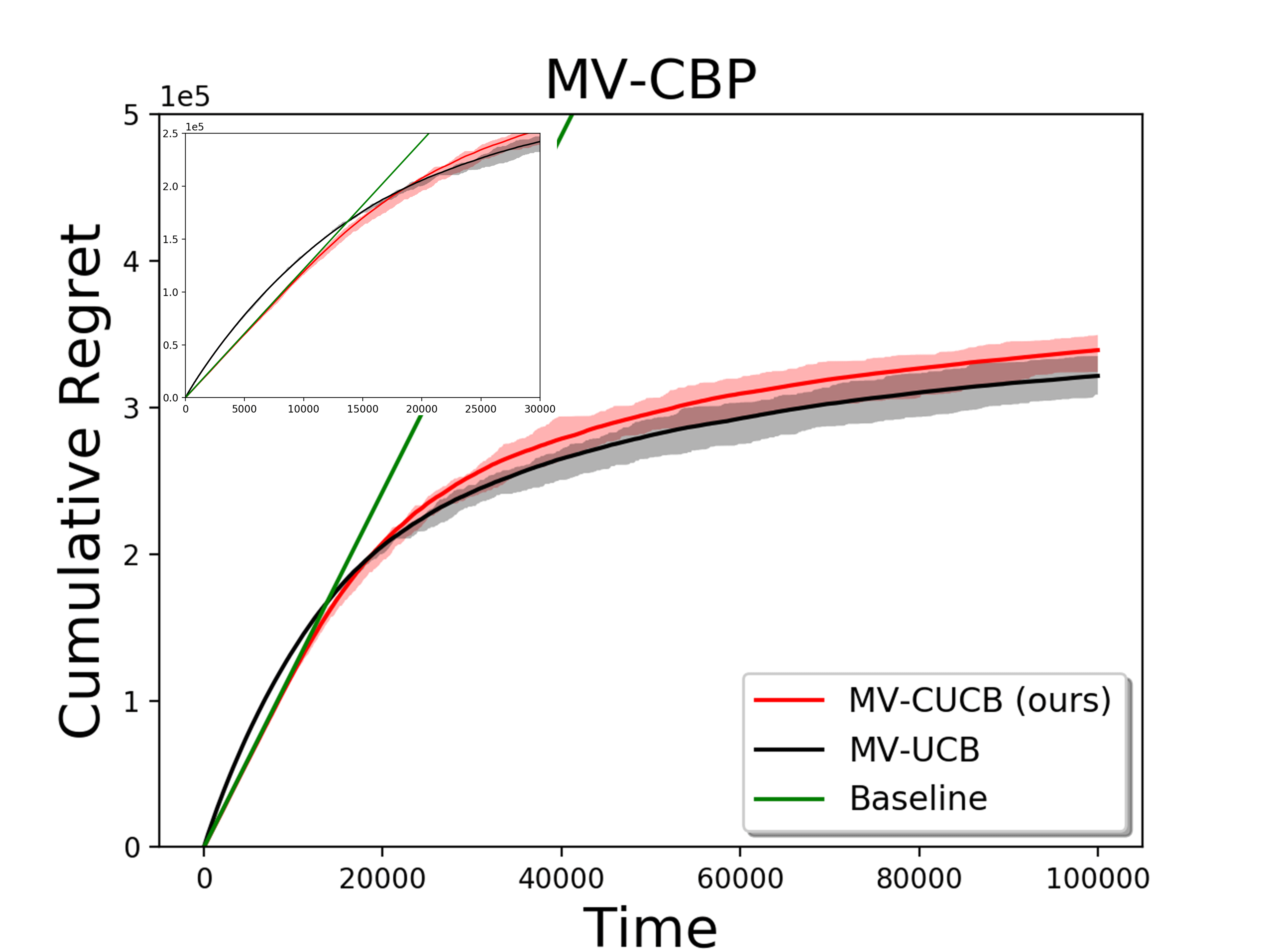}
	} 
	\caption{Experiments for MV-CBP.}  
	\label{fig:apx_mv}   
\end{figure*}

\section{Technical Tools} 
We present some technical tools (Lemmas~\ref{lemma:g_m_ub},\ref{lemma:g_m_ub_mv} and Facts~\ref{fact:clucb_lemma9},\ref{fact:clucb_lemma10}) below.
\begin{lemma} \label{lemma:g_m_ub}
	For $m \geq 2$, $c_1=2$, $c_2=2\mathbb{E}[N_0(\tau-1)] \geq 4$, $c_3=\Delta_0+ \alpha \mu_0 \in (0,1)$, $c_4=8H$ where $H\geq 2$, define function $g_1(m)=-c_3m+c_1\sqrt{m}\ln(c_2 m)+c_4\ln(m)$.
	Then, $g_1(m)$ can be upper bounded by
	$$
	g_1(m) \leq \frac{132 c_1 c_4 }{c_3}  \left [ \ln \left( \frac{10 \sqrt{c_2c_4} }{c_3} \right) \right]^2 .
	$$
\end{lemma}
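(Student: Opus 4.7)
My plan is to bound $g_1(m)$ by a threshold argument: choose a cutoff $M^*$ such that $g_1(m)\le 0$ for $m\ge M^*$, and bound $g_1(m)$ by its positive part evaluated at $M^*$ for $m<M^*$. First I would note that since $-c_3 m$ dominates the sublinear terms $c_1\sqrt{m}\ln(c_2 m)+c_4\ln m$ as $m\to\infty$, the supremum over $m\ge 2$ is attained and finite; this is what we need to bound.

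The cutoff $M^*$ would be chosen so that for $m\ge M^*$ both $\frac{c_3}{2}m\ge c_1\sqrt{m}\ln(c_2 m)$ and $\frac{c_3}{2}m\ge c_4\ln m$ hold, which together give $g_1(m)\le 0$. The first inequality is equivalent to $\sqrt{m}\ge (2c_1/c_3)\ln(c_2 m)$. Setting $y=\sqrt{m}$, $A=c_1/c_3$, $B=\sqrt{c_2}$, this reads $y\ge 2A\ln(By)$, a self-referential condition which is satisfied (via the standard bootstrap $y\ge 4A\ln(2AB)$, obtained by plugging a first-order upper bound on $\ln(By)$ back in) whenever $\sqrt{m}\ge (4c_1/c_3)\ln(4c_1\sqrt{c_2}/c_3)$. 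Similarly, the second inequality holds whenever $m\ge (4c_4/c_3)\ln(4c_4/c_3)$ by the same fixed-point lemma. Thus I would take
\[
M^* \;=\; C\cdot\max\!\Bigl(\tfrac{c_1^2}{c_3^2}\ln^2\!\bigl(\tfrac{c_1\sqrt{c_2}}{c_3}\bigr),\ \tfrac{c_4}{c_3}\ln\!\bigl(\tfrac{c_4}{c_3}\bigr)\Bigr)
\]
for an absolute constant $C$, and verify the two bootstrapped inequalities on $[M^*,\infty)$.

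For $m\le M^*$ I would simply drop the $-c_3 m$ term to get $g_1(m)\le c_1\sqrt{M^*}\ln(c_2 M^*)+c_4\ln M^*$, then plug in the expression for $M^*$. Using the hypotheses $c_1=2$, $c_4=8H\ge 16$ and $c_3\in(0,1)$, the term $c_1\sqrt{M^*}\ln(c_2 M^*)$ becomes (in the regime where the $c_4$-part of $M^*$ dominates) of order $\frac{c_1\sqrt{c_4}}{\sqrt{c_3}}\ln^{1/2}(\cdot)\cdot\ln(c_2 M^*)$, while $c_4\ln M^*$ is of order $c_4\ln(\frac{c_4}{c_3})$. Both can be absorbed into $\frac{c_1 c_4}{c_3}\bigl[\ln(\tfrac{10\sqrt{c_2 c_4}}{c_3})\bigr]^2$ after repeatedly invoking $c_4\ge 16$, $c_1/c_3\ge 2$ and the elementary bound $\ln x\cdot\ln y\le\ln^2(xy)$.

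The main obstacle is the bookkeeping on constants and logs: each bootstrap step introduces nested logarithms ($\ln\ln$ terms), and collapsing $\max(\ln(\sqrt{c_2}c_1/c_3),\ln(c_4/c_3))$ plus the extra double-log contributions into the single clean expression $\ln(\frac{10\sqrt{c_2 c_4}}{c_3})$ — with a single shared argument under the square — requires careful use of the hypotheses $c_2\ge 4$ and $c_4\ge 16$ to ensure the "10" inside the log and the factor "132" outside are large enough to dominate all the absolute constants accumulated along the way. I would expect to do a final numerical check pinning down that $C=132$ suffices by verifying the worst case of the two regimes for $M^*$ separately.
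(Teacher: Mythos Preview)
Your threshold-and-bootstrap plan is sound and will yield the stated bound; it is, however, a genuinely different route from the paper's. The paper does not split $c_3m$ in half or solve two separate self-referential inequalities. Instead it computes $g_1'(m)$ explicitly, exhibits the two test points $\tilde m_1=c_4/c_3$ and $\tilde m_2=\tfrac{100\,c_4}{c_3^2}[\ln(c_2c_4/c_3^2)]^2$, checks $g_1'(\tilde m_1)>0$ and $g_1'(\tilde m_2)<0$, and (implicitly using that $g_1'$ is strictly decreasing on $m\ge 2$, hence $g_1$ is unimodal) concludes the maximizer lies in $(\tilde m_1,\tilde m_2)$. It then bounds $g_1(m^*)$ by evaluating the two increasing positive terms at the single point $\tilde m_2$ and simplifies directly. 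Your approach is more elementary (no derivative, no unimodality), and more modular because it isolates each sublinear term, but this modularity is exactly what creates the extra bookkeeping you flag: you must merge the two cutoffs $\tfrac{c_1^2}{c_3^2}\ln^2(\cdot)$ and $\tfrac{c_4}{c_3}\ln(\cdot)$ into one expression with the common argument $\tfrac{10\sqrt{c_2c_4}}{c_3}$, whereas the paper's single threshold $\tilde m_2$ already has $c_2,c_4,c_3$ together inside one logarithm, making the final constant extraction short. One small slip to fix when you carry this out: with $y=\sqrt m$ and $B=\sqrt{c_2}$ you have $\ln(c_2m)=2\ln(By)$, so the reduced inequality is $y\ge 4A\ln(By)$, not $2A\ln(By)$; the bootstrap then gives $\sqrt m\ge \tfrac{8c_1}{c_3}\ln(\tfrac{4c_1\sqrt{c_2}}{c_3})$ rather than the $4c_1/c_3$ you wrote. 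This only shifts constants and is absorbed by the $132$.
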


\begin{proof}
	Taking the derivative of $g_1(m)$, we obtain
	$$g_1'(m)=-c_3 + \frac{c_1 \ln(c_2 m) +2c_1}{2 \sqrt{m}} + \frac{c_4}{m} $$
	Let $\tilde{m}_1=\frac{c_4}{c_3}, \tilde{m}_2=\frac{100c_4 [\ln(c_2 c_4 / c_3^2)]^2}{c_3^2}$. Then, we have
	$$ g'_1(\tilde{m}_1)= \frac{c_1 \ln(c_2 \tilde{m}_1) +2c_1}{2 \sqrt{\tilde{m}_1}} >0 $$
	and
	\begin{align*}
		g'_1(\tilde{m}_2)= & -c_3 + \frac{c_1 \ln( \frac{100c_2c_4 [\ln(c_2 c_4 / c_3^2)]^2}{c_3^2}) +2c_1}{ 10 \sqrt{c_4} \ln(c_2 c_4 / c_3^2)} \cdot \frac{c_3}{2} \\& + \frac{c_3}{100 [\ln(c_2 c_4 / c_3^2)]^2} \cdot c_3 .
	\end{align*}
	Since 
	\begin{align*}
	& c_1 \ln( \frac{100c_2c_4 [\ln(c_2 c_4 / c_3^2)]^2}{c_3^2}) +2c_1
	\\
	 = & c_1 \ln( \frac{100c_2 c_4 }{c_3^2}) +  2 c_1 \ln(  \ln(\frac{c_2 c_4}{c_3^2})) +2c_1
	\\
	\leq & c_1 \ln( \frac{100c_2c_4 }{c_3^2}) +  2 c_1 \ln( \frac{c_2 c_4}{c_3^2}  )  +2c_1
	\\
	\leq & 3 c_1 \ln( \frac{100c_2c_4 }{c_3^2}) +  2c_1
	\\
	= & 6 \ln( \frac{c_2c_4 }{c_3^2}) + 6 \ln(100) + 4 
	\\
	< & 10 \sqrt{c_4} \ln(\frac{c_2c_4 }{c_3^2}) ,
	\end{align*}
	we have
	$$
	\frac{c_1 \ln( \frac{100c_2c_4 [\ln(c_2 c_4 / c_3^2)]^2}{c_3^2}) +2c_1}{ 10 \sqrt{c_4} \ln(c_2 c_4 / c_3^2)} < 1.
	$$
	In addition, it is clear that $\frac{c_3}{100 [\ln(c_2 c_4 / c_3^2)]^2} <\frac{1}{2} $. Thus, we have  
	\begin{align*}
		g'_1(\tilde{m}_2)  = & -c_3 + \frac{c_1 \ln( \frac{100c_2c_4 [\ln(c_2 c_4 / c_3^2)]^2}{c_3^2}) +2c_1}{ 10 \sqrt{c_4} \ln(c_2 c_4 / c_3^2)} \cdot \frac{c_3}{2} \\& + \frac{c_3}{100 [\ln(c_2 c_4 / c_3^2)]^2} \cdot c_3 <0 .
	\end{align*}
	Thus,
	\begin{align*}
	g_1(m) \leq & -c_3\tilde{m}_1+c_1\sqrt{\tilde{m}_2}\ln(c_2 \tilde{m}_2)+c_4\ln(\tilde{m}_2)
	\\
	\leq & ( c_1\sqrt{\tilde{m}_2}+c_4 )\ln(c_2 \tilde{m}_2)
	\\
	\leq & ( \frac{10 c_1 \sqrt{c_4}}{c_3} \ln(\frac{c_2c_4 }{c_3^2}) +c_4 ) \cdot \\& \ln( \frac{100c_2c_4 [\ln(c_2 c_4 / c_3^2)]^2}{c_3^2})
	\\
	\leq &  \frac{10 c_1 \sqrt{c_4} +c_4 }{c_3}  \cdot \ln(\frac{c_2c_4 }{c_3^2})  \cdot 3 \ln( \frac{100c_2c_4 }{c_3^2})
	\\
	\leq &  \frac{33 c_1 c_4 }{c_3}  \left [ \ln \left( \frac{100c_2c_4 }{c_3^2} \right) \right]^2 
	\\
	= &  \frac{132 c_1 c_4 }{c_3}  \left [ \ln \left( \frac{10 \sqrt{c_2c_4} }{c_3} \right) \right]^2 .
	\end{align*}
\end{proof}

\begin{lemma} \label{lemma:g_m_ub_mv}
	For $m\geq 2$, $c_1=2(5+\rho)\sqrt{ 2K }  $, $c_2=6 K  \mathbb{E}[N_0(\tau -1)]   $, $c_3=\Delta_0 + \alpha {\mv}_0 \in (2,  \rho)$, $c_4=8 \sqrt{2} K+12(5+\rho)^2 (H^\mv_1+4H^\mv_2) > 8 \sqrt{2} K+12(5+\rho)(K-1)+48(K-1)$ where $\rho>\frac{2}{\alpha \mu_0}>2$, $c_4>3c_1$, $c_4>12c_3$, define function $g_2(m)=-c_3m+c_1\sqrt{m}\ln(c_2 m)+c_4\ln(m)$.
	Then, $g_2(m)$ can be upper bounded by
	$$
	g_2(m) \leq \frac{48 c_1 c_4 }{c_3}   \left [ \ln( \frac{3 \sqrt{c_2} c_4}{c_3}) \right]^2  .
	$$
\end{lemma}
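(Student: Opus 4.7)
The plan is to replicate the strategy used for Lemma 1: exhibit two test points $\tilde m_1 < \tilde m_2$ such that $g_2'(\tilde m_1) > 0$ and $g_2'(\tilde m_2) < 0$, so that $g_2$ attains its (interior) maximum at some $m^\ast \in (\tilde m_1, \tilde m_2)$, and then bound $g_2(m^\ast)$ by evaluating the strictly decreasing piece $-c_3 m$ at $\tilde m_1$ and the strictly increasing pieces $c_1 \sqrt{m}\ln(c_2 m)$ and $c_4 \ln m$ at $\tilde m_2$. Concretely, I would differentiate,
\[
g_2'(m) \;=\; -c_3 \;+\; \frac{c_1\bigl(\ln(c_2 m) + 2\bigr)}{2\sqrt{m}} \;+\; \frac{c_4}{m},
\]
and take $\tilde m_1 = c_4/c_3$ (so that $c_4/m$ exactly cancels $-c_3$, leaving the strictly positive $c_1$ term) and $\tilde m_2 = \frac{9 c_4^2}{c_3^2} L^2$, where $L \triangleq \ln\bigl(3\sqrt{c_2}\,c_4/c_3\bigr)$. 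This choice of $\tilde m_2$ is engineered so that $\sqrt{\tilde m_2} = (3 c_4/c_3)\,L$ produces exactly the log-argument appearing in the target bound.

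To verify $g_2'(\tilde m_2) < 0$, I would show each of the last two terms of $g_2'(\tilde m_2)$ is below $c_3/3$. The tail term $c_4/\tilde m_2 = c_3^2/(9 c_4 L^2)$ is easily $< c_3/3$ using $c_4 > 12 c_3$ and $L > 1$ (the latter follows from $c_2 \geq 6$, $c_4/c_3 > 12$, which give $3\sqrt{c_2}\,c_4/c_3 > 72 > e$). For the middle term, $\ln(c_2 \tilde m_2)$ expands to $2L + 2\ln L$, which is at most $3L$ once we use $\ln L \leq L/2$ for $L \geq e$; combined with $c_1 < c_4/3$ this yields $\frac{c_1(\ln(c_2 \tilde m_2) + 2)}{2\sqrt{\tilde m_2}} \leq \frac{5 c_1 c_3}{6 c_4} \leq \frac{5 c_3}{18} < c_3/3$. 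Hence $g_2'(\tilde m_2) \leq -c_3/3 < 0$.

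Having sandwiched $m^\ast$, I would substitute to obtain $g_2(m^\ast) \leq -c_4 + c_1 \sqrt{\tilde m_2}\,\ln(c_2\tilde m_2) + c_4 \ln(\tilde m_2)$. Using $\sqrt{\tilde m_2} = (3 c_4/c_3) L$ and $\ln(c_2 \tilde m_2) \leq 3L$ from the previous step, the middle term contributes at most $\frac{9 c_1 c_4}{c_3} L^2$, while the last term is at most $3 c_4 L$. To fold $3 c_4 L$ into the main term I would use $c_3 \leq 13 c_1 L$, which follows from $L > 1$ and the chain $c_1 \geq 10\sqrt{2K} > c_3$ coming from $c_3 < \rho$ and $c_1 = 2(5+\rho)\sqrt{2K}$; this gives $3 c_4 L \leq \frac{39 c_1 c_4 L^2}{c_3}$, so $g_2(m^\ast) \leq \frac{(9+39) c_1 c_4}{c_3} L^2 = \frac{48 c_1 c_4}{c_3} L^2$, exactly the claimed bound. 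The main obstacle I anticipate is threading the three hypotheses $c_4 > 3 c_1$, $c_4 > 12 c_3$, $c_3 > 2$ at precisely the right spots (the derivative check at $\tilde m_2$, absorbing residual $c_4 L$ terms, and the $\log\log$ collapse that makes the constant land on $48$ rather than a nearby value); if the prefactor in $\tilde m_2$ is misset, the log-argument constant $3$ shifts, so the choice of $\tilde m_2$ has to be tuned to match the target simultaneously in coefficient and in log-argument.
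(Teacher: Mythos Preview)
Your proposal is correct and follows essentially the same route as the paper: derivative test with $\tilde m_1 = c_4/c_3$ and a second point of the form $\tilde m_2 \propto (c_4/c_3)^2 (\log)^2$, then bounding the monotone pieces at the endpoints. The only cosmetic difference is that the paper sets $\tilde m_2 = \tfrac{9c_4^2}{c_3^2}\bigl[\ln(c_2 c_4^2/c_3^2)\bigr]^2$ (without the factor $3$ inside the log) and only converts to $\bigl[\ln(3\sqrt{c_2}\,c_4/c_3)\bigr]^2$ at the very end via $\ln(9c_2c_4^2/c_3^2)=2\ln(3\sqrt{c_2}\,c_4/c_3)$, whereas you build $L=\ln(3\sqrt{c_2}\,c_4/c_3)$ into $\tilde m_2$ from the start; the constant-tracking is otherwise the same and lands on the identical prefactor $48$.
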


\begin{proof}
	Taking the derivative of $g_2(m)$, we obtain
	$$g_2'(m)=-c_3 + \frac{c_1 \ln(c_2 m) +2c_1}{2 \sqrt{m}} + \frac{c_4}{m} $$
	Let $\tilde{m}_1=\frac{c_4}{c_3}, \tilde{m}_2=\frac{9  c_4^2 [\ln( c_2 c_4^2 / c_3^2)]^2}{c_3^2}$. 
	Then, we have
	$$ g'_2(\tilde{m}_1)= \frac{c_1 \ln(c_2 \tilde{m}_1) +2c_1}{2 \sqrt{\tilde{m}_1}} >0 $$
	and
	\begin{align*}
		g'_2(\tilde{m}_2)= & -c_3 + \frac{c_1 \ln( \frac{9  c_2 c_4^2 [\ln( c_2 c_4^2 / c_3^2 )]^2}{ c_3^2 }) +2c_1}{ 3 c_4 \ln( c_2 c_4^2 / c_3^2 )} \cdot \frac{ c_3 }{2} \\& + \frac{c_3}{9c_4 [\ln( c_2 c_4^2 / c_3^2 )]^2} \cdot c_3 .
	\end{align*}
	Since 
	\begin{align*}
	&c_1 \ln( \frac{9  c_2 c_4^2 [\ln( c_2 c_4^2 / c_3^2 )]^2}{ c_3^2 }) +2c_1 \\
	=  & c_1 \ln( \frac{c_2 c_4^2 }{c_3^2}) +  c_1 \ln(9) +  2 c_1 \ln(  \ln( \frac{c_2 c_4^2 }{c_3^2}) ) +2c_1
	\\
	\leq & c_1 \ln( \frac{c_2 c_4^2 }{c_3^2}) +  c_1 \ln(9) +  2 c_1 \ln(   \frac{c_2 c_4^2 }{c_3^2} ) +2c_1
	\\
	= & 3 c_1 \ln( \frac{c_2 c_4^2 }{c_3^2}) +  ( 2+\ln(9) )c_1
	\\
	< & 9 c_1 \ln( \frac{c_2 c_4^2 }{c_3^2}) 
	\\
	< & 3 c_4 \ln( \frac{c_2 c_4^2 }{c_3^2}) ,
	\end{align*}
	we have
	$$
	\frac{c_1 \ln( \frac{9  c_2 c_4^2 [\ln( c_2 c_4^2 / c_3^2 )]^2}{ c_3^2 }) +2c_1}{ 3 c_4 \ln( c_2 c_4^2 / c_3^2 )}  < 1.
	$$
	In addition, it is clear that $\frac{c_3}{9c_4 [\ln( c_2 c_4^2 / c_3^2 )]^2} < \frac{1}{2} $. Thus, we have  
	\begin{align*}
	g'_2(\tilde{m}_2)= & -c_3 + \frac{c_1 \ln( \frac{9  c_2 c_4^2 [\ln( c_2 c_4^2 / c_3^2 )]^2}{ c_3^2 }) +2c_1}{ 3 c_4 \ln( c_2 c_4^2 / c_3^2 )} \cdot \frac{ c_3 }{2} \\& +  \frac{c_3}{9c_4 [\ln( c_2 c_4^2 / c_3^2 )]^2} \cdot c_3 <0 .
	\end{align*}
	Thus,
	\begin{align*}
	g_2(m) \leq & -c_3\tilde{m}_1+c_1\sqrt{\tilde{m}_2}\ln(c_2 \tilde{m}_2)+c_4\ln(\tilde{m}_2)
	\\
	\leq & ( c_1\sqrt{\tilde{m}_2}+c_4 )\ln(c_2 \tilde{m}_2)
	\\
	\leq & ( \frac{3 c_1 c_4 }{c_3} \ln(\frac{c_2 c_4^2 }{c_3^2}) +c_4 )\ln( \frac{9 c_2 c_4^2 [\ln( c_2 c_4^2 / c_3^2)]^2}{c_3^2})
	\\
	\leq &   \frac{4 c_1 c_4 }{c_3} \ln(\frac{c_2 c_4^2 }{c_3^2})  \cdot 3 \ln( \frac{9 c_2 c_4^2}{c_3^2})
	\\
	\leq &  \frac{12 c_1 c_4 }{c_3}   \left [ \ln( \frac{9 c_2 c_4^2}{c_3^2}) \right]^2 
	\\
	= &
	\frac{48 c_1 c_4 }{c_3}   \left [ \ln( \frac{3 \sqrt{c_2} c_4}{c_3}) \right]^2 .
	\end{align*}
\end{proof}

\begin{fact}[Lemma 9 in \cite{kazerouni2017conservative}] \label{fact:clucb_lemma9}
	For any $m \geq 2$ and $c_1,c_2,c_3>0$, the following holds
	$$
	-c_3m + c_1 \sqrt{m} \ln(c_2 m) \leq \frac{16 c_1^2}{9 c_3} \left[ \ln \left( \frac{2c_1\sqrt{c_2}e}{c_3} \right) \right]^2 .
	$$
\end{fact}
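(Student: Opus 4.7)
The plan is to view the LHS as a function $f(m) = -c_3 m + c_1 \sqrt{m} \ln(c_2 m)$ of $m \in (0,\infty)$ and upper bound its supremum, which dominates the sup over $m \geq 2$ that is claimed. Since $f(m) \to 0$ as $m \to 0^+$, $f(m) \to -\infty$ as $m \to \infty$, and the stated RHS is non-negative, it suffices to bound the value of $f$ at any interior maximum $m^*$, which must solve $f'(m^*) = 0$. Direct differentiation gives the implicit relation $2 c_3 \sqrt{m^*} = c_1(\ln(c_2 m^*) + 2)$.

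To simplify the algebra I would change variables to $y = \sqrt{c_2 m}$ and introduce the dimensionless parameter $\beta = c_1 \sqrt{c_2}/c_3$, rewriting $\tfrac{c_2}{c_3} f(m) = -y^2 + 2\beta y \ln y$. The critical-point condition then reads $y^* = \beta(\ln y^* + 1)$. Substituting this back into the previous display yields the clean identity $-y^{*2} + 2\beta y^* \ln y^* = \beta^2[(\ln y^*)^2 - 1] \leq \beta^2 (\ln y^*)^2$, which after multiplying by $c_3/c_2$ and using $\beta^2 c_3/c_2 = c_1^2/c_3$ gives the intermediate bound $f(m^*) \leq \frac{c_1^2}{c_3}(\ln y^*)^2$.

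It remains to bound $\ln y^*$ in closed form. Setting $z = \ln y^*$, the critical-point equation becomes $e^z = \beta(z+1)$. I claim the scalar inequality $z \leq \tfrac{4}{3} \ln(2 e \beta)$, which after exponentiation is equivalent to $(z+1) e^{-z/4} \leq 2e$. A one-variable calculus check shows that the LHS of this inequality is maximized at $z = 3$, attaining $4 e^{-3/4} \approx 1.89 < 2e$, and the requirement $z > -1$ is automatic from $e^z = \beta(z+1) > 0$. Combining this with the preceding paragraph gives $f(m^*) \leq \frac{c_1^2}{c_3} \cdot \tfrac{16}{9}[\ln(2 e \beta)]^2 = \frac{16 c_1^2}{9 c_3}\Bigl[\ln\Bigl(\frac{2 c_1 \sqrt{c_2} e}{c_3}\Bigr)\Bigr]^2$, which is the stated bound.

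The main obstacle will be pinning down the correct exponent $1/4$ in the auxiliary inequality $(z+1) e^{-z/4} \leq 2e$: this precise choice is exactly what produces the $16/9 = (4/3)^2$ constant in the final bound, and a looser exponent yields a valid but strictly weaker constant. One also needs to handle the degenerate regime $\beta \leq 1$, where a short calculation of $f'$ shows it is nonpositive everywhere, so $f$ is non-increasing on $(0,\infty)$ with $\sup f = \lim_{m \to 0^+} f(m) = 0$ and the claim holds trivially because the RHS is non-negative.
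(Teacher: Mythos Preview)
The paper does not prove this statement: it is recorded as a Fact cited from \cite{kazerouni2017conservative} and used as a black box in the proofs of Theorems~\ref{thm:gen_con_linucb} and~\ref{thm:gen_con_c2ucb}. So there is no in-paper proof to compare against.

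Your argument is correct. The change of variables $y=\sqrt{c_2 m}$, $\beta=c_1\sqrt{c_2}/c_3$ reducing the problem to maximizing $-y^2+2\beta y\ln y$, together with the exact evaluation $\beta^2((\ln y^*)^2-1)$ at a critical point, is clean and gives the constant $16/9$ on the nose via the auxiliary inequality $(z+1)e^{-z/4}\le 2e$. One point you should make explicit: going from $z\le \tfrac{4}{3}\ln(2e\beta)$ to $z^2\le \tfrac{16}{9}[\ln(2e\beta)]^2$ requires $z\ge 0$. This holds because in the nondegenerate regime $\beta>1$ the interior \emph{maximum} (as opposed to the smaller critical point, which is a local minimum) occurs at the larger root $y_2>\beta>1$ of $y=\beta(\ln y+1)$, hence $z=\ln y_2>0$; and in the regime $\beta\le 1$ you already dispose of the claim separately. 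Stating this removes the only ambiguity in the write-up.

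For context, the paper's own Lemmas~\ref{lemma:g_m_ub} and~\ref{lemma:g_m_ub_mv} treat closely related expressions (with an extra $c_4\ln m$ term) by a cruder sandwiching method: exhibit $\tilde m_1<\tilde m_2$ with $g'(\tilde m_1)>0>g'(\tilde m_2)$ and bound $g(m)\le -c_3\tilde m_1 + c_1\sqrt{\tilde m_2}\ln(c_2\tilde m_2)+c_4\ln(\tilde m_2)$ termwise. Your exact critical-point computation is sharper and explains why the constant $16/9=(4/3)^2$ appears, whereas the sandwiching approach naturally produces looser numerical constants.
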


\begin{fact}[Lemma 10 in \cite{kazerouni2017conservative}] \label{fact:clucb_lemma10}
	Let $c_1$ and $c_2$ be two positive constants such that $\ln(c_1 c_2) \geq 1$. Then, any $z>0$ satisfying $z \leq c_1 \ln(c_2 z)$ also satisfies $z \leq 2c_1 \ln(c_1 c_2)$.
\end{fact}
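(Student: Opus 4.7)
\textbf{Proof proposal for Fact \ref{fact:clucb_lemma10}.}
The plan is to reduce the inequality to a single-variable form and then argue by monotonicity. First I would rescale: set $w = z/c_1$ and $a = \ln(c_1 c_2)$, so the hypothesis $z \leq c_1 \ln(c_2 z)$ becomes
\begin{equation*}
w \;\leq\; \ln(c_1 c_2) + \ln(w) \;=\; a + \ln(w),
\end{equation*}
i.e., $w - \ln(w) \leq a$, while the goal $z \leq 2 c_1 \ln(c_1 c_2)$ becomes $w \leq 2a$. Recall that $a \geq 1$ by hypothesis.

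Next I would split into two cases based on the size of $w$. If $w \leq 1$, then trivially $w \leq 1 \leq a \leq 2a$, and we are done. Otherwise $w > 1$, and on this range the function $f(w) = w - \ln(w)$ has derivative $1 - 1/w > 0$, so $f$ is strictly increasing on $(1, \infty)$. I would then suppose for contradiction that $w > 2a$ and evaluate $f$ at $2a$: since $2a \geq 2 > 1$, monotonicity gives $f(w) > f(2a) = 2a - \ln(2a)$.

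The key quantitative step is the inequality $\ln(2a) \leq a$ for $a \geq 1$, which I would derive from $\ln(a) \leq a - 1$ (a standard concavity bound) combined with $\ln 2 < 1$, yielding $\ln(2a) = \ln 2 + \ln a \leq \ln 2 + a - 1 < a$. Plugging this in gives $f(2a) \geq 2a - a = a$, hence $f(w) > a$, which contradicts the reduced hypothesis $f(w) \leq a$. Therefore $w \leq 2a$, proving $z \leq 2 c_1 \ln(c_1 c_2)$.

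The only mildly delicate point is verifying the logarithmic bound $\ln(2a) \leq a$ uniformly for $a \geq 1$; everything else is a one-line monotonicity/contradiction argument, so I do not anticipate a serious obstacle. The condition $\ln(c_1 c_2) \geq 1$ is used precisely to guarantee $a \geq 1$, which both anchors the $\ln(2a) \leq a$ step and makes the $w \leq 1$ case immediate.
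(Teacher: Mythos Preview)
Your proof is correct. The substitution $w=z/c_1$, $a=\ln(c_1c_2)$ cleanly reduces the claim to showing that $w-\ln w\le a$ with $a\ge 1$ forces $w\le 2a$, and your monotonicity-plus-contradiction argument via the elementary bound $\ln(2a)<a$ for $a\ge 1$ closes it without gaps.

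There is nothing to compare against in the present paper: Fact~\ref{fact:clucb_lemma10} is quoted verbatim as Lemma~10 of \cite{kazerouni2017conservative} and is used as a black box in the proofs of Theorems~\ref{thm:gen_con_ucb}--\ref{thm:con_mv_ucb}; the paper does not supply its own argument. Your self-contained derivation is therefore a net addition rather than a variant of anything here.
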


\section{Algorithm Pseudo-code and Proof for CMAB} \label{apx:cmab}
Algorithm~\ref{alg:GenCUCB} presents the algorithm pseudo-code of \textsf{GenCB-CMAB} for CMAB, and we give the detailed proof of Theorem~\ref{thm:gen_con_ucb} in the following.

\begin{algorithm}[t]
	\caption{\textsf{GenCB-CMAB}} \label{alg:GenCUCB}
	\KwIn{Reugular arms $[K]$, default arm $x_0$ with reward $\mu_0$, parameter $\alpha$.}
		$\forall t \geq 0, \forall 0 \leq i \leq K,  N_i(t) \leftarrow 0$. $\forall t \geq 0, r_S(t) \leftarrow 0$. $m \leftarrow 0$\;
	\For{$t=1, 2, \dots$}
	{
		\If{$r_S(t-1)+N_0(t-1)\mu_0 \geq (1-\alpha) \mu_0 t$} 
		{
			$m \leftarrow m+1$\;
			$x_t \leftarrow \argmax_{i \in [K]} \left(\hat{\mu}_i + \sqrt{ \frac{2 \ln m}{ N_i(t-1) } } \right)$\;
			Play arm $x_t$, observe the random reward $r_{t,x_t}$ and update the empirical mean $\hat{\mu}_{x_t}$\;
			$N_{x_t}(t) \leftarrow N_{x_t}(t-1) + 1 $ and $\forall 0 \leq i \leq K, i \neq x_t, N_{i}(t) \leftarrow N_{i}(t-1) $\;
			$r_S(t) \leftarrow r_S(t-1)+r_{t,x_t}$\;
		}
		\Else
		{
			Play $x_0$ and receive reward $\mu_0$\;
			$N_0(t) \leftarrow N_0(t-1)+1$\;
			$r_S(t) \leftarrow r_S(t-1)$\;
			
		}
	}
\end{algorithm}

\begin{proof}
	First, we prove that \textsf{GenCB-CMAB} satisfies the sample-path reward constraint Eq.~\eqref{eq:perf_constraint} by induction.
	At timestep $t=1$, since the LHS of the \textsf{if} statement (in Line 3 of Algorithm \ref{alg:GenCUCB}) is zero and RHS is positive, \textsf{GenCB-CMAB} will pull the default arm $x_0$ and receive reward $\mu_0 \geq (1-\alpha) \mu_0$, which satisfies the constraint.
	Suppose that the sample-path reward constraint holds at timestep $t-1$. At time step $t$, if \textsf{GenCB-CMAB} plays $x_0$, it is clear that the constraint still holds for $t$. If \textsf{GenCB-CMAB} plays a regular arm $x_t$, which implies $r_S(t-1)+N_0(t-1)\mu_0 \geq (1-\alpha) \mu_0 t$, the received cumulative reward is $r_S(t-1)+N_0(t-1)\mu_0+r_{t,x_t} \geq r_S(t-1)+N_0(t-1)\mu_0 \geq (1-\alpha) \mu_0 t$, and thus the constraint still holds for $t$.
	
	Recall that $m_t$ denotes the number of times we played regular arms up to $t$, and $\cS_{t}$ denotes the set of timesteps when we played regular arms up to $t$. 
	
	Fix a time horizon $T$.
	Let $\tau \leq T$ denote the last timestep when \textsf{GenCB-CMAB} played arm $x_0$. 
	
	Define event 
	$$
	\cE:= \left \{ \sum_{t=1}^{m_{\tau-1}} r_{t,x_t} \geq \sum_{t=1}^{m_{\tau-1}} \mu_{x_t} - \sqrt{ 2 m_{\tau-1} \ln \left( 2m_{\tau-1}^3 \right) } \right \}.
	$$
	According to the Azuma-Hoeffding inequality, we have that for any $n \geq 1$,
	\begin{align*}
	& \Pr \mbr{ \bar{\cE},\ m_{\tau-1}=n }
	\\
	= & \Pr \Bigg[ \sum_{t=1}^{m_{\tau-1}} r_{t,x_t} < \sum_{t=1}^{m_{\tau-1}} \mu_{x_t} - \sqrt{ 2 m_{\tau-1} \ln \left( 2m_{\tau-1}^3 \right) },\\& \qquad \  m_{\tau-1}=n \Bigg]
	\\ 
	\leq & \frac{1}{2 n^3}.
	\end{align*}
	
	
	At the timestep $\tau$, we have the following three equivalent inequalities:
	\begin{align*}
	\sum_{t \in \cS_{\tau-1}} \!\!\! r_t \! + \! N_0(\tau-1)\mu_0  \!\! & <  \!\! (1-\alpha) \mu_0 \tau
	\nonumber
	\\
	\sum_{t \in \cS_{\tau-1}} \!\!\! r_t \! + \! N_0(\tau-1)\mu_0  \!\! & <  \!\! (1-\alpha) \mu_0 (\! N_0(\tau-1) \!+\! m_{\tau-1} \!+\! 1 \!)
	\nonumber
	\\
	\alpha \mu_0 N_0(\tau-1) \!\! & <  \!\!  (1-\alpha)  \mu_0 (m_{\tau-1} +1) - \! \sum_{t \in \cS_{\tau-1}} r_t  
	\end{align*}
	In the standard multi-armed bandit problem, we define the pseudo-regret of the well-known \textsf{UCB}~\cite{UCB_auer2002} algorithm for any time $m$ as
	$$
	\tilde{\cR}_m(\textsf{UCB}) = \mu_* m - \sum_{t=1}^{m} \mu_{x_t}.
	$$
	Thus, we have
	$$
	\sum_{t=1}^{m} \mu_{x_t} = \mu_* m - \tilde{\cR}_m(\textsf{UCB}).
	$$
	Then, we have
	\begin{align}
	\alpha \mu_0  N_0(\tau&-1)  <   (1-\alpha) \mu_0 (m_{\tau-1} +1) - \sum_{t \in \cS_{\tau-1}} r_t   \nonumber\\& -  \tilde{\cR}_{m_{\tau-1}}(\textsf{UCB})  +  \tilde{\cR}_{m_{\tau-1}}(\textsf{UCB})
	\nonumber
	\\
	 = & (1-\alpha) \mu_0 (m_{\tau-1} +1) - \sum_{t \in \cS_{\tau-1}} r_t   - \mu_* m_{\tau-1} \nonumber\\& + \sum_{t \in \cS_{\tau-1}} \mu_{x_t}  +  \tilde{\cR}_{m_{\tau-1}}(\textsf{UCB})
	\nonumber
	\\
	= &  -(\mu_*-(1-\alpha) \mu_0) (m_{\tau-1} +1) + \sum_{t \in \cS_{\tau-1}} \mu_{x_t}  \nonumber\\& - \sum_{t \in \cS_{\tau-1}} r_t    +  \tilde{\cR}_{m_{\tau-1}}(\textsf{UCB}) + \mu_* 
	\nonumber
	\\
	 = & -(\Delta_0 +\alpha \mu_0 ) (m_{\tau-1} +1) + \sum_{t \in \cS_{\tau-1}} \mu_{x_t}  \nonumber\\& - \sum_{t \in \cS_{\tau-1}} r_t    +  \tilde{\cR}_{m_{\tau-1}}(\textsf{UCB}) + \mu_*  \label{eq:introduce_pseudoR}
	\end{align}
	
	In the following analysis, we assume $m_{\tau-1} \geq 1, N_0(\tau-1) \geq 2$, since otherwise the theorem trivially holds. Since $\tau= m_{\tau-1}+N_0(\tau-1)+1 > 2$, we have $\mathbb{E}[m_{\tau-1}| \cE] \leq \frac{\mathbb{E}[m_{\tau-1}]}{\Pr[\cE]} \leq \frac{\mathbb{E}[m_{\tau-1}]}{1-\frac{1}{\tau}} <  2 \mathbb{E}[m_{\tau-1}]$ and $\mathbb{E}[N_0(\tau-1)| \cE] \leq \frac{\mathbb{E}[N_0(\tau-1)]}{\Pr[\cE]} \leq \frac{\mathbb{E}[N_0(\tau-1)]}{1-\frac{1}{\tau}} <  2 \mathbb{E}[N_0(\tau-1)]$.
	Taking expectation of both sides in \eqref{eq:introduce_pseudoR}, from  $\ex[\tilde{\cR}_{m_{\tau-1}}(\textsf{UCB})] \leq \ex[ 8H \ln(m_{\tau-1} +1) +  5K ]$ and Jensen's inequality, we have
	\begin{align}
	\alpha \mu_0 \mathbb{E}[N_0(\tau-1&)]  <  -(\Delta_0 +\alpha \mu_0 ) \mathbb{E}[m_{\tau-1} +1] \nonumber\\& + \mathbb{E}\left[ \sum_{t \in \cS_{\tau-1}} \mu_{x_t} - \sum_{t \in \cS_{\tau-1}} r_t| \cE \right]\Pr[\cE] 
	\nonumber\\& 
	+\! \sum_{n=1}^{\infty} \mathbb{E}\!\!\left[ \!\sum_{t \in \cS_{\tau-1}} \!\!\!\mu_{x_t} \!\!-\!\!\!\!\! \sum_{t \in \cS_{\tau-1}} \!\!\! r_t | \bar{\cE}, m_{\tau-1}\!=\!n \right] \!\!\cdot \nonumber\\& \quad \Pr[\bar{\cE}, m_{\tau-1}=n] \nonumber\\& + \mathbb{E}[\tilde{\cR}_{m_{\tau-1}}(\textsf{UCB})] + \mu_*   
	\nonumber
	\\
	\leq & -(\Delta_0 +\alpha \mu_0 ) \mathbb{E}[m_{\tau-1} +1] \nonumber\\& + \mathbb{E}\left[ 4 \sqrt{ m_{\tau-1} \ln \left( m_{\tau-1} \right) } | \cE \right] \!+\!\!  \sum_{n=1}^{\infty} \! n \!\cdot\! \frac{1}{2n^3}    \nonumber\\& +  \tilde{\cR}(\mathbb{E}[m_{\tau-1}]) + \mu_*
	\nonumber
	\\
	< & -(\Delta_0 +\alpha \mu_0 ) \mathbb{E}[m_{\tau-1} +1] \nonumber\\& + 4 \sqrt{ 2 \mathbb{E}[m_{\tau-1} +1]} \ln ( 2 \mathbb{E}[N_0(\tau-1)] \cdot \nonumber\\& \mathbb{E}[m_{\tau-1} +1] )   +    8H \ln(\mathbb{E}[m_{\tau-1} +1]) \nonumber\\& +  5K + 2 \label{eq:rhs_g_m}
	\end{align}
	Let $m=\mathbb{E}[m_{\tau-1} +1] \geq 2$, $c_1=4 \sqrt{2}$, $c_2=2\mathbb{E}[N_0(\tau-1)]  \geq 2$, $c_3=\Delta_0+ \alpha \mu_0 \in (0,1)$, $c_4=8H$ where $H\geq 2$. The RHS of Eq. \eqref{eq:rhs_g_m} can be written as a constant term plus
	$$
	g_1(m)=-c_3m+c_1\sqrt{m}\ln(c_2 m)+c_4\ln(m).
	$$

	According to Lemma \ref{lemma:g_m_ub}, we have
	$$
	g_1(m) \leq \frac{132 c_1 c_4 }{c_3}  \left [ \ln \left( \frac{10 \sqrt{c_2c_4} }{c_3} \right) \right]^2 .
	$$
	Then, we have
	\begin{align*}
	& \alpha \mu_0 \mathbb{E}[N_0(\tau-1)] \\
	< & \frac{ 4224\sqrt{2} H}{\Delta_0+ \alpha \mu_0} \left[\ln \left(\frac{40 \sqrt{ H \mathbb{E}[N_0(\tau-1)] } }{\Delta_0+ \alpha \mu_0} \right) \right]^2  +  5K + 2
	\\
	< & \frac{ 5981 H}{\Delta_0+ \alpha \mu_0} \left[\ln \left(\frac{40 \sqrt{ H \mathbb{E}[N_0(\tau-1)] } }{\Delta_0+ \alpha \mu_0} \right) \right]^2 .
	\end{align*}
	Thus, we have 
	\begin{align*}
	\mathbb{E}[N_0(\tau-1)]  < & \frac{ 5981 H}{\alpha \mu_0(\Delta_0+ \alpha \mu_0)} \cdot\\& \left[\ln \left(\frac{40 \sqrt{ H \mathbb{E}[N_0(\tau-1)] } }{\Delta_0+ \alpha \mu_0} \right) \right]^2 
	\\
	\sqrt{\mathbb{E}[N_0(\tau-1)]}  < & \sqrt{ \frac{  5981 H}{\alpha \mu_0 (\Delta_0+ \alpha \mu_0)} } \cdot\\& \ln \left(\frac{40 \sqrt{ H \mathbb{E}[N_0(\tau-1)] } }{\Delta_0+ \alpha \mu_0} \right)   
	\end{align*}

	According to Fact~\ref{fact:clucb_lemma10} (set $z=\sqrt{\mathbb{E}[N_0(\tau-1)]}, c_1=\sqrt{ \frac{  5981 H}{\alpha \mu_0 (\Delta_0+ \alpha \mu_0)} }, c_2=\frac{40 \sqrt{ H  } }{\Delta_0+ \alpha \mu_0} $), 
	\begin{align*}
	\sqrt{\mathbb{E}[N_0(\tau-1)]} \leq & 2 \sqrt{ \frac{  5981 H}{\alpha \mu_0 (\Delta_0+ \alpha \mu_0)} } \cdot \\& \ln \left( \sqrt{ \frac{  5981 H}{\alpha \mu_0 (\Delta_0+ \alpha \mu_0)} } \cdot \frac{40 \sqrt{ H  } }{\Delta_0+ \alpha \mu_0}  \right)
	\end{align*}
	Thus, 
	\begin{align*}
	\mathbb{E}[N_0(T)] = & \mathbb{E}[N_0(\tau)] 
	\\
	= & \mathbb{E}[N_0(\tau-1)] + 1 
	\\
	= &  O \! \Bigg( \!\! \frac{  H}{\alpha \mu_0 (\Delta_0+ \alpha \mu_0)} \!\! \left[ \! \ln \! \left(\! \frac{  H}{\alpha \mu_0 (\Delta_0+ \alpha \mu_0)} \!\right) \right]^2 \!\! \Bigg) \! .
	\end{align*}
	Theorem~\ref{thm:gen_con_ucb} follows from $ \ex[\cR_T(\textsf{GenCB-CMAB})] \leq  \ex[\cR_T(\textsf{UCB})] + \mathbb{E}[N_0(T)]\Delta_0 $.
\end{proof}

\section{Algorithm Pseudo-code and Proof for CLB} \label{apx:clb}
Algorithm~\ref{alg:GenCLUCB} presents the algorithm pseudo-code of \textsf{GenCB-CLB} for CLB, and we give the detailed proof of Theorem~\ref{thm:gen_con_linucb} in the following.

\begin{algorithm}[t]
	\caption{\textsf{GenCB-CLB}} \label{alg:GenCLUCB}
	\KwIn{Reugular arms $\cX \subseteq \mathbb{R}^d$, default arm $x_0$ with reward $\mu_0$, parameter $\alpha$, $L$, $S$, $\lambda \geq \max\{1, L^2\}$.}
	$\forall t \geq 0, N_0(t) \leftarrow 0, r_S(t) \leftarrow 0$. $m \leftarrow 0$. $V_0 \leftarrow \lambda I $. $b_0 \leftarrow \boldsymbol{0}^d $\;
	\For{$t=1, 2, \dots$}
	{
		\If{$r_S(t-1)+N_0(t-1)\mu_0 \geq (1-\alpha) \mu_0 t$} 
		{
			$m \leftarrow m+1$\;
			$\cC_{t} \leftarrow \{ \theta\in \mathbb{R}^d : \| \theta-\hat{\theta}_{t-1} \|_{V_{t-1}} \leq \sqrt{ d \ln ( 2 m^2 (1+mL^2/\lambda )  ) } + \sqrt{\lambda} S \}$\;
			$(x_t, \tilde{\theta}_t) \leftarrow \argmax_{(x, \theta) \in \cX \times \cC_t} x^{\top} \theta $\;
			Play arm $x_t$ and observe the random reward $r_{t,x_t}$\;
			$V_t \leftarrow V_{t-1} + x_t x_t^{\top}$, $b_t \leftarrow b_{t-1} + r_{t,x_t} x_t $\;
			$\hat{\theta}_t \leftarrow V_t^{-1} b_t$\;
			$r_S(t) \leftarrow r_S(t-1)+r_{t,x_t}$\;
		}
		\Else
		{
			Play $x_0$ and receive reward $\mu_0$\;
			$N_0(t) \leftarrow N_0(t-1)+1$\;
			$r_S(t) \leftarrow r_S(t-1)$\;
			
		}
	}
\end{algorithm}

\begin{proof}
	Since the proof of satisfaction on the performance constraint Eq.~\eqref{eq:perf_constraint} is the same to Theorem~\ref{thm:gen_con_ucb}, we mainly give the proof of regret bound here.
	
	
	Define event 
	$$
	\cE:= \left \{ \sum_{t=1}^{m_{\tau-1}} r_{x_t} \geq \sum_{t=1}^{m_{\tau-1}} \mu_{x_t} - \sqrt{ 2 m_{\tau-1} \ln \left( 2 m_{\tau-1}^3 \right) } \right \}.
	$$
	According to the Azuma-Hoeffding inequality, we have that for any $n \geq 1$,
	\begin{align*}
		& \Pr \mbr{ \bar{\cE},\ m_{\tau-1}=n }
		\\
		= & \Pr \Bigg[ \sum_{t=1}^{m_{\tau-1}} r_{t,x_t} < \sum_{t=1}^{m_{\tau-1}} \mu_{x_t} - \sqrt{ 2 m_{\tau-1} \ln \left( 2m_{\tau-1}^3 \right) },\\& \qquad \  m_{\tau-1}=n \Bigg]
		\\ 
		\leq & \frac{1}{2 n^3}.
	\end{align*}

	For the confidence ellipsoid in the \textsf{LinUCB}~\cite{linear_bandit_NIPS2011} algorithm, we set the confidence parameter $\delta_t=1/(2 t^2)$ and the confidence ellipsoid for timestep $t$ is 
	\begin{align*}
		\mathcal{C}_{t}= \{ \theta\in \mathbb{R}^d : & \| \theta-\hat{\theta}_{t-1} \|_{V_{t-1}} \leq \\& \sqrt{ d \ln ( 2 m_t^2 (1+ m_t L^2/\lambda )  ) } + \sqrt{\lambda} S \}.
	\end{align*}
	Then, we can obtain 
	\begin{align*}
	& \mathbb{E}[\tilde{\cR}_{m_{\tau-1}}(\textsf{LinUCB})]
	\\
	= & \ex\left[ \mu_* m_{\tau-1} - \sum_{t=1}^{m_{\tau-1}} \mu_{x_t} \right]
	\\
	\leq &  \ex\Bigg[ 4 \sqrt{ m_{\tau-1} d \ln \left( 1+ \frac{m_{\tau-1} L^2}{\lambda d} \right) } \cdot \\ & \Bigg(  \sqrt{\lambda} S  \!+ \!   \sqrt{d  \ln \left(  2 m_{\tau-1}^2 \!  \cdot \! \left(1+ \frac{  m_{\tau-1} L^2 }{\lambda} \right)  \right)} \Bigg)  + \Delta_{\textup{max}} \Bigg].
	\end{align*}
	
	Fix time horizon $T$. Recall that $\tau \leq T$ is the last timestep when we played $x_0$. Similar to the analysis in CMAB (Eq.~\eqref{eq:introduce_pseudoR}), at timestep $\tau$, we have
	\begin{align*}
	\alpha \mu_0 N_0(\tau-1)  < &  -(\Delta_0 +\alpha \mu_0 ) (m_{\tau-1} +1) + \sum_{t \in \cS_{\tau-1}} \mu_{x_t} \\& - \sum_{t \in \cS_{\tau-1}} r_t     +  \tilde{\cR}_{m_{\tau-1}}(\textsf{LinUCB}) + \mu_* .
	\end{align*}
	Taking expectation of both sides, we have
	\begin{align*}
	\alpha \mu_0 \mathbb{E}[N_0(&\tau-1)]  <  -(\Delta_0 +\alpha \mu_0 ) \mathbb{E}[m_{\tau-1} +1] \\& + \mathbb{E}\left[ \sum_{t \in \cS_{\tau-1}} \mu_{x_t} - \sum_{t \in \cS_{\tau-1}} r_t| \cE \right]\Pr[\cE] 
	\\ & +\! \sum_{n=1}^{\infty} \mathbb{E}\!\!\left[ \!\sum_{t \in \cS_{\tau-1}} \!\!\!\mu_{x_t} \!\!-\!\!\!\!\! \sum_{t \in \cS_{\tau-1}} \!\!\! r_t | \bar{\cE}, m_{\tau-1}\!=\!n \right] \!\!\cdot \nonumber\\& \quad \Pr[\bar{\cE}, m_{\tau-1}=n] \\&  +  \mathbb{E}[\tilde{\cR}_{m_{\tau-1}}(\textsf{LinUCB})] + \mu_*   
	\\
	\leq & -(\Delta_0 +\alpha \mu_0 ) \mathbb{E}[m_{\tau-1} +1] \\& + \mathbb{E}\left[ 4 \sqrt{ m_{\tau-1} \ln \left( m_{\tau-1} \right) } | \cE \right]  +  \sum_{n=1}^{\infty} n \!\cdot\! \frac{1}{2n^3}  \\&  +   \mathbb{E}[\tilde{\cR}_{m_{\tau-1}}(\textsf{LinUCB})] + \mu_*
	\\
	< & -(\Delta_0 +\alpha \mu_0 ) \mathbb{E}[m_{\tau-1} +1] \\& +\! 4 \sqrt{ 2 \mathbb{E}[m_{\tau-1}] \! \ln \! \left(  2 \mathbb{E}[N_0(\tau-1)]  \mathbb{E}[m_{\tau-1} +1] \right) }  \\ &+    4 \sqrt{ \mathbb{E}[m_{\tau-1}] d \ln \left( 1+ \frac{\mathbb{E}[m_{\tau-1}] L^2}{\lambda d} \right) } \cdot \\ \Bigg(  \sqrt{\lambda}& S  \!+\!   \sqrt{d \ln \left(  2 \mathbb{E}[m_{\tau-1}]^2 \! \cdot \! \left(1+ \frac{  \mathbb{E}[m_{\tau-1}] L^2 }{\lambda} \right)  \right)} \Bigg)  \\ & + \Delta_{\textup{max}} + 2
	\\
	< & -(\Delta_0+ \alpha \mu_0)\mathbb{E}[m_{\tau-1} +1]  \\& + 38 d \sqrt{\lambda} S \sqrt{ (\mathbb{E}[m_{\tau-1} +1] )} \cdot \\& \ln \left( 2 \mathbb{E}[N_0(\tau-1)] \cdot \mathbb{E}[m_{\tau-1} +1] \right) +3,
	\end{align*}
	where $\Delta_{\textup{max}}=\max_{x \in \cX} \Delta_{x}$.
	
	Let $m=\mathbb{E}[m_{\tau-1} +1]$, $c_1=38 d \sqrt{\lambda}S$, $c_2=2 \mathbb{E}[N_0(\tau-1)] $, $c_3=\Delta_0+ \alpha \mu_0 $. According to Fact~\ref{fact:clucb_lemma9}, we have
	\begin{align*}
	\alpha \mu_0 \mathbb{E}[N_0(\tau-1)]  \! < \! & \frac{ 23104 d^2 S^2 \lambda}{9(\Delta_0+ \alpha \mu_0)}  \cdot\\& \left[ \! \ln \! \left( \!  \frac{294 d \sqrt{\lambda} S\sqrt{\mathbb{E}[N_0(\tau-1)]} } {\Delta_0+ \alpha \mu_0} \! \right) \! \right]^2 \!\!\! + \! 3.
	\end{align*}
	Thus, we have 
	\begin{align*}
	\mathbb{E}[N_0(\tau-1)]  < &   \frac{ 2568 d^2 S^2 \lambda}{ \alpha \mu_0(\Delta_0+ \alpha \mu_0)}  \cdot\\& \left[\ln \!\! \left( \!\! \frac{294 d \sqrt{\lambda} S\sqrt{\mathbb{E}[N_0(\tau-1)]} } {\Delta_0+ \alpha \mu_0} \! \right) \! \right]^2 
	\\
	\sqrt{\mathbb{E}[N_0(\tau-1)]}  < &  \frac{  51 d \sqrt{\lambda} S }{ \sqrt{ \alpha \mu_0 (\Delta_0+ \alpha \mu_0)} }  \cdot\\& \ln \!\! \left(\frac{294 d \sqrt{\lambda} S\sqrt{\mathbb{E}[N_0(\tau-1)]} } {\Delta_0+ \alpha \mu_0} \! \right)   
	\end{align*}
	
	According to Fact~\ref{fact:clucb_lemma10} (set $z=\sqrt{\mathbb{E}[N_0(\tau-1)]}, c_1=\frac{  51 d \sqrt{\lambda} S }{ \sqrt{ \alpha \mu_0 (\Delta_0+ \alpha \mu_0)} } , c_2=\frac{294 d \sqrt{\lambda} S } {\Delta_0+ \alpha \mu_0} $), 
	\begin{align*}
	 \sqrt{\mathbb{E}[N_0(\tau-1)]}  & \leq   \frac{  102 d \sqrt{\lambda} S }{ \sqrt{ \alpha \mu_0 (\Delta_0+ \alpha \mu_0)} }  \cdot\\& \ln  \left(  \frac{14994 d^2 S^2 \lambda}{(\Delta_0+ \alpha \mu_0)^{\frac{3}{2}} \sqrt{\alpha \mu_0 } }  \right)   
	\end{align*}
	Thus,
	\begin{align*}
	& \ex[N_0(T)] 
	\\
	= & \ex[N_0(\tau)] 
	\\
	= & \ex[N_0(\tau-1)] + 1 
	\\
	= & O \left(  \frac{  d^2 S^2 \lambda }{\alpha \mu_0 (\Delta_0+ \alpha \mu_0)} \left[ \ln \left( \frac{  d S \sqrt{\lambda} }{\alpha \mu_0 (\Delta_0+ \alpha \mu_0)} \right) \right]^2  \right) .
	\end{align*}
	Theorem~\ref{thm:gen_con_linucb} follows from $ \ex[\cR_T(\textsf{GenCB-CLB})] \leq  \ex[\cR_T(\textsf{LinUCB})] + \mathbb{E}[N_0(T)]\Delta_0 $.
\end{proof}

\section{Algorithm Pseudo-code and Proof for CCCB} \label{apx:cccb}
Algorithm~\ref{alg:GenC3UCB} presents the algorithm pseudo-code of \textsf{GenCB-CCCB} for CCCB, and we give the detailed proof of Theorem~\ref{thm:gen_con_c2ucb} in the following.

\begin{algorithm}[t]
	\caption{\textsf{GenCB-CCCB}} \label{alg:GenC3UCB}
	\KwIn{Reugular arms (decision class) $\cX$, base arms $x_1, \dots, x_K \in \mathbb{R}^d$, default arm $x_0$ with reward $\mu_0$, parameter $\alpha$, $L$, $S$, $\lambda \geq \max\{1, L^2\}$.}
	$\forall t \geq 0, N_0(t) \leftarrow 0, r_S(t) \leftarrow 0$. $m \leftarrow 0$. $V_0 \leftarrow \lambda I $, $b_0 \leftarrow \boldsymbol{0}^d $\;
	\For{$t=1, 2, \dots$}
	{
		\If{$r_S(t-1)+N_0(t-1)\mu_0 \geq (1-\alpha) \mu_0 t$} 
		{
			$m \leftarrow m+1$\;
			$\hat{w}_{t,e} \leftarrow x_{e}^{\top} \hat{\theta}_{t-1}, \forall e \in [K]$\;
			$\bar{w}_{t,e} \leftarrow \hat{w}_{t,e} + (\sqrt{ d \ln ( 2 m^2 (1+mKL^2/\lambda )  ) }$ \\ \quad $+ \sqrt{\lambda} S) \| x_{e} \|_{V_{t-1}^{-1}}, \forall e \in [K]$\;
			$A_t \leftarrow \argmax_{A \in \cX} \bar{f}(A, \bar{\boldsymbol{w}}) $\;
			Play arm $A_t$ and observe the random reward $w_{t,e}$ for all $e \in A_t$\;
			$V_t\leftarrow\lambda I + \sum_{s=1}^{t} \sum_{e \in A_s} x_{e} x_{e}^{\top}$\;
			$b_t\leftarrow\sum_{s=1}^{t} \sum_{e \in A_s} w_{s,e} x_{e}$\;
			$\hat{\theta}_t \leftarrow V_t^{-1} b_t$\;
			$r_S(t) \leftarrow r_S(t-1)+r_{t,A_t}$\;
		}
		\Else
		{
			Play $x_0$ and receive reward $\mu_0$\;
			$N_0(t) \leftarrow N_0(t-1)+1$\;
			$r_S(t) \leftarrow r_S(t-1)$\;
			
		}
	}
\end{algorithm}

\begin{proof}
	Since the proof of satisfaction on the performance constraint Eq.~\eqref{eq:perf_constraint} is the same to Theorem~\ref{thm:gen_con_ucb}, we mainly give the proof of regret bound here.
	
	Define event 
	$$
	\cE\!\!:=\! \left \{\! \sum_{t=1}^{m_{\tau-1}} \!\! r_{t, A_t} \!\geq\!\!\! \sum_{t=1}^{m_{\tau-1}} \!\!\! f(A_t, \boldsymbol{w}^*) \!-\! K \!\sqrt{ 2 m_{\tau-1} \! \ln \! \left( 2 m_{\tau-1}^3 \right) } \! \right \} \!\!.
	$$
	According to the Azuma-Hoeffding inequality, we have that for any $n \geq 1$,
	\begin{align*}
		& \Pr \mbr{ \bar{\cE},\ m_{\tau-1}=n }
		\\
		= & \Pr \Bigg[ \sum_{t=1}^{m_{\tau-1}} \!\! r_{t, A_t} \!<\!\!\! \sum_{t=1}^{m_{\tau-1}} \!\!\! f(A_t, \boldsymbol{w}^*) \!-\! K \!\sqrt{ 2 m_{\tau-1} \! \ln \! \left( 2 m_{\tau-1}^3 \right) } ,\\& \qquad \  m_{\tau-1}=n \Bigg]
		\\ 
		\leq & \frac{1}{2 n^3}.
	\end{align*}

	For the confidence ellipsoid in the \textsf{C2UCB}~\cite{Contextual_Combinatorial_Bandit} algorithm, we set the confidence parameter $\delta_t=1/(2 t^2)$ and the confidence ellipsoid for timestep $t$ is  
	\begin{align*}
	\mathcal{C}_{t}= \big\{ \theta \in \mathbb{R}^d : & \| \theta - \hat{\theta}_{t-1} \|_{V_{t-1}} \leq \\& \sqrt{ d \ln ( 2 m^2 (1+mKL^2/\lambda )  ) } + \sqrt{\lambda} S \big\}.
	\end{align*}
	Then, we can obtain 
	\begin{align*}
	& \ex[\cR_{m_{\tau-1}} (\textsf{C2UCB})] 
	\\
	\leq & \ex\Bigg[ \!
	2P \!\! \sqrt{ 2 d m_{\tau-1}  \ln \left( 1+ \frac{m_{\tau-1} K L^2}{\lambda d} \right) } \cdot \\& \left( \! \sqrt{\lambda} S \! + \! \sqrt{d \ln \left( 2 m_{\tau-1}^2 \!\! \cdot \!\! \left( 1+ \frac{ m_{\tau-1} KL^2 }{\lambda} \right)  \right)} \right) \!+\! \Delta_{\textup{max}} \! \Bigg] \!,
	\end{align*}
	where $\Delta_{\textup{max}}=\max_{A \in \cX} (f(A_*, \boldsymbol{w}^*)-f(A, \boldsymbol{w}^*) )$.
	
	Fix time horizon $T$. Recall that $\tau \leq T$ is the last timestep when we played $x_0$.
	Similar to the conservative multi-armed bandit case (Eq.~\eqref{eq:introduce_pseudoR}), at timestep $\tau$, we have
	\begin{align*}
	\alpha \mu_0 N_0(\tau-1)   < & -(\Delta_0 +\alpha \mu_0 ) (m_{\tau-1} +1) \\& + \sum_{t \in S_{\tau-1}} f(A_t, \boldsymbol{w}^*) - \sum_{t \in S_{\tau-1}} r_{t,A_t}  \\&   +  \tilde{\cR}(m_{\tau-1}) + \mu_* .
	\end{align*}
	Taking expectation of both sides, we have
	\begin{align*}
	&\alpha \mu_0 \mathbb{E}[N_0(\tau-1)]  
	\\
	< & -(\Delta_0 +\alpha \mu_0 ) \mathbb{E}[m_{\tau-1} +1] \\& + \mathbb{E}\left[ \sum_{t \in S_{\tau-1}} f(A_t, \boldsymbol{w}^*) - \sum_{t \in S_{\tau-1}} r_{t,A_t} | \cE \right]\Pr[\cE] \\& 
	+\! \sum_{n=1}^{\infty} \mathbb{E}\!\!\left[ \sum_{t \in S_{\tau-1}} f(A_t, \boldsymbol{w}^*) - \sum_{t \in S_{\tau-1}} r_{t,A_t} | \bar{\cE}, m_{\tau-1}\!=\!n \right] \!\!\cdot \nonumber\\& \quad \Pr[\bar{\cE}, m_{\tau-1}=n]
	\\& + \mathbb{E}[\tilde{\cR}(m_{\tau-1})] + \mu_*   
	\\
	\leq & \!-\!(\Delta_0 \!+\! \alpha \mu_0 ) \mathbb{E}[m_{\tau-1} \!+\! 1] \!+\! \mathbb{E}\!\left[ 4 K\! \sqrt{ m_{\tau-1} \ln \left( m_{\tau-1} \right) } | \cE \right] \\& +  \sum_{n=1}^{\infty} \! n \!\cdot\! \frac{1}{2n^3}  + \mathbb{E}[\tilde{\cR}(m_{\tau-1})] + \mu_*
	\\
	< & -(\Delta_0 +\alpha \mu_0 ) \mathbb{E}[m_{\tau-1} +1] \\& +  4 K \sqrt{ 2 \mathbb{E}[m_{\tau-1}] \ln \left( 2 \mathbb{E}[N_0(\tau-1)]  \mathbb{E}[m_{\tau-1} +1] \right) }  \\& +  2P \sqrt{ 2 d \mathbb{E}[m_{\tau-1}]  \ln \left( 1+ \frac{\mathbb{E}[m_{\tau-1}] K L^2}{\lambda d} \right) } \cdot \\& \!\! \left( \!\! \sqrt{\lambda} S \!+\!  \sqrt{d \ln \left( 2 \mathbb{E}[m_{\tau-1}]^2 \cdot \left( 1+ \frac{ \mathbb{E}[m_{\tau-1}] KL^2 }{\lambda} \right)  \right)} \right)  \\ & + \Delta_{\textup{max}} + 2
	\\
	< & -(\Delta_0+ \alpha \mu_0)\mathbb{E}[m_{\tau-1} +1]   + (4\sqrt{2}K+10P\sqrt{\lambda}Sd) \cdot \\& \!\! \sqrt{ (\mathbb{E}[m_{\tau-1} +1] )} \ln \left( 2K  \mathbb{E}[N_0(\tau-1)] \! \cdot \! \mathbb{E}[m_{\tau-1} +1] \right) \!+\! 3.
	\end{align*}
	
	Let $m=\mathbb{E}[m_{\tau-1} +1]$, $c_1=4\sqrt{2}K+10P\sqrt{\lambda}Sd$, $c_2=2K  \mathbb{E}[N_0(\tau-1)] $, $c_3=\Delta_0+ \alpha \mu_0 $. According to Fact~\ref{fact:clucb_lemma9}, we have
	\begin{align*}
	&\alpha \mu_0 \mathbb{E}[N_0(\tau-1)]  <  \frac{ 16 (4\sqrt{2}K+10P\sqrt{\lambda}Sd)^2}{9(\Delta_0+ \alpha \mu_0)} \cdot \\& \left[\ln \left(\frac{8 (4\sqrt{2}K+10P\sqrt{\lambda}Sd) \sqrt{K \mathbb{E}[N_0(\tau-1)]} } {\Delta_0+ \alpha \mu_0} \right) \right]^2 +3.
	\end{align*}
	Thus, we have 
	\begin{align*}
	 &\mathbb{E}[N_0(\tau-1)]  <  \frac{ 5 (4\sqrt{2}K+10P\sqrt{\lambda}Sd)^2}{\alpha \mu_0(\Delta_0+ \alpha \mu_0)} \cdot \\   \Bigg[\ln &\left(\frac{8 K(4\sqrt{2}K+10P\sqrt{\lambda}Sd) \sqrt{ \mathbb{E}[N_0(\tau-1)]} } {\Delta_0+ \alpha \mu_0} \right) \Bigg]^2 
	\\
	 &\sqrt{\mathbb{E}[N_0(\tau-1)]}  <   \frac{ 3 (4\sqrt{2}K+10P\sqrt{\lambda}Sd)}{ \sqrt{\alpha \mu_0(\Delta_0+ \alpha \mu_0)} } \cdot \\& \ln \left(\frac{8 K(4\sqrt{2}K+10P\sqrt{\lambda}Sd) \sqrt{ \mathbb{E}[N_0(\tau-1)]} } {\Delta_0+ \alpha \mu_0} \right)  
	\end{align*}
	
	According to Fact~\ref{fact:clucb_lemma10} (set $z=\sqrt{\mathbb{E}[N_0(\tau-1)]}, c_1=\frac{ 3 (4\sqrt{2}K+10P\sqrt{\lambda}Sd)}{ \sqrt{\alpha \mu_0(\Delta_0+ \alpha \mu_0)} } , c_2=\frac{8 K(4\sqrt{2}K+10P\sqrt{\lambda}Sd) } {\Delta_0+ \alpha \mu_0} $), 
	\begin{align*}
	\sqrt{\mathbb{E}[N_0(\tau-1)]} \leq & \frac{ 12 (4\sqrt{2}K+10P\sqrt{\lambda}Sd)}{ \sqrt{\alpha \mu_0(\Delta_0+ \alpha \mu_0)} } \cdot \\& \ln \left( \frac{5 K(4\sqrt{2}K+10P\sqrt{\lambda}Sd)  } {(\Delta_0+ \alpha \mu_0) \sqrt{\alpha \mu_0(\Delta_0+ \alpha \mu_0)} } \right)   
	\end{align*}
	Thus, 
	\begin{align*}
	&\mathbb{E}[N_0(T)]
	\\
	= & \mathbb{E}[N_0(\tau)] 
	\\
	= & \mathbb{E}[N_0(\tau-1)] + 1 
	\\
	= & O \left( \frac{  (K+P\sqrt{\lambda}Sd)^2 }{ \alpha \mu_0(\Delta_0+ \alpha \mu_0) }  \left[ \ln \left( \frac{ K+P\sqrt{\lambda}Sd  } {\alpha \mu_0(\Delta_0+ \alpha \mu_0) } \right)  \right]^2  \right) .
	\end{align*}
	Theorem~\ref{thm:gen_con_c2ucb} follows from $ \ex[\cR_T(\textsf{GenCB-CCCB})] \leq  \ex[\cR_T(\textsf{C2UCB})] + \mathbb{E}[N_0(T)]\Delta_0 $.
\end{proof}

\section{Proof for MV-CBP} \label{apx:mv_cbp}
We give the detailed proof of Theorem~\ref{thm:con_mv_ucb} below.
\begin{proof}
In order to prove that \textsf{MV-CUCB} satisfies the sample-path reward constraint Eq.~\eqref{eq:mv_perf_constraint}, we give the following inequalities first. For any time horizon $T$, 
\begin{align}
&T \cdot \widehat{\mv}_T(\cA) 
\nonumber\\
= & T \cdot \left( \rho \hat{\mu}_{T}(\cA)-\hat{\sigma}_{T}^2(\cA) \right)
\nonumber\\
= & T \cdot  \frac{\rho}{T} \sum_{t=1}^{T} r_{t,x_t} - T \cdot \frac{1}{T} \sum_{t=1}^{T} r_{t,x_t}^2  + T \cdot \left(\frac{\sum_{t=1}^{T} r_{t,x_t}}{T} \right)^2 
\nonumber\\
= & \rho \sum_{t=1}^{T-1} r_{t,x_t}+ \rho r_{T,x_T} - \sum_{t=1}^{T-1} r_{t,x_t}^2 -  r_{T,x_T}^2 \nonumber\\& + \frac{1}{T} \left( \left(\sum_{t=1}^{T-1} r_{t,x_t}\right)^2 + r_{T,x_T}^2 + 2 \left( \sum_{t=1}^{T-1} r_{t,x_t} \right) r_{T,x_T} \right)
\nonumber\\
= & \rho \sum_{t=1}^{T-1} r_{t,x_t} - \sum_{t=1}^{T-1} r_{t,x_t}^2 + \frac{1}{T-1} \left(\sum_{t=1}^{T-1} r_{t,x_t}\right)^2 \nonumber\\& -\frac{1}{T-1} \left(\sum_{t=1}^{T-1} r_{t,x_t}\right)^2  + \rho r_{T,x_T}-  r_{T,x_T}^2 \nonumber\\& + \frac{1}{T} \left( \left(\sum_{t=1}^{T-1} r_{t,x_t}\right)^2 + r_{T,x_T}^2 + 2 \left( \sum_{t=1}^{T-1} r_{t,x_t} \right) r_{T,x_T} \right)
\nonumber\\
\geq & (T-1) \cdot \widehat{\mv}_{T-1}(\cA) -1 - \frac{1}{T(T-1)} \left(\sum_{t=1}^{T-1} r_{t,x_t}\right)^2 
\nonumber\\
\geq  & (T-1) \cdot \widehat{\mv}_{T-1}(\cA) -2 \label{eq:t_times_mv_bound}
\end{align}	
	
Now we prove that \textsf{MV-CUCB} satisfies the sample-path reward constraint Eq.~\eqref{eq:mv_perf_constraint} by induction.
At timestep $t=1$, since the LHS of the \textsf{if} statement (in Line 3 of Algorithm \ref{alg:GenCUCB}) is $-2$ and RHS is positive, \textsf{MV-CUCB} will pull the default arm $x_0$ and receive reward $\mu_0$. Then, we have $\widehat{\mv}_1(\cA) = {\mv}_0 \geq (1-\alpha) {\mv}_0$, which satisfies the constraint.
Suppose that the sample-path reward constraint holds at timestep $t-1$. At time step $t$, if \textsf{MV-CUCB} plays $x_0$, since the exploration risk caused by one pull is bounded by $2$ and $\alpha {\mv}_0>2$, the constraint still holds for $t$. If \textsf{MV-CUCB} plays a regular arm $x_t$, which implies $ (t-1)\widehat{\mv}_{t-1}(\cA) - 2 \geq (1-\alpha) {\mv}_0 t$, then from  Eq.~\eqref{eq:t_times_mv_bound} we have $t\widehat{\mv}_{t}(\cA) \geq (t-1)\widehat{\mv}_{t-1}(\cA) - 2 \geq (1-\alpha) {\mv}_0 t$, and thus the constraint still holds for $t$.

Next, we prove the regret bound of the \textsf{MV-CUCB} algorithm.
Fix a time horizon $T$.
Let $\tau \leq T$ denote the last timestep when the algorithm pulled arm $x_0$. For ease of notation, we use $N_{i,\tau -1}$ as a shorthand for $N_i(\tau -1)$, $\forall 0 \leq i \leq K$.

Define event 
\begin{align*}
	& \cF:= \Bigg\{ \forall i=1, \dots, K, |\hat{\mu}_{i,\tau-1} - \mu_i| \! \leq \!\! \sqrt{\frac{\ln (12Km_{\tau-1}^4)}{2N_{i,\tau-1}}} , \\& |\hat{\sigma}_{i,\tau-1}^2 - \sigma_i^2| \! \leq \!\! 5 \sqrt{\frac{\ln (12Km_{\tau-1}^4)}{2 N_{i,\tau-1}}}   \Bigg\} \!.
\end{align*}
Similar to Lemma 2 in \cite{sani2012risk}, for any $n \geq 1$ and $m_{\tau-1}=n$, using the Chernoff-Hoeffding inequality and a union bound over $N_{i,\tau-1} \in [n]$ and $i \in [K]$, we have 
$$
\Pr [ \bar{\cF},m_{\tau-1}=n ] \leq  \frac{1}{2 n^3} .
$$

Conditioning on $\cF$, we have
\begin{align*}
& \sum_{i=1}^{K} N_{i,\tau -1} \mv_i - \frac{2}{\tau -1} \sum_{i=1}^{K} \sum_{\begin{subarray}{l} j \neq i \\ j \neq 0 \end{subarray}} N_{i,\tau -1} N_{j,\tau -1} \Gamma_{i,j}^2  
\\ 
\leq & \sum_{i=1}^{K} N_{i,\tau -1} \left( \widehat{\mv}_i + 2(5+\rho)\sqrt{\frac{\log(12Km_{\tau-1}^4)}{2 N_{i,\tau -1}}} \right) \\ & 
- \Bigg( \frac{2}{\tau -1} \sum_{i=1}^{K} \sum_{\begin{subarray}{l} j \neq i \\ j \neq 0 \end{subarray}} N_{i,\tau -1} N_{j,\tau -1} \Gamma_{i,j}^2  \\&+ \frac{2\sqrt{2}}{\tau-1} \sum_{i=1}^{K} \sum_{\begin{subarray}{l} j \neq i \\ j \neq 0 \end{subarray}} N_{j,\tau -1}\log(12Km_{\tau-1}^4)  \\& + \frac{2\sqrt{2}}{\tau-1} \sum_{i=1}^{K} \sum_{\begin{subarray}{l} j \neq i \\ j \neq 0 \end{subarray}} N_{i,\tau -1}\log(12Km_{\tau-1}^4) \Bigg) \\& 
+ \frac{2\sqrt{2}}{\tau-1} \sum_{i=1}^{K} \sum_{\begin{subarray}{l} j \neq i \\ j \neq 0 \end{subarray}} N_{j,\tau -1}\log(12Km_{\tau-1}^4) \\& + \frac{2\sqrt{2}}{\tau-1} \sum_{i=1}^{K} \sum_{\begin{subarray}{l} j \neq i \\ j \neq 0 \end{subarray}} N_{i,\tau -1}\log(12Km_{\tau-1}^4)
\\ 
\leq & \sum_{i=1}^{K} N_{i,\tau -1}  \widehat{\mv}_i + (5+\rho)\sum_{i=1}^{K} \sqrt{2 N_{i,\tau -1} \log(12Km_{\tau-1}^4)}  \\ & 
- \frac{1}{\tau -1} \sum_{i=1}^{K} \sum_{\begin{subarray}{l} j \neq i \\ j \neq 0 \end{subarray}} N_{i,\tau -1} N_{j,\tau -1} \cdot \\& \left( |\Gamma_{i,j}|  + \sqrt{\frac{\log(12Km_{\tau-1}^4)}{2 N_{i,\tau -1}}}+ \sqrt{\frac{\log(12Km_{\tau-1}^4)}{2 N_{j,\tau -1}}}\right)^2 \\& + 4\sqrt{2}K\log(12Km_{\tau-1}^4) 
\\ 
\leq & \sum_{i=1}^{K} N_{i,\tau -1}  \widehat{\mv}_i - \frac{1}{\tau -1} \sum_{i=1}^{K} \sum_{\begin{subarray}{l} j \neq i \\ j \neq 0 \end{subarray}} N_{i,\tau -1} N_{j,\tau -1} \hat{\Gamma}_{i,j}^2 \\& + (5+\rho) \sqrt{2 K m_{\tau-1}   \log(12Km_{\tau-1}^4)}    \\& + 4\sqrt{2}K\log(12Km_{\tau-1}^4)
\end{align*}

Let $L=2$, ${\mv}_* \leq \rho$, $\Delta^\mv_{\textup{max}}\leq \frac{1}{4}+\rho$ and $\textup{GAP}_{\textup{max}} \leq \frac{5}{4}+\rho $. We set the confidence parameter $\delta_t=1/(12Kt^3)$ in the \textsf{MV-UCB}~\cite{sani2012risk} algorithm and use Jensen's inequality, and then we have
\begin{align*}
	& \mathbb{E}[m_{\tau-1}\tilde{\cR}_{m_{\tau-1}}(\textsf{MV-UCB})] \\ \leq & \ex \Bigg[ 12(5+\rho)^2 (H^\mv_1+4H^\mv_2) \ln(6 K m_{\tau-1} )  \\& + 288 (5+\rho)^4 H^\mv_3 \frac{ \ln^2(6 K m_{\tau-1} )}{m_{\tau-1}} + 9K + K \Delta^\mv_{\textup{max}} \Bigg] 
	\\ \leq & 12(5+\rho)^2 (H^\mv_1+4H^\mv_2) \ln(6 K \mathbb{E}[m_{\tau-1}] ) \\& + 288 (5+\rho)^4 H^\mv_3 \frac{ \ln^2(6 K \mathbb{E}[m_{\tau-1}] )}{\mathbb{E}[m_{\tau-1}]} + 9K + K \Delta^\mv_{\textup{max}} .
\end{align*}

At timestep $\tau$, we have
\begin{align*}
(\tau -1)\widehat{\mv}_{\tau-1}(\cA)-L & < (1-\alpha) {\mv}_0 \tau.
\end{align*}
Thus,
\begin{align*}
&\sum_{i=0}^{K} N_{i,\tau -1} \widehat{\mv}_i  - \frac{1}{\tau -1}  \sum_{i=0}^{K} \sum_{j \neq i} N_{i,\tau -1} N_{j,\tau -1} \hat{\Gamma}_{i,j}^2 -L \\ & < (1-\alpha) {\mv}_0 (m_{\tau-1} + N_0(\tau -1) +1)
\end{align*}
Rearranging the terms, we have
\begin{align*}
& \alpha {\mv}_0 N_0(\tau -1) 
\\
\leq &  (1-\alpha) {\mv}_0 (m_{\tau-1} + 1) \\& - \left(\! \sum_{i=1}^{K} N_{i,\tau -1} \widehat{\mv}_i - \frac{1}{\tau -1} \sum_{i=1}^{K} \sum_{\begin{subarray}{l} j \neq i \\ j \neq 0 \end{subarray}} N_{i,\tau -1} N_{j,\tau -1} \hat{\Gamma}_{i,j}^2 \!\right) \\ & + \frac{2}{\tau -1} N_0(\tau -1) \sum_{i=1}^{K} N_{i,\tau -1} \hat{\Gamma}_{0,i}^2  + L
\\ 
\leq &  (1-\alpha) {\mv}_0 (m_{\tau-1} + 1) \\& - \left(\! \sum_{i=1}^{K} N_{i,\tau -1} \widehat{\mv}_i - \frac{1}{\tau -1} \sum_{i=1}^{K} \sum_{\begin{subarray}{l} j \neq i \\ j \neq 0 \end{subarray}} N_{i,\tau -1} N_{j,\tau -1} \hat{\Gamma}_{i,j}^2 \!\right) \\ & - m_{\tau-1}\tilde{\cR}_{m_{\tau-1}}(\textsf{MV-UCB}) + m_{\tau-1}\tilde{\cR}_{m_{\tau-1}}(\textsf{MV-UCB}) \\& + 2 N_0(\tau -1)  + L
\\ 
\leq &  (1-\alpha) {\mv}_0 (m_{\tau-1} + 1) \\& - \left(\! \sum_{i=1}^{K} N_{i,\tau -1} \widehat{\mv}_i - \frac{1}{\tau -1} \sum_{i=1}^{K} \sum_{\begin{subarray}{l} j \neq i \\ j \neq 0 \end{subarray}} N_{i,\tau -1} N_{j,\tau -1} \hat{\Gamma}_{i,j}^2 \!\right) \\ & - {\mv}_* m_{\tau-1} + \sum_{i=1}^{K} N_{i,\tau -1} \mv_i \\& - \frac{2}{m_{\tau-1}} \sum_{i=1}^{K} \sum_{\begin{subarray}{l} j \neq i \\ j \neq 0 \end{subarray}} N_{i,\tau -1} N_{j,\tau -1} \Gamma_{i,j}^2 \\& + m_{\tau-1}\tilde{\cR}_{m_{\tau-1}}(\textsf{MV-UCB}) + 2 N_0(\tau -1)  + L
\\ 
\leq & - ({\mv}_*-(1-\alpha) {\mv}_0  ) (m_{\tau-1} + 1) \\& - \left(\! \sum_{i=1}^{K} N_{i,\tau -1} \widehat{\mv}_i - \frac{1}{\tau -1} \sum_{i=1}^{K} \sum_{\begin{subarray}{l} j \neq i \\ j \neq 0 \end{subarray}} N_{i,\tau -1} N_{j,\tau -1} \hat{\Gamma}_{i,j}^2 \!\right) \\ &  + \left(\! \sum_{i=1}^{K} N_{i,\tau -1} \mv_i - \frac{2}{\tau -1} \sum_{i=1}^{K} \sum_{\begin{subarray}{l} j \neq i \\ j \neq 0 \end{subarray}} N_{i,\tau -1} N_{j,\tau -1} \Gamma_{i,j}^2  \!\right) \\& + m_{\tau-1}\tilde{\cR}_{m_{\tau-1}}(\textsf{MV-UCB}) + 2 N_0(\tau -1)   +L + {\mv}_*
\\ 
\leq & - (\Delta^\mv_0 + \alpha {\mv}_0 ) (m_{\tau-1} + 1) \\& + \left(\! \sum_{i=1}^{K} N_{i,\tau -1} \mv_i - \frac{2}{\tau -1} \sum_{i=1}^{K} \sum_{\begin{subarray}{l} j \neq i \\ j \neq 0 \end{subarray}} N_{i,\tau -1} N_{j,\tau -1} \Gamma_{i,j}^2 \!\right) \\ &  - \left(\! \sum_{i=1}^{K} N_{i,\tau -1} \widehat{\mv}_i - \frac{1}{\tau -1} \sum_{i=1}^{K} \sum_{\begin{subarray}{l} j \neq i \\ j \neq 0 \end{subarray}} N_{i,\tau -1} N_{j,\tau -1} \hat{\Gamma}_{i,j}^2 \!\right) \\& + m_{\tau-1}\tilde{\cR}_{m_{\tau-1}}(\textsf{MV-UCB}) + 2 N_0(\tau -1)   +L + {\mv}_*
\end{align*}
Taking expectation of both sides, we have
\begin{align*}
& (\alpha {\mv}_0-2) \mathbb{E}[N_0(\tau -1)] 
\\
\leq & - (\Delta^\mv_0 + \alpha {\mv}_0 ) \mathbb{E}[m_{\tau-1} + 1]  \\& + \mathbb{E}\Bigg[\Bigg( \sum_{i=1}^{K} N_{i,\tau -1} \mv_i \\&- \frac{2}{\tau -1} \sum_{i=1}^{K} \sum_{\begin{subarray}{l} j \neq i \\ j \neq 0 \end{subarray}} N_{i,\tau -1} N_{j,\tau -1} \Gamma_{i,j}^2 \Bigg) \\ & \!\!-\!\! \Bigg(\sum_{i=1}^{K} N_{i,\tau -1} \widehat{\mv}_i \\&- \frac{1}{\tau -1} \sum_{i=1}^{K} \sum_{\begin{subarray}{l} j \neq i \\ j \neq 0 \end{subarray}} N_{i,\tau -1} N_{j,\tau -1} \hat{\Gamma}_{i,j}^2 \Bigg)   | \cF \Bigg] \!\! \Pr[\cF] \\ & +  \textup{GAP}_{\textup{max}}\sum_{n=1}^{\infty} \! n \!\cdot\! \frac{1}{2n^3} + \mathbb{E}[m_{\tau-1}\tilde{\cR}_{m_{\tau-1}}(\textsf{MV-UCB})] \\& +L + {\mv}_*
\\
\leq & - (\Delta^\mv_0 + \alpha {\mv}_0 ) \mathbb{E}[m_{\tau-1} + 1]  \\ & + \mathbb{E}[(5+\rho)\sqrt{ 2K m_{\tau-1} \ln(12Km_{\tau-1}^4) } \\&+ 4 \sqrt{2} K \ln (12Km_{\tau-1}^4) | \cF ] \\ & +  \textup{GAP}_{\textup{max}} + \mathbb{E}[m_{\tau-1}\tilde{\cR}_{m_{\tau-1}}(\textsf{MV-UCB})] \\& +L + {\mv}_*
\\
< & - (\Delta^\mv_0 + \alpha {\mv}_0 ) \mathbb{E}[m_{\tau-1} + 1] \\ & + 4(5+\rho) \cdot\\& \sqrt{ 2K \mathbb{E}[m_{\tau-1} + 1] \ln(6 K ( \mathbb{E}[N_0(\tau -1)]  \mathbb{E}[m_{\tau-1} + 1])  ) } \\ & + 16 \sqrt{2} K \ln (6 K  ( \mathbb{E}[N_0(\tau -1)]  \mathbb{E}[m_{\tau-1} + 1]) ) \\ &  + 12(5+\rho)^2 (H^\mv_1+4H^\mv_2) \ln(6 K \mathbb{E}[m_{\tau-1}] ) \\& + 288 (5+\rho)^4 H^\mv_3 \frac{ \ln^2(6 K \mathbb{E}[m_{\tau-1}] )}{\mathbb{E}[m_{\tau-1}]} \\ & + 9K + K \Delta^\mv_{\textup{max}}  +  \textup{GAP}_{\textup{max}} +L + {\mv}_*
\\
< & - (\Delta^\mv_0 + \alpha {\mv}_0 ) \mathbb{E}[m_{\tau-1} + 1] \\ & + 40K(5+\rho)  \cdot\\& \sqrt{ \mathbb{E}[m_{\tau-1} + 1] } \! \ln(6 K  \mathbb{E}[N_0(\tau -1)]  \mathbb{E}[m_{\tau-1} + 1]  )  \\ &   \!+\!  144K(5 \!+\! \rho)^2 (H^\mv_1 \!+\! 4H^\mv_2)  \ln(6 K \mathbb{E}[m_{\tau-1}] ) \\& + 864(5+\rho)^4 K H^\mv_3 + (13 + 3\rho)K 
\end{align*}
Let $m=\mathbb{E}[m_{\tau-1} +1] \geq 2$, $c_1=40K(5+\rho)$, $c_2=6 K  \mathbb{E}[N_0(\tau -1)] $, $c_3=\Delta^\mv_0 + \alpha {\mv}_0 \in (2,  \rho)$, $c_4=144K(5+\rho)^2 (H^\mv_1+4H^\mv_2)$ where $\rho>\frac{2}{\alpha \mu_0}>2$, $c_4>3c_1$, $c_4>12c_3$. The RHS of the above inequality can be written as a constant term plus
$$
g_2(m)=-c_3m+c_1\sqrt{m}\ln(c_2 m)+c_4\ln(m).
$$
According to Lemma~\ref{lemma:g_m_ub_mv}, we have
$$
g_2(m) \leq \frac{48 c_1 c_4 }{c_3}   \left [ \ln( \frac{3 \sqrt{c_2} c_4}{c_3}) \right]^2 .
$$
Then, we have
\begin{align*}
&(\alpha {\mv}_0-2) \mathbb{E}[N_0(\tau -1)] \\ \leq & \frac{48 \!\cdot\! 40K(5 \!+\! \rho) \cdot 144K(5 \!+\! \rho)^2 (H^\mv_1 \!+\! 4H^\mv_2)  }{\Delta^\mv_0 + \alpha {\mv}_0}  \cdot \\ &  \Bigg [ \ln \bigg( 3 \sqrt{ 6 K \mathbb{E}[N_0(\tau -1)]  } \cdot\\& 144K(5+\rho)^2 (H^\mv_1+4H^\mv_2) \bigg) \\&- \ln( \Delta^\mv_0 + \alpha {\mv}_0 )  \Bigg]^2  + 864(5+\rho)^4 K H^\mv_3 \\ &+ (13 + 3\rho)K 
\\
 \leq & \Bigg(\! \frac{48 \cdot 40 \cdot 144 (5+\rho)^3 K^2 (H^\mv_1 \!+\! 4H^\mv_2)   }{\Delta^\mv_0 + \alpha {\mv}_0}  \\& + 864(5+\rho)^4 K H^\mv_3 + (13 + 3\rho)K  \!\Bigg) \cdot   \Bigg[ \ln \bigg( 3 \sqrt{ 6 K } \cdot\\& 144K(5+\rho)^2 (H^\mv_1 \!+\! 4H^\mv_2) \!\sqrt{ \mathbb{E}[N_0(\tau -1)] }\bigg) \\&- \ln( \Delta^\mv_0 + \alpha {\mv}_0 )  \Bigg]^2 .
\end{align*}
Thus, we have 
\begin{align*}
& \mathbb{E}[N_0(\tau-1)] \\ \leq & \Bigg(\frac{48 \cdot 40 \cdot 144 (5+\rho)^3 K^2 (H^\mv_1 \!+\! 4H^\mv_2)   }{ (\alpha {\mv}_0-2) (\Delta^\mv_0 + \alpha {\mv}_0) } \\& + \frac{ 864(5+\rho)^4 K H^\mv_3 + (13 + 3\rho)K }{\alpha {\mv}_0-2} \Bigg) \cdot \\ &  \Bigg[ \ln \bigg( 3 \sqrt{ 6 K } \cdot 144K(5+\rho)^2 (H^\mv_1 \!+\! 4H^\mv_2) \cdot\\& \!\!\sqrt{ \mathbb{E}[N_0(\tau -1)] }\bigg) - \ln( \Delta^\mv_0 + \alpha {\mv}_0 )  \Bigg]^2 ,
\\
& \sqrt{\mathbb{E}[N_0(\tau-1)]} \\ \leq & \Bigg(\frac{48 \cdot 40 \cdot 144 (5+\rho)^3 K^2 (H^\mv_1 \!+\! 4H^\mv_2)   }{ (\alpha {\mv}_0-2) (\Delta^\mv_0 + \alpha {\mv}_0) } \\& + \frac{ 864(5+\rho)^4 K H^\mv_3 + (13 + 3\rho)K }{\alpha {\mv}_0-2} \Bigg)^{\frac{1}{2}}  \cdot \\ & \Bigg[ \ln \bigg( 3 \sqrt{ 6 K } \cdot 144K(5+\rho)^2 (H^\mv_1 \!+\! 4H^\mv_2) \cdot\\& \sqrt{ \mathbb{E}[N_0(\tau -1)] }\bigg) - \ln( \Delta^\mv_0 + \alpha {\mv}_0 )  \Bigg] .
\end{align*}
According to Fact~\ref{fact:clucb_lemma10} with \\ $z=\sqrt{\mathbb{E}[N_0(\tau-1)]}$, \\ $c_1=( \frac{48 \cdot 40 \cdot 144 (5+\rho)^3 K^2 (H^\mv_1+4H^\mv_2)   }{ (\alpha {\mv}_0-2) (\Delta^\mv_0 + \alpha {\mv}_0) } \\ \hspace*{2em} + \frac{ 864(5+\rho)^4 K H^\mv_3 + (13 + 3\rho)K }{\alpha {\mv}_0-2} )^{\frac{1}{2}}$ and \\ $c_2=\frac{3 \sqrt{ 6 K } \cdot 144K(5+\rho)^2 (H^\mv_1+4H^\mv_2)  }{ \Delta^\mv_0 + \alpha {\mv}_0 }$, 
we have
\begin{align*}
&\sqrt{\mathbb{E}[N_0(\tau-1)]} \\ \leq & 2 \Bigg(\frac{48 \cdot 40 \cdot 144 (5+\rho)^3 K^2 (H^\mv_1 \!+\! 4H^\mv_2)   }{ (\alpha {\mv}_0-2) (\Delta^\mv_0 + \alpha {\mv}_0) } \\& + \frac{ 864(5+\rho)^4 K H^\mv_3 + (13 + 3\rho)K }{\alpha {\mv}_0-2} \Bigg)^{\frac{1}{2}} \cdot \\ & \!\!\!\!\ln \! \Bigg( \!\! \Bigg( \!\!\frac{48 \cdot 40 \cdot 144 (5+\rho)^3 K^2 (H^\mv_1 \!+\! 4H^\mv_2)   }{ (\alpha {\mv}_0-2) (\Delta^\mv_0 + \alpha {\mv}_0) } \\& + \frac{ 864(5+\rho)^4 K H^\mv_3 + (13 + 3\rho)K }{\alpha {\mv}_0-2} \Bigg)^{\frac{1}{2}} \cdot \\ & \frac{3 \sqrt{ 6 K } \cdot 144K(5+\rho)^2 (H^\mv_1+4H^\mv_2)  }{ \Delta^\mv_0 + \alpha {\mv}_0 }  \Bigg) .
\end{align*}
Thus, 
\begin{align*}
& \mathbb{E}[N_0(\tau)] \\ = & \mathbb{E}[N_0(\tau-1)] + 1 
\\
= & O \! \Bigg( \!  \frac{  \rho^3 K^2 (H^\mv_1 \!\!+\! 4H^\mv_2) \!+\! ( \rho^4 K H^\mv_3 \!\!+\! \rho K)\tilde{\Delta}^\mv_0  }{ (\alpha {\mv}_0-2) \tilde{\Delta}^\mv_0 }   \cdot 
\\ & \Bigg[ \ln \Bigg( \frac{  \rho^3 K^2 (H^\mv_1+4H^\mv_2) + ( \rho^4 K H^\mv_3 +\rho K)\tilde{\Delta}^\mv_0  }{ (\alpha {\mv}_0-2) \tilde{\Delta}^\mv_0 } \cdot \\& \frac{ \rho^2 K\sqrt{K} (H^\mv_1+4H^\mv_2)  }{ \tilde{\Delta}^\mv_0 }  \Bigg) \Bigg]^2  \Bigg) ,
\end{align*}
where $\tilde{\Delta}^\mv_0=\Delta^\mv_0 + \alpha {\mv}_0$.

Theorem~\ref{thm:con_mv_ucb} follows from $ \ex[\cR_T(\textsf{MV-CUCB})] \leq  \ex[\cR_T(\textsf{MV-UCB})] + \frac{\mathbb{E}[N_0(T)]}{T} \Delta^\mv_0 $.
\end{proof}

\end{document}